\DeclareMathOperator*{\argmin}{\arg\!\min}
\DeclareMathOperator{\E}{\mathbb{E}}
\DeclareMathOperator{\R}{\mathbb{R}}
\DeclareMathOperator{\Prob}{\mathbb{P}}
\DeclareMathOperator{\toe}{\xrightarrow[n \to \infty]{e}}
\DeclareMathOperator{\exc}{\mathrm{exc}}
\DeclareMathOperator{\epi}{\mathrm{epi}}
\DeclareMathOperator{\dom}{\mathrm{dom}}
\DeclareMathOperator{\inti}{\mathrm{int}}
\title{Data-Driven Priors in the Maximum Entropy on the Mean Method for Linear Inverse Problems.}
\author{Matthew King-Roskamp, Rustum Choksi, \and Tim Hoheisel}
\date{\today}
\begin{document}

\maketitle

\begin{abstract}
We establish the theoretical framework for implementing the maximumn entropy on the mean (MEM) method for linear inverse problems in the setting of approximate (data-driven) priors. We prove a.s. convergence for empirical means and further develop general estimates for the difference between the MEM solutions with different priors $\mu$ and $\nu$ based upon the epigraphical distance between their respective log-moment generating functions. These estimates allow us to establish a rate of convergence in expectation for empirical means. We illustrate our results with denoising on MNIST and Fashion-MNIST data sets.
\end{abstract}

\section{Introduction}

 Linear inverse problems are pervasive in data science. A canonical example (and our motivation here) is denoising and deblurring in image processing. 
 Machine learning algorithms, particularly neural networks trained on large data sets, have proven to be a game changer in solving these problems. 
 However, most machine learning algorithms suffer from the lack of a foundational framework upon which to rigorously assess their performance. Thus there is a need for mathematical models which are on one end, data driven, and on the other end, open to rigorous evaluation. In this article, we address one such model: the {\it Maximum Entropy on the Mean} (MEM).  In addition to providing the theoretical framework, we provide several numerical examples for denoising images from {\it MNIST} \cite{deng2012mnist} and {\it Fashion-MNIST} \cite{xiao2017fashion} data sets.

Emerging from ideas of E.T. Jaynes in 1957
\cite{jaynes1957information1,jaynes1957information2}, 
various forms and interpretations of MEM have appeared in the literature and found applications in different disciplines (see  \cite{le1999new, vaisbourd2022maximum} and the references therein). The MEM method has recently been demonstrated to be a powerful tool for 
 the blind deblurring of images possessing some form of symbology (e.g., QR barcodes)  \cite{8758192,rioux2020maximum}. 

Let us briefly summarize the MEM method for linear inverse problems, with full details provided in the next section. Our canonical inverse problem takes the following form 
\begin{equation}\label{lin-inverse-p}
    b = C\overline{x} + \eta. 
\end{equation}
The unknown solution $\overline{x}$  is a vector in $\R^{d}$; the  observed data is $b \in \R^{m}$;  $C \in \R^{m \times d}$,  and  $\eta \sim \mathcal{Z}$ is an random noise vector in $\R^{m}$ drawn from noise distribution $\mathcal{Z}$. In the setting of image processing,  $\overline{x}$ denotes the the ground truth image with $d$ pixels, $C$ is a blurring matrix with typically $d = m$, and the observed noisy (and blurred image) is $b$. 
For known $C$, we seek to recover the ground truth $\overline{x}$ from $b$. In certain classes of images, the case where $C$ is also unknown (blind deblurring) can also be solved with the MEM framework (cf. \cite{8758192,rioux2020maximum}) but we will not focus on this here. In fact, our numerical experiments will later focus purely on denoising, i.e., $C = I$. 
 The power of MEM is to exploit the fact that there exists a prior distribution $\mu$ for the space of admissible ground truths. 
 The basis of the method is the {\it MEM function} 
 $\kappa_{\mu} :\R^{d} \to \R \cup \{ + \infty\}$  defined as 
\begin{equation*}
    \kappa_{\mu}(x) := \inf \left\{ \mathrm{KL}(Q \Vert \mu) \: : \:  Q \in \mathcal{P}(\mathcal{X}), \E_{Q} = x \right\},
\end{equation*}
 where $\mathrm{KL}(Q \Vert \mu)$  denotes the Kullback-Leibler (KL) divergence between the probability distributions $\mu$ and $\nu$ (see \Cref{sec:MEMProblem} for the definition). 
With $\kappa_{\mu}$ in hand, our proposed solution to 
\cref{lin-inverse-p} is 
\begin{equation}\label{MEM-sol}
    \overline{x}_{\mu} = \argmin_{x \in \R^{d}}  \left\{ \alpha  g_b(Cx) \, +\,   \kappa_{\mu}(x) \right\}, 
\end{equation}
where $g_{b}$ is any (closed, proper) convex function that measures  {\it fidelity} of $Cx$ to $b$. The function $g_{b}$ depends on $b$ and can in principle be adapted to the noise distribution $\mathcal Z$. For example, as was highlighted in \cite{vaisbourd2022maximum}, one can take the {\it MEM estimator} (an alternative to the well-known {\it maximum likelihood estimator}) based upon a family of distributions (for instance, if the noise is Gaussian, then the MEM estimator is the familiar $g_b (\cdot) = \frac{1}{2}\Vert  (\cdot)  - b\Vert_{2}^{2}$). Finally $\alpha >0$ is a fidelity parameter.  

The variational problem \cref{MEM-sol} is solved via its Fenchel dual. As we explain in \Cref{sec:MEMProblem}, we exploit the well-known connection in the large deviations literature that, under appropriate assumptions,  the MEM function  $\kappa_{\mu}$ is simply the Cram\'er rate function defined as the Fenchel conjugate of the log-moment generating function (LMGF)
\begin{equation*}
    L_{\mu}(y): = \log \int_{\mathcal{X}} \exp\langle y, \cdot \rangle d\mu.
\end{equation*}
Under certain assumptions on $g_b$ (cf. \Cref{sec:MEMProblem}) we obtain strong duality
\begin{equation}\label{dual-primal}
   \min_{x \in \R^{d}}  \alpha g_b(Cx) +  \kappa_{\mu}(x) = - \min_{z \in \R^{m}} \alpha g^{*}(-z/\alpha) + L_{\mu}(C^{T}z),
\end{equation}
and, more importantly,  a primal-dual recovery is readily available: If 
$\overline z_{\mu}$ is a solution to the dual problem (the argmin of the right-hand-side of \cref{dual-primal}) 
 then 
\[    \overline{x}_{\mu} := \nabla L_{\mu}(C^{T}\overline{z})
\] 
is the unique solution of the primal problem. This is the MEM method in a nutshell. 

In this article, we address the following question: Given an approximating sequence $\mu_n \to \mu$ (for example, one generated by a sample empirical mean of $\mu$), does the approximate MEM solution $\overline x_{\mu_n}$ converge to the solution $\overline x_\mu$, and if so, at which rate? A key feature of the MEM approach is that one does not have to learn the full distribution $\mu$ from samples, but rather only approximate the LMGF $L_{\mu}$. Hence, our analysis is based on the {\it closeness} of $L_{\mu_n}$ to $L_{\mu}$ resulting in the {\it closeness} of the dual solutions $\overline z_n$ and in turn the primal solutions $\overline x_{\mu_n}$. Here,  we leverage the fundamental work of Wets et al.  on {\it epigraphical distances, epigraphical convergence, and epi-consistency} (\cite{rockafellar2009variational},\cite{royset2022optimization},\cite{king1991epi}). 
 
 Our results are presented in four sections.  In \Cref{sec:epi-convergence}, we work with a general $g_b$ satisfying standard assumptions. 
 We consider the simplest way of approximating $\mu$ via  empirical means of $n$ i.i.d. samples from $\mu$.  
 In \cref{Thm:convergence_of_primal}, we prove that  the associated MEM solutions  $\overline{x}_{\mu_n}$ converge almost surely to the solution $\overline{x}_{\mu}$ with full prior. In fact, we prove a slightly stronger result pertaining to $\varepsilon_n$-solutions as $\varepsilon_n\downarrow 0$. This result opens the door to two natural questions: (i) 
 At which rate do the solutions converge? 
(ii) Empirical means is perhaps the simplest way of approximating $\mu$ and will inevitably yield a rate dictated by the law of large numbers. 
Given that the MEM method rests entirely on the LMGF of the prior, it is natural to ask how the rate depends on an approximation to the LMGF. So, 
if we used a different way of approximating $\mu$, what would the rate look like? 
We address these questions for the case $g_b  = \frac{1}{2}\Vert  (\cdot)  - b\Vert_{2}^{2}$. In \Cref{sec:rates} we provide insight into the second question first via a deterministic estimate which controls the difference in the respective solutions associated with two priors $\nu$ and $\mu$ based upon the epigraphical distance between their respective LMGFs. We again prove a general result for $\varepsilon$-solutions associated with prior $\mu$ (cf.
\cref{thm:epsdeltaprimalbound_full}). In \Cref{sec:rates_n_empirical}, we apply this bound to the particular case of the empirical means approximation, proving a $\frac{1}{n^{1/4}}$ convergence rate (cf. \Cref{thm:final_rate_n}) in expectation. 

Finally, in \Cref{sec:numerics}, we present several numerical experiments for denoising based upon a finite MNIST data set.
These serve not to compete with any of the state of the art machine learning-based denoising algorithm, but rather to highlight the effectiveness of our data-driven mathematical model which is fully supported by theory.  

\begin{remark}[Working at the higher level of the probability distribution of the solution] \label{remark:measure_valued}
{\rm As in \cite{8758192,rioux2020maximum}, an equivalent formulation of the MEM problem is to work not at the level of the $x$, but rather at the level of the probability distribution of the ground truth, i.e., we seek to solve
 \[    \overline{Q} \, = \, { \argmin}_{Q \in \mathcal{P}(\mathcal{X})} \, \,  \left\{  \alpha g_b(C \mathbb{E}_Q) \, + \, \mathrm{KL}(Q \Vert \mu) \right\},   \]
 where one can recover the previous image-level solution as $\overline x_\mu = \mathbb{E}_{\overline Q}$.
 As shown in \cite{rioux2020maximum}, under appropriate assumptions this reformulated problem has exactly the same dual formulation as in the right-hand-side of \cref{dual-primal}. Because of this one has full access to the entire probability distribution of the solution, not just its expectation. This proves  useful in our MNIST experiments where the optimal $\nu$ is simply a weighted sum of images uniformly sampled from the MNIST set. 
 For example, one can do thresholding (or masking) at the level of the optimal $\nu$ (cf. the examples in \Cref{sec:numerics}). 
}
\end{remark}

\noindent
{\it Notation:} $\overline{\R} := \R \cup \{\pm \infty \}$ is the extended real line. The standard inner product on $\R^n$ is   $\langle \cdot, \cdot \rangle$ and $\|\cdot\|$ is the Euclidean norm. For  $C \in \R^{m \times d}$, $\Vert C \Vert = \sqrt{\lambda_{\max}(C^{T}C)}$ is its spectral norm, and analagously $\sigma_{\min}(C) = \sqrt{\lambda_{\min}(C^{T}C)}$ is the smallest singular value of $C$.  The trace of $C$ is denoted $\text{Tr}(C)$. For smooth $f : \R^{d} \to \R$, we denote its gradient and Hessian by $\nabla f$ and $\nabla^{2} f$, respectively.

\section{Tools from convex analysis and the MEM method for solving the problem \cref{lin-inverse-p} }

\subsection{Convex analysis}
\label{sec:convexAnalysisPrelim}
 We present here  the tools from convex analysis essential to our study. We refer the reader to the standard texts by Bauschke and Combettes \cite{bauschke2019convex} or Chapters 2 and 11 of Rockafellar and Wets \cite{rockafellar2009variational} for further details. Let $f:\R^d \to\overline{\R}$.  The domain of $f$ is $\text{dom}(f):=\{ x \in \R^{d} \: \vert \: f(x) < + \infty \}$. We call  $f$ proper if $\dom(f)$ is nonempty and  $f(x) > - \infty$ for all $x$. We say that  $f$ is lower semicontinuous (lsc) if $f^{-1}([-\infty, a])$ is closed (possibly empty) for all $a \in \R$. We  define the (Fenchel) conjugate $f^{*} :\R^{d} \to \overline{\R}$ of $f$ as $f^{*}(x^{*}) := \sup_{x \in \R^{d}} \{ \langle x, x^{*} \rangle - f(x) \}.$ A proper $f$ is said to be convex, if 
 $f(\lambda x + (1-\lambda) y) \leq \lambda f(x) + (1-\lambda) f(y)$ for every $x,y \in \text{dom}(f)$ and all $\lambda \in (0,1)$.
If the former inequality is strict for all $x \neq y$, then $f$ is said to be strictly convex. Finally, if $f$ is proper and there is a $c>0$ such that
$f-\frac{c}{2}\|\cdot\|^2$ is convex
% for all $x,y \in \text{dom}(f)$
% \begin{equation*}
%     f(\lambda x +(1-\lambda) y) + \lambda (1-\lambda) \frac{c}{2} \Vert x-y \Vert^{2} \leq \lambda f(x) + (1-\lambda) f(y), 
% \end{equation*}
we say $f$ is $c$-strongly convex. 
In the case where $f$ is (continuously) differentiable on $\R^{d}$, then $f$ is $c$-strongly convex if and only if 
\begin{equation}
f(y) - f(x) \geq \nabla f(x)^{T}(y-x) + \frac{c}{2} \Vert y-x \Vert_{2}^{2}\quad\forall x,y\in \R^d. \label{eqn:alternate_strongconvexity}
\end{equation} 
The subdifferential  of $f :\R^{d} \to \overline \R$ at $\overline{x}$ is the 
    $\partial f(\overline{x}) = \{ x^{*} \in \R^{d} \: \vert \: \langle x-\overline{x},x^{*}\rangle \leq f(x) - f(\overline{x}), \: \forall x \in \R^{d} \}.$
A function $f : \R^{d} \to \overline{\R}$ is said to be level-bounded if for every $\alpha \in \R$, the set $f^{-1}([-\infty, \alpha])$ is bounded (possibly empty). $f$ is (level) coercive if it is bounded below on bounded sets and satisfies 
\begin{equation*}
    \liminf_{\Vert x \Vert \to \infty} \frac{f(x)}{\Vert x \Vert} > 0.
\end{equation*}
In the case $f$ is proper, lsc, and convex, level-boundedness is equivalent to level-coerciveness \cite[Corollary 3.27]{rockafellar2009variational}. $f$ is said to be supercoercive if $\liminf_{\Vert x \Vert \to \infty} \frac{f(x)}{\Vert x \Vert} =+\infty$. \\
A point $\overline{x}$ is said to be an $\varepsilon$-minimizer of $f$ if $f(\overline{x}) \leq \inf_{x \in \R^{d}} f(x) + \varepsilon$
for some $\varepsilon >0$. We denote the set of all such points as $S_{\varepsilon}(f)$. Correspondingly, the  solution set of $f$ is denoted as $\argmin(f) = S_{0}(f) =: S(f).$
% , i.e. the set of points achieving $\inf_{x} f(x).$ 

The epigraph of a function $f : \R^{d} \to \overline{\R}$ is the set $\text{epi}(f) := \{ (x,\alpha) \in \R^{d} \times \overline{\R}  \: \vert \: \alpha \geq f(x) \}$. A sequence of functions $f_{n} : \R^{d} \to \overline{\R} $ epigraphically converges (epi-converges)\footnote{This is one of many equivalent conditions that characterize epi-convergence, see e.g. \cite[Proposition 7.2]{rockafellar2009variational}.} to $f$, written $f_{n} \toe f$, if and only if
\[
(i)\: \forall z,  \forall z_{n} \to z: \: \liminf f_{n}(z_{n}) \geq f(z), \quad 
    (ii)\:   \forall z \;\exists z_{n} \to z:  \limsup f_{n}(z_{n})\leq f(z). 
\]

\subsection{Maximum Entropy on the Mean Problem} \label{sec:MEMProblem}

For basic concepts of measure and probability, we follow most closely the standard text of Billingsley \cite[Chapter 2]{billingsley2017probability}. Globally in this work, $\mu$ will be a Borel probability measure defined on compact $\mathcal{X} \subset \R^{d}$\footnote{Equivalently, we could work with a Borel measure $\mu$ on $\R^d$ with support contained in $\mathcal X$.}. Precisely, we work on the probability space $(\mathcal{X},\mathcal{B}_{\mathcal{X}}, \mu)$, where $\mathcal{X} \subset \R^{d}$ is compact and $\mathcal{B}_{\mathcal{X}} = \{ B \cap \mathcal{X} \: : \: B \in \mathbb B_d \}$ where $\mathbb B_d$ is the $\sigma$-algebra induced by the open sets in $\R^d$. 
We will denote the set of all probability measures on the measurable space $(\mathcal{X},\mathcal{B}_{\mathcal{X}})$ as $\mathcal{P}(\mathcal{X})$, and refer to elements of $\mathcal{P}(\mathcal{X})$ as probability measures on $\mathcal{X}$, with the implicit understanding that these are always Borel measures. For $Q,\mu\in \mathcal P(\mathcal X)$, we say $Q$ is absolutely continuous with respect to $\mu$ (and write $Q \ll \mu$) if for all $A \in \mathcal{B}_{\mathcal{X}}$ with $\mu(A) = 0$, then $Q(A) = 0$. \cite[p.~422]{billingsley2017probability}. For $Q \ll \mu$, the Radon-Nikodym derivative of $Q$ with respect to $\mu$ is defined as the (a.e.) unique function $\frac{dQ}{d\mu}$ with the property $Q(A) = \int_{A} \frac{dQ}{d\mu} d\mu$  for $A\in \mathcal B_{\mathcal X}$ \cite[Theorem 32.3]{billingsley2017probability}.

The Kullback-Leibler (KL) divergence \cite{kullback1951information} of $Q \in \mathcal{P}(\mathcal{X})$ with respect to $\mu \in \mathcal{P}(\mathcal{X})$ is defined as
\begin{equation}
    \text{KL}(Q\Vert \mu) := \begin{cases} \int_{\mathcal{X}} \log(\frac{dQ}{d\mu}) d \mu, & Q \ll \mu, \\
    + \infty, & \text{ otherwise.} \end{cases} \label{def-KL}
\end{equation}
For $\mu \in \mathcal{P}(\mathcal{X})$, the expected value $\E_{\mu} \in \R^{d}$ and moment generating function $M_{\mu}: \R^{d} \to \R$ function of $\mu$ are defined as \cite[Ch.21]{billingsley2017probability}
\begin{equation*}
    \E_{\mu} := \int_{\mathcal{X}}x d\mu(x) \in \R^{d},\qquad  M_{\mu}(y) := \int_{\mathcal{X}} \exp\langle y, \cdot \rangle d\mu,
\end{equation*}
respectively. The log-moment generating function  of $\mu$ is defined as
\begin{equation*}
    L_{\mu}(y):= \log M_{\mu}(y) = \log \int_{\mathcal{X}} \exp\langle y, \cdot \rangle d\mu .
\end{equation*}
As $\mathcal{X}$ is bounded, $M_{\mu}$ is finite-valued everywhere. By standard properties of moment generating functions (see e.g. \cite[Theorem 4.8]{severini2005elements}) it is then analytic everywhere, and in turn so is $L_{\mu}$.  

Given $\mu \in \mathcal{P}(\mathcal{X})$, the Maximum Entropy on the Mean (MEM) function \cite{vaisbourd2022maximum} $\kappa_{\mu} :\R^{d} \to \overline{\R}$ is 
\begin{equation*}
    \kappa_{\mu}(y) := \inf\{ \mathrm{KL}(Q \: \Vert \: \mu) : \E_{Q} = y , Q \in \mathcal{P}(\mathcal{X}) \} .
\end{equation*}
The functions $\kappa_{\mu}$ and $L_{\mu}$ are paired in duality in a way that is fundamental to this work. We will flesh out this connection, as well as give additional properties of $\kappa_{\mu}$ for our setting; a Borel probability measure $\mu$ on compact $\mathcal{X}$. A detailed discussion of this connection under more general assumptions is the subject of \cite{vaisbourd2022maximum}.

For any $\mu \in \mathcal{P}(\mathcal{X})$ we have a vacuous tail-decay condition of the following form: for any $\sigma >0$,
\begin{equation*}
\int_{\mathcal{X}} e^{\sigma \Vert x \Vert} d\mu(x) \leq \max_{x \in \mathcal{X} } \Vert x \Vert e^{\sigma \Vert x \Vert} < + \infty.
\end{equation*}
Consequently,  by \cite[Theorem 5.2 (iv)]{donsker1976asymptotic3} \footnote{
    A technical remark on the application of \cite[Theorem 5.2 (iv)]{donsker1976asymptotic3}, which applies only over Banach spaces. When applying this result, we identify our probability measure $\mu$ on compact $\mathcal{X} \subset \R^{d}$ with its extension $\hat{\mu}$ on $\R^{d}$ defined by $\hat{\mu}(A) = \mu(A \cap \mathcal{X})$ for any Borel set $A$. Hence, we may apply \cite[Theorem 5.2 (iv)]{donsker1976asymptotic3} to find that $\kappa_{\hat{\mu}} =L^{*}_{\hat{\mu}}$. As integration with respect with $\mu$ or its extension $\hat{\mu}$ are identical, see, e.g., \cite[Example 16.4]{billingsley2017probability}, it follows $L_{\mu} = L_{\hat{\mu}}$, and in turn - with some minor proof details omitted - $\kappa_{\hat{\mu}}= \kappa_{\mu}$.\hfill$\diamond$} we have that
\begin{equation*}
    \kappa_{\mu}(x)  = \sup_{y \in \R^{d}} \left[  \langle y,x \rangle - \log \int_{\mathcal{X}} e^{\langle y,x\rangle} d\mu(x) \right](= L_{\mu}^{*}(x)).
\end{equation*}
Note that the conjugate $L_\mu^*$ is known in the large deviations literature as the  (Cram{\'e}r) rate function.

In summary, with our standing assumption that $\mathcal{X}$ is compact, $\kappa_{\mu} = L^{*}_{\mu}$. This directly implies the following properties of $\kappa_{\mu}$: (i) As $L_{\mu}$ is proper, lsc, and convex, so is its conjugate $L^{*}_{\mu} = \kappa_{\mu}$. (ii) Reiterating that $L_{\mu}$ is proper, lsc, convex, we may assert $(L_{\mu}^{*})^{*}= L_{\mu}$ via Fenchel-Moreau (\cite[Theorem 5.23]{royset2022optimization}), and hence $\kappa_{\mu}^{*} = L_{\mu}$. (iii) As $\dom(L_{\mu}) = \R^{d}$ we have that $\kappa_{\mu}$ is supercoercive \cite[Theorem 11.8 (d)]{rockafellar2009variational}. (iv) Recalling that $L_{\mu}$ is everywhere differentiable, $\kappa_{\mu}$ is strictly convex on every convex subset of $\dom (\partial \kappa_{\mu})$, which is also referred to as essentially strictly convex \cite[p.~253]{rockafellar1997convex}.

With these preliminary notions, we can (re-)state the problem of interest in full detail. We work with images represented as vectors in $\R^{d}$, where $d$ is the number of pixels. Given observed image $b \in \R^{m}$ which may be blurred and noisy, and known matrix $C \in \R^{m \times d}$, we wish to recover the ground truth $\hat{x}$ from the linear inverse problem $ b = C\hat{x} + \eta,$ where $\eta \sim  \mathcal{Z}$ is an unknown noise vector in $\R^{m}$ drawn from noise distribution $\mathcal{Z}$. We remark that, in practice, it is usually the case that $m=d$ and $C$ is invertible, but this is not necessary from a theoretical perspective. We assume the ground truth $\hat{x}$ is the expectation of an underlying image distribution - a Borel probability measure - $\mu$ on compact set $\mathcal{X} \subset \R^{d}$. Our best guess of $\hat{x}$ is then obtained by solving
\begin{equation*}
    \overline{x}_{\mu} = \argmin_{x \in \R^{d}} \alpha  g(Cx) +  \kappa_{\mu}(x).\tag{P} 
\end{equation*}
where $g = g_{b}$ is a proper, lsc, convex function which may depend on $b$ and serves as a fidelity term, and $\alpha >0$ a parameter. For example, if $g = \frac{1}{2}\Vert b - (\cdot) \Vert_{2}^{2}$ one recovers the  so-called reformulated MEM problem, first seen in \cite{le1999new}. 
\begin{lemma} \label{lemma:soln_exist} For any lsc, proper, convex $g$, the primal problem (P) always has a solution.
\end{lemma}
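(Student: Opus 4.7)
The plan is to set $F(x) := \alpha g(Cx) + \kappa_\mu(x)$ and show that $F$ is lsc, convex, and supercoercive, so that any non-empty real-valued sublevel set is compact and a Weierstrass-style direct-method argument yields a minimizer. Only two cases arise: either $F \equiv +\infty$, in which case every $x \in \R^d$ trivially lies in $\argmin F$ and there is nothing to prove, or $F$ is proper, in which case the reasoning below applies.

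Lower semicontinuity and convexity of $F$ are immediate: $\kappa_\mu = L_\mu^*$ is lsc and convex as a Fenchel conjugate, and $g \circ C$ inherits these two properties from $g$ via pre-composition with the linear map $C$. For coercivity of $F$, I would invoke the standard fact that every proper, lsc, convex function admits a continuous affine minorant: since $g^*$ is proper, pick any $v^* \in \dom g^*$, and by Fenchel--Moreau (already cited in the preliminaries) we obtain $g(z) \geq \langle v^*, z\rangle - g^*(v^*)$ for all $z \in \R^m$. Substituting $z = Cx$ gives
\begin{equation*}
F(x) \;\geq\; \kappa_\mu(x) + \alpha \langle C^T v^*, x \rangle - \alpha\, g^*(v^*) \qquad \forall\, x \in \R^d.
\end{equation*}
Dividing by $\|x\|$ and letting $\|x\| \to \infty$, the linear and constant contributions stay bounded, while $\kappa_\mu(x)/\|x\| \to +\infty$ by supercoercivity of $\kappa_\mu$, which was already established in the preliminaries from $\dom L_\mu = \R^d$ together with \cite[Theorem 11.8(d)]{rockafellar2009variational}. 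Hence $F$ itself is supercoercive, and in particular level-bounded in the sense of the preliminaries.

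To finish in the proper case, fix any $\bar \alpha$ in the real-valued range of $F$. The sublevel set $\{x \in \R^d : F(x) \leq \bar\alpha\}$ is non-empty, closed (by lsc of $F$), and bounded (by level-boundedness), hence compact; an lsc extended-real function attains its infimum on a compact set, and this infimum is necessarily a global minimizer of $F$ on $\R^d$. The only potential obstacle is securing supercoercivity of the sum, but because supercoercivity of $\kappa_\mu$ is essentially handed to us by the compactness of $\mathcal X$ and the dual characterization $\kappa_\mu = L_\mu^*$, the remaining work reduces to the routine affine-minorant manipulation above, so no serious difficulty is expected.
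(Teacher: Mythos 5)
Your proof is correct and follows essentially the same route as the paper's: both show that the objective $g\circ C+\kappa_{\mu}$ is lsc, convex, and supercoercive (hence level-bounded) and then conclude existence, and you merely unpack the paper's citations to \cite[Exercise 3.29, Lemma 3.27]{rockafellar2009variational} (via the affine minorant of $g$ coming from a point of $\dom g^{*}$) and to \cite[Theorem 1.9]{rockafellar2009variational} (via a Weierstrass argument on a compact sublevel set) into explicit computations. The only quibble is your dismissal of the case $F\equiv+\infty$: under the standard convention $\argmin F=\emptyset$ there, so ``every $x$ is trivially a minimizer'' is not right --- but the paper's own proof likewise takes properness of the sum for granted (it is guaranteed only once \Cref{assum:domain} is imposed), so this does not separate the two arguments.
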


\begin{proof}  By the global assumption of compactness of $\mathcal{X}$, we have $\kappa_{\mu}$ is proper, lsc, convex and supercoercive, following the discussion above. As $g \circ C$ and $\kappa_{\mu}$ are convex, so is $g \circ C +\kappa_{\mu}$. Further as both $g \circ C$ and $\kappa_{\mu}$ are proper and lsc, and $\kappa_{\mu}$ is supercoercive, the summation $g \circ C +\kappa_{\mu}$ is supercoercive, \cite[Exercise 3.29, Lemma 3.27]{rockafellar2009variational}. A supercoercive function is, in particular, level-bounded, so by \cite[Theorem 1.9]{rockafellar2009variational} the solution set $\argmin( g \circ C +\kappa_{\mu}) $ is nonempty.
\end{proof}

We make one restriction on the choice of $g$, which will hold globally in this work:
\begin{assum}
     $0\in \inti(\dom(g) - C\dom(\kappa_{\mu}))$. \label{assum:domain}
\end{assum}
We remark that this property holds vacuously whenever $g$ is finite-valued, e.g., $g = \frac{1}{2}\Vert b - ( \cdot) \Vert^{2}_{2}$. 
%For more unusual choices of $g$, this condition needs to be verified.

Instead of solving (P) directly, we use a  dual approach. As $\kappa_\mu^*=L_\mu$ (by compactness of $\mathcal X$), the primal problem (P)
    % \begin{equation*}
    %       \min_{x \in \R^{d}}  \alpha g(Cx) +  \kappa_{\mu}(x)  ,
    % \end{equation*}
    has Fenchel dual  (e.g., \cite[Definition 15.19]{bauschke2019convex}) given by
    \begin{equation}
          ({\rm arg})\!\!\min_{z \in \R^{m}} \alpha g^{*}(-z/\alpha) + L_{\mu}(C^{T}z). \label{dual} \tag{D}
    \end{equation}
We will hereafter denote the dual objective  associated with $\mu \in \mathcal{P}(\mathcal{X})$ as
\begin{equation}
    \phi_{\mu}(z) := \alpha g^{*}(-z/\alpha) + L_{\mu}(C^{T}z).
\end{equation}
We record the following  result which highlights the significance of \Cref{assum:domain} to our study.

\begin{theorem} \label{thm:level-coercive} The following are equivalent:

\begin{center}
(i) \Cref{assum:domain} holds; \quad (ii) $\argmin \phi_\mu$ is nonempty and compact; \quad (iii) $\phi_\mu$ is level-coercive.
\end{center}

% \begin{itemize}
% \item[(i)] \Cref{assum:domain} holds;
% \item[(ii)] $\argmin \phi_\mu$ is nonempty and compact;
% \item[(iii)] $\phi_\mu$ is level-coercive.
% \end{itemize}
 \noindent
 In particular, under \Cref{assum:domain}, the primal problem (P) has a unique solution  given by 
\begin{equation}
    \overline{x}_{\mu} = \nabla L_{\mu}(C^{T}\overline{z}), \label{eqn:primal_dual_optimality}
\end{equation}
where $\overline{z} \in \argmin \phi_{\mu}$ is any solution of the dual problem (D).
\end{theorem}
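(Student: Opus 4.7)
The plan is to pivot the three-way equivalence on a horizon-function computation for $\phi_\mu$ and then deduce the primal formula from Fenchel-Rockafellar strong duality. Since $L_\mu$ is proper, lsc, convex and finite-valued on $\R^d$, the identity $L_\mu^\infty = \sigma_{\dom(\kappa_\mu)}$ applies (the general fact being that for proper lsc convex $f$, $f^\infty = \sigma_{\dom(f^*)}$), and by Fenchel-Moreau $(g^*)^\infty = \sigma_{\dom(g)}$. Combining these with the chain rule $(L_\mu \circ C^T)^\infty(z) = L_\mu^\infty(C^T z)$, the positive-homogeneity identity $\alpha(g^*)^\infty(-z/\alpha) = \sigma_{-\dom(g)}(z)$, and the additivity of horizon functions (available here because $L_\mu$ is finite-valued, so the two summands share overlapping domains), I would arrive at
\[
  \phi_\mu^\infty(z) \;=\; \sigma_{C\dom(\kappa_\mu)-\dom(g)}(z).
\]

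For (i)$\Leftrightarrow$(iii), I would use that a proper lsc convex function is level-coercive iff its horizon function is strictly positive on $\R^m\setminus\{0\}$, and that for a convex set $A$, strict positivity of $\sigma_A$ off the origin is equivalent to $0\in\inti(A)$ (the nontrivial direction being convex separation at a point $0\notin\inti(A)$). Applied to $A=C\dom(\kappa_\mu)-\dom(g)$ and noting $\inti(-A)=-\inti(A)$, this yields the equivalence with \Cref{assum:domain}. For (iii)$\Leftrightarrow$(ii), \cite[Corollary 3.27]{rockafellar2009variational} identifies level-coercivity with level-boundedness for proper lsc convex functions, and \cite[Theorem 1.9]{rockafellar2009variational} then provides a nonempty, bounded, hence compact $\argmin\phi_\mu$; conversely, compactness of $\argmin\phi_\mu$ forces the common recession cone of all nonempty sublevel sets of $\phi_\mu$ to be $\{0\}$, giving level-boundedness.

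Finally, \Cref{assum:domain} is exactly the Fenchel-Rockafellar qualification condition for strong duality between (P) and (D). Together with the primal existence from \Cref{lemma:soln_exist} and the dual existence from (ii), the zero duality gap reads, for any primal optimum $\overline x_\mu$ and any dual optimum $\overline z$,
\[
  \bigl[\kappa_\mu(\overline x_\mu)+L_\mu(C^T\overline z)-\langle \overline x_\mu,C^T\overline z\rangle\bigr]
  +\bigl[\alpha g(C\overline x_\mu)+\alpha g^*(-\overline z/\alpha)+\langle \overline x_\mu,C^T\overline z\rangle\bigr] \;=\; 0.
\]
Both brackets are nonnegative by Fenchel-Young, so each must vanish. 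Vanishing of the first is equivalent to $\overline x_\mu \in \partial L_\mu(C^T\overline z)=\{\nabla L_\mu(C^T\overline z)\}$, the singleton holding because $L_\mu$ is analytic and in particular differentiable. This identifies $\overline x_\mu$ from any dual optimum and yields uniqueness of the primal solution simultaneously. The main obstacle is the horizon-function computation, in particular justifying the scaling and summation rules in the right generality; once that is in place, the remainder reduces to standard Fenchel-Rockafellar machinery.
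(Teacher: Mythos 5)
Your argument is correct and is, in substance, the same route the paper takes, only written out in full: the paper disposes of the three-way equivalence and the recovery formula by citing Proposition 3.1.3 and Theorem 5.2.1 of Auslender--Teboulle, and your horizon-function computation $\phi_\mu^\infty=\sigma_{C\dom(\kappa_\mu)-\dom(g)}$ (with the characterizations of level-coercivity and of compact $\argmin$ via strict positivity of the asymptotic function off the origin) together with the Fenchel--Young splitting of the zero duality gap is precisely the content of those two citations. I see no gaps; the points you flag as needing care (additivity of horizon functions when the summands' domains overlap, and $\sigma_A>0$ off the origin iff $0\in\inti(A)$ for convex $A$) are both standard and hold here since $L_\mu\circ C^T$ is finite-valued.
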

\begin{proof} The equivalences follow from Proposition 3.1.3 and Theorem 5.2.1 in \cite{auslender2006interior}, respectively. The latter\footnote{Note that there is a sign error in equation (5.3) in the reference.} also yields the primal-dual recovery in \eqref{eqn:primal_dual_optimality} while using the differentiability of $L_\mu$.
\end{proof}

\subsection{Approximate and Empirical Priors, Random Functions, and Epi-consistency} \label{sec:approximation}

If one has access to the true underlying image distribution $\mu$, then the solution recipe is complete: solve (D) and and use the primal-dual recovery formula \cref{eqn:primal_dual_optimality} to find a solution to (P). But in practical situations, such as the imaging problems of interest here, it is unreasonable to assume full knowledge of $\mu$. Instead, one specifies a prior $\nu \in \mathcal{P}(\mathcal{X})$ with $\nu \approx \mu$, and solves the approximate dual problem
\begin{equation}
   \min_{z \in \R^{m}} \phi_{\nu}(z). \label{Dual_nu}
\end{equation}
Given $\varepsilon> 0$ and any $\varepsilon$-solution to \cref{Dual_nu}, i.e. given any $z_{\nu, \varepsilon} \in S_{\varepsilon} (\nu)$, we define
\begin{equation}
    \overline{x}_{\nu, \varepsilon} := \nabla L_{\nu}(C^{T}z_{\nu, \varepsilon}), \label{defn:x_nu}
\end{equation} 
with the hope, inspired by the recovery formula \cref{eqn:primal_dual_optimality}, that with a ``reasonable'' choice of $\nu \approx \mu$, and small $\varepsilon$, then also $\overline{x}_{\nu, \varepsilon} \approx \overline{x}_{\mu}$. The remainder of this work is dedicated to formalizing how well $\overline{x}_{\nu, \varepsilon}$ approximates $\overline{x}_{\mu}$ under various assumptions on $g$ and $\nu$.

A natural first approach is to construct $\nu$ from sample data. Let $(\Omega,\mathcal{F}, \Prob)$ be a probability space. We model image samples as i.i.d. $\mathcal{X}$-valued random variables $\{X_{1} , \ldots, X_{n}, \ldots \}$ with shared law $\mu := \Prob X_1^{-1}$. That is, each $X_{i} : \Omega \to \mathcal{X}$ is an $(\Omega, \mathcal{F}) \to (\mathcal{X}, \mathcal{B}_{\mathcal{X}})$ measurable function with the property that $\mu(B) = \Prob(\omega \in \Omega \: : \: X_1(\omega) \in B)$, for any $B \in \mathcal{B}_{\mathcal{X}}$. In particular, the law $\mu$ is by construction a Borel probability measure on $\mathcal{X}$. Intuitively, a random sample of $n$ images is a given sequence of realizations $\{ X_{1}(\omega), \ldots, X_{n}(\omega), \ldots \}$, from which we take only the first $n$ vectors. 
We then approximate $\mu$ via the empirical measure
\begin{equation*}
    \mu_{n}^{(\omega)} :=\frac{1}{n} \sum_{i=1}^{n} \delta_{X_{i}(\omega)}.
\end{equation*}
With this choice of $\nu = \mu_{n}^{(\omega)}$, we have the approximate dual problem
\begin{equation}
    \min_{z \in \R^{m}} \phi_{\mu_{n}^{(\omega)}}(z)  \quad {\rm with}\quad \phi_{\mu_{n}^{(\omega)}}(z)=\alpha g^{*}\left(\frac{-z}{\alpha}\right) + \log \frac{1}{n} \sum_{i=1}^{n} e^{\langle C^{T}z, X_{i}(\omega) \rangle }. \label{eqn:approx_dual} 
\end{equation}
And exactly analagous to \eqref{defn:x_nu}, given an $\varepsilon$-solution $\overline{z}_{n,\varepsilon}(\omega)$  of \cref{eqn:approx_dual},  we define
\begin{equation}
    \overline{x}_{n,\varepsilon}(\omega) := \nabla L_{\mu_{n}^{(\omega)}}(z)\vert_{z = \overline{z}_{n,\varepsilon}(\omega)} = \nabla_{z} \left[ \log \frac{1}{n} \sum_{i=1}^{n} e^{\langle C^{T}z, X_{i}(\omega) \rangle }
 \right]_{z =\overline{z}_{n,\varepsilon}(\omega)}. \label{defn:x_n}
\end{equation}

Clearly, while the measure $\mu_{n}^{(\omega)}$ is well-defined and Borel for any given $\omega$,  the convergence properties of $\overline{z}_{n, \varepsilon}(\omega)$ and $\overline{x}_{n, \epsilon}(\omega)$ should be studied in a stochastic sense over $\Omega$. To this end, we leverage a probabilistic version of epi-convergence for random functions known as epi-consistency \cite{king1991epi}. 

Let $(T, \mathcal{A})$ be a measurable space. A function $f : \R^{m} \times T \to \overline{\R}$ is called a random\footnote{The inclusion of the word `random' in this definition need not imply a priori any relation to a random process; we simply require measurability properties of $f$. Random lsc functions are also known as normal integrands in the literature, see \cite[Chapter 14]{rockafellar2009variational}.} lsc function (with respect to $(T,\mathcal{A})$)  \cite[Definition 8.50]{royset2022optimization} if the (set-valued) map $S_{f}: T  \rightrightarrows \R^{m+1}, \;S_{f}(t) = \epi f(\cdot, t)$
is closed-valued and measurable in the sense $S_{f}^{-1}(O) = \{ t \in T \: : \: S_{f}(x) \cap O \neq \emptyset \} \in \mathcal{A}$. 

Our study is fundamentally interested in random lsc functions on $(\Omega, \mathcal{F})$, in service of proving convergence results for $\overline{x}_{n, \epsilon}(\omega)$. But we emphasize that random lsc functions with respect to $(\Omega,\mathcal{F})$ are tightly linked with random lsc functions on $(X, \mathcal{B}_{\mathcal{X}})$. Specifically, if $X: \Omega \to \mathcal{X}$ is a random variable and $f: \R^{m} \times \mathcal{X} \to \overline{\R}$ is a random lsc function with respect to $(\mathcal{X}, \mathcal{B}_{\mathcal{X}})$, then the composition $f(\cdot, X(\cdot)): \R^{m} \times \Omega \to \R$ is a random lsc function with respect to the measurable space $(\Omega, \mathcal{F})$, see e.g. \cite[Proposition 14.45 (c)]{rockafellar2009variational}
 or the discussion of \cite[Section 5]{romisch2007stability}. This link will prove computationally convenient in the next section.

While the definition of a random lsc function is unwieldy to work with directly, it is implied by a host of easy to verify conditions \cite[Example 8.51]{royset2022optimization}.
We will foremost use the following one: Let $(T, \mathcal{A})$ be a measurable space. If a function $f:\R^{m} \times T \to \overline{\R}$ is finite valued, with $f(\cdot, t)$ continuous for all $t$, and $f(z, \cdot)$ measurable for all $z$, we say $f$ is a Carath{\'e}odory function. Any function which is Carath{\'e}odory is random lsc \cite[Example 14.26]{rockafellar2006variational}. 

Immediately, we can assert $\phi_{\mu_{n}^{(\cdot)}}$ is a random lsc function from $\R^{d} \times \Omega \to \overline{\R}$, as it is Carath{\'e}odory. In particular, by \cite[Theorem 14.37]{rockafellar2009variational} or \cite[Section 5]{romisch2007stability}, the $\varepsilon$-solution mappings
\begin{equation*}
    \omega \mapsto \left\{ z \: : \: \phi_{\mu_{n}^{(\omega)}}(z) \leq \inf \phi_{\mu_{n}^{(\omega)}} + \varepsilon \right\}
\end{equation*}
are measurable (in the set valued sense defined above), and it is always possible to find a $\overline{z}(\omega) \in \argmin \phi_{\mu_{n}^{(\omega)}}$ such that the function $\omega \mapsto \overline{z}(\omega)$ is $\Prob$-measurable in the usual sense.

We conclude with the definition of epi-consistency as seen in \cite[p.~86]{king1991epi}; a sequence of random lsc functions $h_{n}: \R^{m}
\times \Omega \to \overline{\R} $ is said to be epi-consistent with limit function $h: \R^{m} \to \overline{\R}$ if
\begin{equation}
\Prob\left(\left\{ \omega \in \Omega \: \vert \:  h_{n}(\cdot,\omega) \toe h \right\}\right) =1. \label{def:epi-consistent}
\end{equation}

%The measurability of minimizers of random lsc functions is addressed by \cite[Theorem 14.37]{rockafellar2009variational} and \cite[Section 5]{romisch2007stability}. Specialized to our setting, these results give that the $\varepsilon$-solution mappings of 
%\begin{equation*}
%    \omega \mapsto \{ z \: : \: f(z,X(\omega)) \leq \inf f(\cdot, X(\omega)) + \varepsilon\}
%\end{equation*}
%are measurable (in the set valued sense defined above) for all $\varepsilon \geq 0$. In particular, the function $\omega \to \inf f(\cdot, X(\omega))$ is measurable, and it is always possible to find $\overline{z}(\omega) \in \argmin f(\cdot, X(\omega))$ such that the function $\omega \to \overline{z}(\omega)$ is $\Prob$-measurable in the usual sense.

\section{Epigraphical convergence and convergence of minimizers} \label{sec:epi-convergence}

The goal of this section is to prove convergence of minimizers in the empirical case, i.e., that $\overline{x}_{n,\varepsilon}(\omega)$ as defined in \eqref{defn:x_n} converges to $\overline{x}_{\mu}$, the solution of (P), for $\Prob$-almost every $\omega\in \Omega$ as $\varepsilon \downarrow 0$. To do so, we prove empirical approximations of the moment generating function are epi-consistent with $M_{\mu}$, and parley this into a proof of the epi-consistency of $\phi_{\mu_{n}^{(\omega)}}$ with limit $\phi_{\mu}$. Via classic convex analysis techniques, this guarantees the desired convergence of minimizers with probability one.

\subsection{Epi-consistency of the empirical moment generating functions}

Given $\{X_{1}, \ldots, X_{n}, \ldots\}$ i.i.d. with shared law $\mu = \Prob X_1^{-1} \in\mathcal P(\mathcal{X})$, we denote the moment generating function of $\mu_{n}^{(\omega)}$ as $M_{n}(y, \omega) :=  \frac{1}{n} \sum_{i=1}^{n} e^{\langle y, X_{i}(\omega) \rangle}.$ Define $f: \R^{m} \times \R^{d} \to \R$ as $f(z, x) =  e^{\langle C^{T}z, x\rangle}$. Then

\begin{align*}
    M_{\mu}(C^{T}z) &= \int_{\mathcal{X}} e^{\langle C^{T}z, \cdot \rangle} d \mu = \int_{\mathcal{X}} f(z, \cdot) d\mu, \\
    M_{n}(C^{T}z, \omega) & = \frac{1}{n} \sum_{i=1}^{n} e^{\langle C^{T}z, X_{i}(\omega) \rangle} = \frac{1}{n} \sum_{i=1}^{n} f(z, X_{i}(\omega)).
\end{align*}

\noindent
This explicit decomposition is useful to apply a specialized version of the main theorem of King and Wets \cite[Theorem 2]{king1991epi}, which we restate without proof.

\begin{prop} \label{thm:epicon}
Let $f : \R^{m} \times \mathcal{X} \to \overline{\R}$ be a random lsc function such that $f(\cdot, x)$ is convex and differentiable for all $x$. Let $X_{1}, \ldots, X_{n}$ be i.i.d. $\mathcal{X}$-valued random variables on $(\Omega, \mathcal{F}, \Prob)$ with shared law $\mu \in \mathcal{P}(\mathcal{X})$. If there exists $\overline{z} \in \R^{m}$ such that
    \begin{equation*}
        \int_{\mathcal{X}} f(\overline{z},\cdot) d\mu < +\infty, \qquad \text { and } \qquad
        \int_{\mathcal{X}} \Vert \nabla_{z}f(\overline{z}, \cdot) \Vert d\mu < + \infty,
    \end{equation*}
    then the sequence of (random lsc) functions $S_{n}: \mathbb{R}^{m} \times \Omega \to \overline{\R}$ given by
    \begin{equation*}
        S_{n}(z, \omega) := \frac{1}{n} \sum_{i=1}^{n}f(z, X_{i}(\omega))
    \end{equation*}
    is epi-consistent with limit $S_{\mu}:z\mapsto\int_{\mathcal{X}} f(z, \cdot) d\mu$, which is proper, convex, and lsc.
\end{prop}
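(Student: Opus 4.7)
The plan is to prove the claim in three stages: (i) establish that $S_\mu$ is proper, lsc, and convex; (ii) obtain almost-sure pointwise convergence of $S_n(\cdot,\omega)$ to $S_\mu$ on a countable dense subset of $\R^m$ via the strong law of large numbers (SLLN); and (iii) promote this pointwise convergence to epi-convergence using a classical convex-analytic stability result.

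For (i), convexity of $S_\mu$ is immediate from convexity of $f(\cdot,x)$ and linearity of the integral. The key tool for the remaining properties is the subgradient inequality $f(z,x) \geq f(\overline z,x) + \langle \nabla_z f(\overline z,x),\, z-\overline z\rangle$, which holds for every $x\in\mathcal X$ since $f(\cdot,x)$ is convex and differentiable. Integrating against $\mu$, the two integrability hypotheses at $\overline z$ yield an everywhere-finite affine minorant of $S_\mu$. In particular $S_\mu>-\infty$ on $\R^m$ and $S_\mu(\overline z)<+\infty$, so $S_\mu$ is proper; lower semicontinuity then follows from Fatou's lemma, using this affine function as the dominating lower bound.

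For (ii), fix $z\in\R^m$. The real-valued random variables $Y_i:=f(z,X_i)$ are i.i.d.\ and bounded below by the integrable variable $f(\overline z,X_i)+\langle \nabla_z f(\overline z,X_i),\, z-\overline z\rangle$, so $\E Y_1^- <+\infty$. If $\E Y_1^+<+\infty$, the classical SLLN gives $S_n(z,\omega)=\tfrac{1}{n}\sum_i Y_i\to \E Y_1 = S_\mu(z)$ almost surely; if $\E Y_1^+=+\infty$ (so $S_\mu(z)=+\infty$), the usual truncation argument applied to the SLLN yields $S_n(z,\omega)\to +\infty$ almost surely. Now choose a countable dense set $D\subset \R^m$ and intersect the countably many full-measure events to produce $\Omega_0\subset\Omega$ with $\Prob(\Omega_0)=1$ and $S_n(z,\omega)\to S_\mu(z)$ for every $z\in D$ and $\omega\in\Omega_0$.

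For (iii), fix $\omega\in\Omega_0$. Since $f(\cdot,x)$ is convex and differentiable on $\R^m$, it is finite-valued, so each $S_n(\cdot,\omega)$ is a finite-valued convex function on $\R^m$; the sequence converges pointwise on the dense set $D$ to the proper, lsc, convex function $S_\mu$. A classical stability result for convex functions (e.g.\ \cite[Theorem 7.17]{rockafellar2009variational} or \cite[Theorem 8.15]{royset2022optimization}) then promotes dense pointwise convergence of convex functions, with a proper lsc convex target, to epi-convergence on all of $\R^m$. Hence $S_n(\cdot,\omega)\toe S_\mu$ for every $\omega\in\Omega_0$, which is the definition of epi-consistency. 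The main subtlety is the extended SLLN step at points where $S_\mu(z)=+\infty$; the integrability hypotheses at $\overline z$ pull double duty here, simultaneously supplying the uniform-in-$z$ affine lower bound needed for truncation and Fatou, and guaranteeing that $S_\mu$ is proper and lsc so that the convex-analytic upgrade in (iii) is applicable.
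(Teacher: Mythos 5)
First, a point of reference: the paper does not prove this proposition at all --- it is explicitly ``restated without proof'' as a specialization of King and Wets \cite[Theorem 2]{king1991epi}. So your attempt is being measured against the standard proof of that result rather than anything in the text. Your stages (i) and (ii) are sound: the affine minorant obtained by integrating the gradient inequality at $\overline{z}$ does make $S_{\mu}$ proper, Fatou (with the integrable lower bound obtained by bounding $\langle \nabla_z f(\overline z,\cdot), z_k-\overline z\rangle$ uniformly over the bounded sequence $z_k$) gives lower semicontinuity, and the one-sided SLLN on a countable dense set is exactly the right probabilistic input. (You do implicitly read ``$\int f(\overline z,\cdot)\,d\mu<+\infty$'' as integrability, i.e.\ finiteness of the integral; that is clearly the intended meaning, since otherwise $S_\mu$ need not be proper.)

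The gap is in stage (iii). The classical equivalence between epi-convergence and pointwise convergence on a dense set for convex functions (Salinetti--Wets; \cite[Theorem 7.17]{rockafellar2009variational}) carries the hypothesis $\inti(\dom S_{\mu})\neq\emptyset$, and your hypotheses only give $\overline z\in\dom S_{\mu}$, not that the domain has interior: nothing prevents $\dom S_\mu$ from being a single point or a lower-dimensional set (e.g.\ $f(z,x)=g(x)\Vert z-\overline z\Vert^2$ with $g\ge 0$ and $\int g\,d\mu=+\infty$). Without interiority the implication you invoke is simply false --- take $f^{\nu}\equiv\delta_{\{1\}}$ and $f=\delta_{\{0\}}$ on $\R$, which agree (both $+\infty$) on the dense set $\R\setminus\{0,1\}$ yet do not epi-converge to $f$ --- so the citation does not close the argument as stated. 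Two repairs are available. Either add the assumption $\inti(\dom S_\mu)\neq\emptyset$ (harmless for the paper's only use of the proposition, where $f(z,x)=e^{\langle C^Tz,x\rangle}$ on compact $\mathcal X$ makes $S_\mu=M_\mu\circ C^T$ finite everywhere), or replace the dense-pointwise upgrade by a direct verification of the two epi-convergence inequalities: the limsup inequality follows from your SLLN on $D$ together with the line-segment/continuity structure of convex functions, while the liminf inequality is proved by bounding $f(z_n,x)\ge \inf_{z'\in B(z,\delta)}f(z',x)$, applying the SLLN to these (integrably minorized) infima, and letting $\delta\downarrow 0$ via monotone convergence and lower semicontinuity of $f(\cdot,x)$. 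This latter route is essentially how King and Wets argue, and it sidesteps the interiority issue entirely.
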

\noindent
Via a direct application of the above we have the following.

\begin{corollary} \label{thm:epicon_mgf} The sequence  $M_{n}(C^{T}(\cdot), \cdot)$ is  epi-consistent with limit $M_{\mu} \circ C^{T}$.
\end{corollary}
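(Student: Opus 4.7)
The plan is to apply Proposition 3.1 directly to the function $f(z,x) = e^{\langle C^T z, x\rangle}$, which is exactly the decomposition highlighted right before the statement. This reduces the corollary to a routine verification of the three hypotheses of \Cref{thm:epicon}, all of which are easy thanks to compactness of $\mathcal{X}$.

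First, I would verify that $f(z,x) = e^{\langle C^T z, x\rangle}$ is a random lsc function on $\R^m \times \mathcal{X}$. Since $f$ is finite-valued, jointly continuous (hence continuous in $z$ for each fixed $x$ and measurable — in fact continuous — in $x$ for each fixed $z$), it is Carathéodory and therefore random lsc by the remark recalled from \cite[Example 14.26]{rockafellar2006variational}. Next, for each fixed $x \in \mathcal X$, the map $z \mapsto e^{\langle C^T z, x\rangle}$ is the composition of the exponential with a linear form, so it is smooth and convex on $\R^m$, with gradient $\nabla_z f(z,x) = e^{\langle C^T z, x\rangle} C x$.

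The third hypothesis is the integrability condition at some $\bar z$. I would take the particularly convenient choice $\bar z = 0$. Then $f(0,x) = 1$, so $\int_{\mathcal X} f(0,\cdot)\,d\mu = 1 < \infty$. Likewise $\nabla_z f(0,x) = Cx$, and by compactness of $\mathcal X$,
\begin{equation*}
\int_{\mathcal X} \Vert \nabla_z f(0,\cdot)\Vert\, d\mu \;\le\; \Vert C\Vert \int_{\mathcal X} \Vert x\Vert\, d\mu(x) \;\le\; \Vert C\Vert \max_{x\in \mathcal X}\Vert x\Vert \;<\; +\infty.
\end{equation*}
With all hypotheses of \Cref{thm:epicon} verified, the proposition delivers that the random lsc functions
\begin{equation*}
S_n(z,\omega) \;=\; \frac{1}{n}\sum_{i=1}^n f(z, X_i(\omega)) \;=\; \frac{1}{n}\sum_{i=1}^n e^{\langle C^T z, X_i(\omega)\rangle} \;=\; M_n(C^T z, \omega)
\end{equation*}
are epi-consistent with limit $S_\mu(z) = \int_{\mathcal X} e^{\langle C^T z, x\rangle}\, d\mu(x) = M_\mu(C^T z)$, which is exactly the claim.

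Since the verification above is essentially immediate, there is no substantive obstacle; the work has already been done by setting up Proposition 3.1, and the only care required is in picking a base point $\bar z$ (here $\bar z = 0$) at which both integrability conditions are transparently satisfied via compactness of $\mathcal{X}$.
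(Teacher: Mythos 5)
Your proposal is correct and follows essentially the same route as the paper's own proof: apply \cref{thm:epicon} to $f(z,x)=e^{\langle C^{T}z,x\rangle}$, check that it is Carath\'eodory (hence random lsc) and convex and differentiable in $z$, and verify both integrability conditions at $\overline{z}=0$ using compactness of $\mathcal{X}$. No gaps.
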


\begin{proof}
    Define $f(z,x) = e^{\langle C^{T}z, x\rangle}$. For any $x$, $\langle C^{T} (\cdot),x \rangle$ is a linear function, and $e^{ ( \cdot ) }$ is convex - giving that the composition $f(\cdot, x)$ is convex. As $f$ is differentiable (hence continuous) in $z$ for fixed $x$ and vice-versa, it is Carath{\'e}odory and thus a random lsc function (with respect to $(\mathcal{X},\mathcal{B}_{\mathcal{X}})$). 
    
    Next we claim $\overline{z} = 0$ satisfies the conditions of the proposition. First, by direct computation
    \begin{equation*}
    \int_{\mathcal{X}} e^{ \langle 0,x \rangle } d\mu(x) = \int_{\mathcal{X}} d\mu(x) = 1 < + \infty
\end{equation*}
 as $\mu$ is a probability measure on $\mathcal{X}$. As $f(\cdot, x)$ is differentiable, we can compute $\nabla_{z}f(\overline{z},x) = Cxe^{\langle C^{T}z,x\rangle} \vert_{z = 0} =Cx$. Hence 
\begin{equation*}
    \int_{\mathcal{X}} \Vert \nabla_{z}f(\overline{z},x) \Vert d\mu(x) = \int_{\mathcal{X}} \Vert C x\Vert d\mu(x) \leq \Vert C \Vert \max_{x \in \mathcal{X}} \Vert x \Vert < + \infty,
\end{equation*} 
where we have used the boundedness of $\mathcal{X}$, and once again that $\mu$ is a probability measure.
Thus we satisfy the assumptions of \cref{thm:epicon}, and can conclude that the sequence of random lsc functions $S_{n}$ given by $S_{n}(z,\omega) = \frac{1}{n}\sum_{i=1}^{n} f(z, X_{i}(\omega))$ are epi-consistent with limit $S_{\mu} : z \mapsto \int_{\mathcal{X}} f(z , \cdot) d\mu$. But, 
\begin{equation*}
    S_{n}(z, \omega) = \frac{1}{n} \sum_{i=1}^{n} e^{\langle C^{T}z, X_{i}(\omega) \rangle} = M_{n}(C^{T} z, \omega) \qquad \text{ and } \qquad S_{\mu}(z) = \int_{\mathcal{X}} e^{\langle C^{T}z, \cdot \rangle} d\mu = M_{\mu}(C^{T}z),
\end{equation*}
and so we have shown the sequence $M_{n}(C^{T}(\cdot), \cdot)$ is epi-consistent with limit $M_{\mu} \circ C^{T}$.
\end{proof}

 \begin{corollary} \label{cor:Log_MGF_epiconverges} The sequence $L_{\mu_{n}^{(\omega)}} \circ C^{T}$ is epi-consistent with limit $L_{\mu} \circ C^{T}$.
 \end{corollary}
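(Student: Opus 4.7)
The plan is to transfer epi-convergence through the composition with $\log$ by passing through pointwise convergence, leveraging the well-known equivalence, for finite-valued convex functions on $\R^m$, between epi-convergence and pointwise convergence (see e.g.\ \cite[Theorem 7.17]{rockafellar2009variational}). Throughout, everything is done pathwise for a fixed $\omega$ in the probability-one set where the previous corollary's convergence holds.

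First, I would invoke \Cref{thm:epicon_mgf}: there is a set $\Omega_{0} \in \mathcal{F}$ with $\Prob(\Omega_{0}) = 1$ such that for every $\omega \in \Omega_{0}$, the sequence $M_{n}(C^{T}(\cdot), \omega) \toe M_{\mu} \circ C^{T}$. Since $\mathcal{X}$ is compact, the functions $M_{n}(C^{T}(\cdot), \omega)$ and $M_{\mu} \circ C^{T}$ are all finite-valued convex functions on the whole of $\R^{m}$; in this setting, epi-convergence is equivalent to pointwise convergence. Hence for every $\omega \in \Omega_{0}$ and every $z \in \R^{m}$, we have $M_{n}(C^{T}z, \omega) \to M_{\mu}(C^{T}z)$.

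Next, I would pass through the logarithm. For any $z \in \R^{m}$ and any $\omega$, the value $M_{n}(C^{T}z, \omega) = \tfrac{1}{n} \sum_{i=1}^{n} e^{\langle C^{T}z, X_{i}(\omega)\rangle}$ is a strictly positive finite sum of exponentials, and similarly $M_{\mu}(C^{T}z) > 0$ as the integral of a strictly positive function against a probability measure. Therefore $\log$ can be applied, and by continuity of $\log$ on $(0,\infty)$,
\begin{equation*}
    L_{\mu_{n}^{(\omega)}}(C^{T}z) = \log M_{n}(C^{T}z, \omega) \;\longrightarrow\; \log M_{\mu}(C^{T}z) = L_{\mu}(C^{T}z)
\end{equation*}
for every $\omega \in \Omega_{0}$ and every $z \in \R^{m}$. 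Since $L_{\mu_{n}^{(\omega)}} \circ C^{T}$ and $L_{\mu} \circ C^{T}$ are again finite-valued convex on $\R^{m}$ (as noted in \Cref{sec:MEMProblem}, $L_{\mu}$ is analytic and hence convex; the same holds for the empirical LMGF), the equivalence of pointwise and epi-convergence in this class upgrades the pointwise convergence to $L_{\mu_{n}^{(\omega)}} \circ C^{T} \toe L_{\mu} \circ C^{T}$.

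Finally, I would verify measurability in the sense needed for epi-consistency: $L_{\mu_{n}^{(\cdot)}} \circ C^{T}$ is Carath{\'e}odory (continuous in $z$ for fixed $\omega$, and measurable in $\omega$ for fixed $z$ as the logarithm of a finite positive combination of measurable functions), so it is a random lsc function. Combining this with the probability-one epi-convergence established above yields the epi-consistency asserted in the corollary. The main technical point, rather than an obstacle, is simply confirming that strict positivity of the MGFs holds uniformly so that $\log$ may be composed without issue; this is immediate from compactness of $\mathcal{X}$ and $\mu, \mu_{n}^{(\omega)} \in \mathcal{P}(\mathcal{X})$.
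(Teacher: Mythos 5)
Your proof is correct, but it takes a genuinely different route from the paper's. The paper pushes the epi-convergence of $M_{n}(C^{T}(\cdot),\omega)$ through the logarithm directly, invoking the fact that epi-convergence is preserved under composition with a continuous \emph{increasing} function (an extension of \cite[Exercise 7.8(c)]{rockafellar1997convex}); this needs only positivity and finiteness of the moment generating functions, not their convexity. You instead detour through pointwise convergence: you use the equivalence of epi-convergence and pointwise convergence for finite-valued convex functions (\cite[Theorem 7.17]{rockafellar2009variational}) to downgrade epi-convergence of the $M_{n}$'s to pointwise convergence, apply continuity of $\log$ on $(0,\infty)$, and then upgrade back to epi-convergence using convexity and finiteness of the log-moment generating functions. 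This is valid, and it has the side benefit of being fully self-contained (Theorem 7.17 is a stated theorem, whereas the paper relies on an unproved ``simple extension'' of an exercise); it also recovers, as a byproduct, the pointwise convergence that the paper establishes separately in \cref{cor:Logmgf_pointwise} via the law of large numbers. The cost is that your argument genuinely needs convexity of both $M_{n}(C^{T}(\cdot),\omega)$ and $L_{\mu_{n}^{(\omega)}}\circ C^{T}$, which the paper's monotone-composition argument does not. One small repair: your justification ``$L_{\mu}$ is analytic and hence convex'' is a non sequitur --- analyticity does not imply convexity. Convexity of the log-moment generating function follows from H\"older's inequality (and is asserted independently in \Cref{sec:MEMProblem}), so the fact you need is true, but you should cite it for the right reason.
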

 \begin{proof}
 Let
 \begin{equation*}
      \Omega_{e} = \left\{ \omega \in \Omega \: \vert \: M_{n}(C^{T}(\cdot),\omega) \toe  M_{\mu} \circ C^{T}(\cdot) \right\},
 \end{equation*}
which has $\Prob(\Omega_{e})=1$ by \cref{thm:epicon_mgf}, and let $\omega \in \Omega_{e}$. Both $M_{n}$ and $M_{\mu}$ are finite valued and strictly positive, and furthermore the function $\log: \R_{++} \to \R$ is continuous and increasing. Hence,  by a simple extension of \cite[Exercise 7.8(c)]{rockafellar1997convex}, it follows, for all $\omega \in \Omega_{e}$, that
\[
    L_{\mu_{n}^{(\omega)}}\circ C^{T} = \log M_{n}(C^{T}(\cdot),\omega) \toe \log M_{\mu} \circ C^{T} = L_{\mu} \circ C^{T}.
\]
\hfill
\end{proof}

\subsection{Epi-consistency of the dual objective functions}

We now use the previous lemma to obtain an epi-consistency result for the entire empirical dual objective function. This is not an immediately clear, as epi-convergence is not generally preserved by even simple operations such as addition, see, e.g., the discussion in \cite[p.~276]{rockafellar2009variational} and the note \cite{BuH15} that eludes to subtle difficulties when dealing with extended real-valued arithmetic in this context. \\

We recall the following pointwise convergence result for compact $\mathcal{X}$, which is classical in the statistics literature. 
\begin{lemma}\label{lemma:MGF_pointwise}
    If $\mu \in \mathcal{P}(\mathcal{X})$, for almost every $\omega \in \Omega$, and all $z \in \R^{m}$
    \begin{equation*}
       M_{n}(C^{T}z, \omega) \to  M_{\mu} \circ C^{T}(z),
    \end{equation*}
    namely pointwise convergence in $z$.
\end{lemma}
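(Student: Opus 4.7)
The plan is to combine the strong law of large numbers (applied pointwise in $z$) with a density argument and the fact that pointwise limits of convex functions extend from a dense set to the whole space. The subtlety is that SLLN only gives a full-probability set $\Omega_z$ for each \emph{fixed} $z$, and $\R^m$ is uncountable, so we cannot directly take an intersection. Convexity of the functions involved will be the key tool to overcome this.

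Here is how I would carry it out. First, fix any $z\in\R^m$ and consider the real-valued random variables $Y_i^{(z)} := e^{\langle C^Tz, X_i\rangle}$. Since $\mathcal{X}$ is compact, $\|X_i\|$ is uniformly bounded, so $Y_i^{(z)}$ is bounded (depending on $z$) and in particular integrable. The $Y_i^{(z)}$ are i.i.d.\ with common mean $\E[Y_1^{(z)}] = \int_{\mathcal X} e^{\langle C^Tz,\cdot\rangle}\,d\mu = M_\mu(C^Tz)$. By Kolmogorov's strong law of large numbers, there exists an event $\Omega_z\in\mathcal{F}$ with $\Prob(\Omega_z)=1$ on which
\[
M_n(C^Tz,\omega) = \frac{1}{n}\sum_{i=1}^n Y_i^{(z)}(\omega)\ \longrightarrow\ M_\mu(C^Tz).
\]

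Next, to upgrade this to simultaneous convergence for all $z$, choose a countable dense set $D\subset\R^m$ (e.g.\ $\mathbb{Q}^m$) and set $\Omega_0 := \bigcap_{z\in D}\Omega_z$, which is a countable intersection of probability-one events and hence satisfies $\Prob(\Omega_0)=1$. On $\Omega_0$, the sequence $M_n(C^T\cdot,\omega)$ converges pointwise to $M_\mu\circ C^T$ on the dense subset $D$.

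Finally, I would extend the convergence from $D$ to all of $\R^m$ using convexity. For every $\omega$, the function $z \mapsto M_n(C^Tz,\omega) = \frac{1}{n}\sum_{i=1}^n e^{\langle C^Tz,X_i(\omega)\rangle}$ is a sum of convex functions, hence convex; the same holds for $M_\mu\circ C^T$ (as established earlier in the excerpt). Both limit and approximants are finite-valued on all of $\R^m$. By a classical result on convex functions (Rockafellar, \emph{Convex Analysis}, Theorem 10.8), a sequence of finite convex functions on $\R^m$ that converges pointwise on a dense subset of an open convex set $U$ in fact converges (locally uniformly) on all of $U$; taking $U = \R^m$ finishes the argument. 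Thus for every $\omega\in\Omega_0$ and every $z\in\R^m$, $M_n(C^Tz,\omega)\to M_\mu(C^Tz)$, which is the claim. The main obstacle is precisely the density-to-everywhere extension, which is handled cleanly by convexity rather than by attempting any sort of equicontinuity estimate by hand.
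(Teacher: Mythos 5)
Your proof is correct. Note that the paper does not actually prove this lemma: it is stated as a classical fact from the statistics literature, with the stronger locally uniform version attributed to Cs\"org\H{o} and to Feuerverger--Mureika-type arguments. Your argument --- strong law of large numbers for the bounded i.i.d.\ variables $e^{\langle C^Tz, X_i\rangle}$ at each fixed $z$, intersection over a countable dense set, and then the convexity upgrade via Rockafellar's Theorem 10.8 --- is a complete and self-contained proof, and in fact delivers the stronger conclusion of uniform convergence on compact subsets of $\R^m$, which is precisely the refinement the paper only cites. The one step you pass over quickly is the identification of the limit: Theorem 10.8 guarantees that $M_n(C^T\cdot,\omega)$ converges everywhere to \emph{some} finite convex function $f$, and you still need $f = M_\mu\circ C^T$ off the dense set $D$; this follows because both are finite convex (hence continuous) on $\R^m$ and agree on $D$, but it is worth saying explicitly. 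With that one sentence added, the argument is airtight, and it is pleasingly consistent with the paper's overall philosophy of exploiting convexity to pass from pointwise information on a small set to global statements (compare the use of \cite[Exercise 7.8(c)]{rockafellar1997convex} and \cite[Theorem 7.46(a)]{rockafellar2009variational} elsewhere in Section 3).
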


We remark that the literature contains stronger uniform convergence results, observed first in Cs{\"o}rg{\"o} \cite{csorgo1982empirical} without proof, and later proven in \cite{feuerverger1989empirical} and \cite[Proposition 1]{csorgHo1983kernel}. Noting that both $M_{n}(z, \omega),M_{\mu}(z) > 0$ are strictly positive for all $z \in \R^{m}$, and that the logarithm is continuous on the strictly positive real line, we have an immediate corollary:

\begin{corollary} \label{cor:Logmgf_pointwise}
For almost every $\omega \in \Omega$, for all $z \in \R^{m}$
    \begin{equation*}
      L_{\mu_{n}^{(\omega)}}(C^{T}z)  = \log M_{n}(C^{T}z, \omega) \to \log  M_{\mu}(C^{T}z) = L_{\mu}( C^{T}z  ).
    \end{equation*}
\end{corollary}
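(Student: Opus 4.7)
The plan is to derive this as an immediate consequence of \Cref{lemma:MGF_pointwise} together with continuity of $\log$ on the strictly positive real line. First I would invoke \Cref{lemma:MGF_pointwise} to produce a measurable event $\Omega_{p}\subseteq \Omega$ with $\Prob(\Omega_{p})=1$ such that, for every $\omega \in \Omega_{p}$ and every $z \in \R^{m}$, the scalar convergence $M_{n}(C^{T}z,\omega)\to M_{\mu}(C^{T}z)$ holds. It is important here that the full-probability set $\Omega_{p}$ does \emph{not} depend on $z$, which is part of the statement of \Cref{lemma:MGF_pointwise} itself.

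Next I would observe that the quantities being compared are strictly positive. Indeed, for any $\omega$ and any $y\in\R^{m}$,
\[
M_{n}(y,\omega)=\frac{1}{n}\sum_{i=1}^{n}e^{\langle y, X_{i}(\omega)\rangle}>0, \qquad M_{\mu}(y)=\int_{\mathcal X}e^{\langle y,\cdot\rangle}\,d\mu>0,
\]
because each integrand/summand is strictly positive and $\mu$ is a probability measure. Hence both $M_{n}(C^{T}z,\omega)$ and $M_{\mu}(C^{T}z)$ lie in $\R_{++}$ for every $\omega\in\Omega$ and every $z \in \R^{m}$.

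Finally, I would fix $\omega\in\Omega_{p}$ and $z\in\R^{m}$, and apply the continuity of $\log:\R_{++}\to\R$ to the pointwise convergence from the first step, yielding
\[
L_{\mu_{n}^{(\omega)}}(C^{T}z)=\log M_{n}(C^{T}z,\omega)\;\longrightarrow\;\log M_{\mu}(C^{T}z)=L_{\mu}(C^{T}z).
\]
Since this holds simultaneously for all $z\in\R^{m}$ on the full-probability event $\Omega_{p}$, the claim follows. There is no real obstacle here; the only point deserving explicit mention is that the exceptional null set provided by \Cref{lemma:MGF_pointwise} is uniform in $z$, so that passing $\log$ through the limit pointwise in $z$ preserves the almost-sure statement without any additional union-bound argument.
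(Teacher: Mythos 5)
Your argument is correct and is essentially identical to the paper's: the corollary is obtained from \Cref{lemma:MGF_pointwise} by noting strict positivity of $M_{n}(C^{T}z,\omega)$ and $M_{\mu}(C^{T}z)$ and applying continuity of $\log$ on $\R_{++}$. Your explicit remark that the exceptional null set from \Cref{lemma:MGF_pointwise} is uniform in $z$ is a welcome (if implicit in the paper) point of care.
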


Using this we prove the first main result:

\begin{theorem} \label{thm:epicon_dual_obj} For any lsc, proper, convex function $g$, the empirical dual objective function $\phi_{\mu_{n}^{(\omega)}}$ is epi-consistent with limit $\phi_{\mu}$    
\end{theorem}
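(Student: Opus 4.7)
The plan is to write $\phi_{\mu_n^{(\omega)}} = \psi + h_n^{(\omega)}$, where $\psi := \alpha g^{*}(-\,\cdot/\alpha)$ is the fixed lsc proper convex fidelity term and $h_n^{(\omega)} := L_{\mu_n^{(\omega)}} \circ C^{T}$. By \Cref{cor:Log_MGF_epiconverges} we already know $h_n^{(\omega)} \toe h := L_{\mu} \circ C^{T}$ almost surely, so the statement reduces to showing that adding the fixed function $\psi$ to both sides preserves epi-convergence. As the paragraph just before the theorem flags, this is the main obstacle: epi-convergence of $h_n$ to $h$ need not entail epi-convergence of $\psi + h_n$ to $\psi + h$ when extended-real arithmetic intervenes, so a black-box ``sum of epi-convergent sequences'' result is unavailable.

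The saving structure is that $\mathcal X$ is compact, so $h$ and every $h_n^{(\omega)}$ is a finite-valued (hence continuous) convex function on all of $\R^{m}$. I would restrict attention to the full-probability event $\Omega_0 := \Omega_e \cap \Omega_p$ obtained by intersecting the a.s.\ events from \Cref{cor:Log_MGF_epiconverges} and \Cref{cor:Logmgf_pointwise}, so that on $\Omega_0$ both epi-convergence and pointwise convergence of $h_n^{(\omega)}$ to $h$ hold simultaneously. For finite convex functions on $\R^{m}$, pointwise convergence automatically upgrades to uniform convergence on compact subsets (e.g.\ Rockafellar, \emph{Convex Analysis}, Thm.~10.8); this stronger mode of convergence is exactly what makes the addition of the possibly extended-real-valued $\psi$ harmless.

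With this in hand, fix $\omega \in \Omega_0$ and $z \in \R^{m}$ and verify the two epi-convergence inequalities directly. For the liminf inequality, any sequence $z_n \to z$ lies eventually in a compact neighborhood of $z$, so uniform convergence on compacts together with continuity of $h$ gives $h_n^{(\omega)}(z_n) \to h(z) \in \R$, while lower semicontinuity of $\psi$ gives $\liminf \psi(z_n) \geq \psi(z)$; combining these (with no indeterminate form, since one of the limits is finite) yields $\liminf(\psi + h_n^{(\omega)})(z_n) \geq (\psi + h)(z)$. For the recovery sequence the constant choice $z_n \equiv z$ works: pointwise convergence of $h_n^{(\omega)}(z)$ to $h(z)$ gives $\psi(z) + h_n^{(\omega)}(z) \to (\psi + h)(z)$ when $\psi(z) < +\infty$, and the limsup bound is trivial when $\psi(z) = +\infty$. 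This holds for every $\omega \in \Omega_0$ with $\Prob(\Omega_0) = 1$, and the measurability of $(z,\omega) \mapsto \phi_{\mu_n^{(\omega)}}(z)$ as a random lsc function was already recorded in \Cref{sec:approximation} via its Carath\'eodory structure, so the epi-consistency condition \eqref{def:epi-consistent} is met.
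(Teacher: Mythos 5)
Your proof is correct, and it shares the paper's skeleton exactly: the same decomposition of $\phi_{\mu_n^{(\omega)}}$ into the fixed term $\alpha g^*(-\cdot/\alpha)$ plus $L_{\mu_n^{(\omega)}}\circ C^T$, and the same restriction to the full-measure intersection of the epi-convergence event from \cref{cor:Log_MGF_epiconverges} with the pointwise-convergence event from \cref{cor:Logmgf_pointwise}. Where you diverge is at the crux the paper itself flags as delicate --- the sum rule for epi-limits. The paper closes by invoking \cite[Theorem 7.46(a)]{rockafellar2009variational} as a black box for the sum of a constant sequence with a sequence converging both epigraphically and pointwise; you instead prove the needed sum rule from scratch, using that the $L$-terms are finite-valued convex functions on all of $\R^m$ (by compactness of $\mathcal X$), so that pointwise convergence upgrades to uniform convergence on compacta via \cite[Theorem 10.8]{rockafellar1997convex}, after which the liminf and recovery-sequence conditions are checked directly with no indeterminate extended-real arithmetic. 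Both arguments are valid; the paper's is shorter, while yours is self-contained, makes transparent exactly where compactness of $\mathcal X$ enters, and incidentally reveals that the epi-convergence event $\Omega_e$ is redundant in this step --- pointwise convergence of the finite convex LMGFs already forces their continuous (hence epigraphical) convergence, so $\Omega_p$ alone would suffice.
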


\begin{proof}

Define 
\begin{equation*}
    \Omega_{e} = \left\{ \omega \in \Omega \: \vert \: L_{\mu_{n}^{(\omega)}}\circ C^{T}(\cdot) \toe L_{\mu} \circ C^{T}(\cdot)\right\}.
\end{equation*}
By \cref{cor:Log_MGF_epiconverges}, $\Prob(\Omega_{e})=1$. Similarly denote 
\begin{equation*}
    \Omega_{p} = \left\{ \omega \in \Omega \: \vert \: L_{\mu_{n}^{(\omega)}}\circ C^{T}(\cdot) \to  L_{\mu} \circ C^{T}(\cdot) \text{ pointwise} \right\}.
\end{equation*} 
By \cref{cor:Logmgf_pointwise}, we also have $\Prob(\Omega_{p})=1$. In particular we observe that $\Prob(\Omega_{e} \cap \Omega_{p})=1$.

On the other hand we have vacuously that the constant sequence of convex, proper, lsc functions $\alpha g^{*}\circ  (-\text{Id}/\alpha)$ converges to $\alpha g^{*}\circ ( - \text{Id}/\alpha)$ both epigraphically and pointwise. \\

Thus for any fixed $\omega \in \Omega_{p} \cap \Omega_{e}$ we have constructed two sequences, namely $g_{n} \equiv \alpha g^{*} \circ (-\text{Id}/\alpha)$ and $L_{n} = L_{\mu_{n}^{(\omega)}}\circ C^{T}$, which both converge epigraphically and pointwise for all $\omega \in \Omega_{e} \cap \Omega_{p}$. Therefore, by \cite[Theorem 7.46(a)]{rockafellar2009variational}, for all $\omega \in \Omega_{e} \cap \Omega_{p}$
\begin{equation*}
\alpha g^{*}\circ (- \text{Id}/\alpha) + L_{\mu_{n}^{(\omega)}}\circ C^{T} \toe \alpha g^{*}\circ (- \text{Id}/\alpha) + L_{\mu} \circ C^{T} .
\end{equation*}
As $\Prob(\Omega_{e} \cap \Omega_{p}) =1$, this proves the result.
\end{proof}

\subsection{Convergence of minimizers}

We now parley epi-consistency into convergence of minimizers. At the dual level this can be summarized in the following lemma, essentially \cite[Proposition 2.2]{king1991epi}; which was stated therein without proof.\footnote{
We remark that (as observed in \cite{king1991epi}) epigraphical convergence of a (multi-)function depending on a parameter (such as $\omega$) guarantees convergence of minimizers in much broader contexts, see e.g. \cite[Theorem 1.10]{attouch1984variational} or \cite[Theorem 3.22]{rockafellar2006variational}. Here we include a first principles proof.}

\begin{lemma} \label{lemma:min} There exists a subset $\Xi \subset \Omega$ of measure one, such that for any $\omega \in \Xi$ we have: Let $\{ \varepsilon_{n} \} \searrow 0$ and $z_{n}(\omega)$ such that
    \begin{equation*}
        \phi_{\mu_{n}^{(\omega)}}(z_{n}(\omega)) \leq \inf_{z} \phi_{\mu_{n}^{(\omega)}}(z) + \varepsilon_{n}.
    \end{equation*}
    Let $\{ z_{n_{k}}(\omega) \}$ be any convergent subsequence of $\{ z_{n}(\omega) \} $. Then $\lim_{k \to \infty}z_{n_{k}}(\omega)$ is a minimizer of $\phi_{\mu}$. If $\phi_{\mu}$ admits a unique minimizer $\overline{z}_{\mu}$, then $z_{n} \to \overline{z}_{\mu}$.
\end{lemma}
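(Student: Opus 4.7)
My plan is to take $\Xi$ to be the almost-sure event supplied by \Cref{thm:epicon_dual_obj} on which $\phi_{\mu_{n}^{(\omega)}}\toe \phi_{\mu}$. Once $\omega\in\Xi$ is fixed, the argument becomes deterministic and uses only the two halves of the epi-convergence definition, together with existence of a minimizer of $\phi_\mu$ supplied by \Cref{thm:level-coercive}.

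For any convergent subsequence $z_{n_k}(\omega)\to \bar z$, the liminf part (i) of epi-convergence gives $\phi_\mu(\bar z)\leq \liminf_k \phi_{\mu_{n_k}^{(\omega)}}(z_{n_k}(\omega))$. Using the $\varepsilon_n$-optimality hypothesis and $\varepsilon_n\downarrow 0$,
\[
\phi_\mu(\bar z)\leq \liminf_k\bigl(\inf \phi_{\mu_{n_k}^{(\omega)}}+\varepsilon_{n_k}\bigr)=\liminf_k \inf \phi_{\mu_{n_k}^{(\omega)}}.
\]
To bound the right-hand side by $\inf\phi_\mu$, I would select any $z^*\in\argmin \phi_\mu$ (nonempty by \Cref{thm:level-coercive}) and apply the limsup recovery half (ii) of epi-convergence to obtain $z_n^*\to z^*$ with $\limsup_n \phi_{\mu_n^{(\omega)}}(z_n^*)\leq \phi_\mu(z^*)=\inf \phi_\mu$. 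Since $\inf \phi_{\mu_n^{(\omega)}}\leq \phi_{\mu_n^{(\omega)}}(z_n^*)$, passing to $\limsup$ yields $\limsup_n \inf \phi_{\mu_n^{(\omega)}}\leq \inf \phi_\mu$, and chaining the two inequalities gives $\phi_\mu(\bar z)\leq \inf \phi_\mu$, so $\bar z\in\argmin \phi_\mu$.

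For the uniqueness assertion with $\argmin \phi_\mu=\{\bar z_\mu\}$, the previous step forces every cluster point of $\{z_n(\omega)\}$ to equal $\bar z_\mu$, so what remains is to show that the full sequence is bounded. I would leverage level-coercivity of $\phi_\mu$ (\Cref{thm:level-coercive}) together with the pointwise control from \Cref{cor:Logmgf_pointwise}: evaluating at $\bar z_\mu$ gives $\phi_{\mu_n^{(\omega)}}(\bar z_\mu)\to \phi_\mu(\bar z_\mu)=\inf\phi_\mu$, so $\phi_{\mu_n^{(\omega)}}(z_n(\omega))\leq \inf \phi_{\mu_n^{(\omega)}}+\varepsilon_n$ is eventually bounded. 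For convex $\phi_n\toe \phi$ with level-coercive limit, one can appeal to equi-level-boundedness of the approximants (cf.\ \cite[Theorem 7.33]{rockafellar2009variational}) to trap $\{z_n(\omega)\}$ in a fixed sublevel set; in $\R^m$ this is the desired boundedness, and uniqueness of cluster points then forces $z_n(\omega)\to \bar z_\mu$.

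The principal obstacle is precisely this boundedness step for the uniqueness case: epi-convergence on its own does not prevent approximate minimizers from escaping to infinity, so the structural coercivity guaranteed by \Cref{assum:domain} (via \Cref{thm:level-coercive}) must be used in an essential way. By contrast, the first half of the lemma is a clean textbook matching of liminf and limsup halves of epi-convergence, and requires no a priori boundedness information on $\{z_n(\omega)\}$.
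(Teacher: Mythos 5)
Your proposal is correct, and it reaches the same conclusions as the paper while distributing the work differently. The paper's proof fixes the same $\Xi$ from \cref{thm:epicon_dual_obj}, then outsources essentially everything to the Rockafellar--Wets machinery: it uses \cref{thm:level-coercive} to get level-boundedness of $\phi_\mu$, invokes \cite[Theorem 7.32(c)]{rockafellar2009variational} to conclude the convex sequence $\phi_{\mu_n^{(\omega)}}$ is eventually level-bounded, and then cites \cite[Theorem 7.33]{rockafellar2009variational}, which simultaneously delivers boundedness of the approximate minimizers, the cluster-point statement, and convergence in the unique-minimizer case. You instead prove the cluster-point claim from first principles by matching the liminf half of epi-convergence (along the given convergent subsequence) against a recovery sequence at some $z^*\in\argmin\phi_\mu$ to get $\limsup_n \inf\phi_{\mu_n^{(\omega)}}\leq\inf\phi_\mu$, and only fall back on the eventual-level-boundedness argument for the boundedness needed in the uniqueness case --- which you correctly identify as the one place where \cref{assum:domain} (via \cref{thm:level-coercive}) is indispensable, since epi-convergence alone cannot prevent approximate minimizers from escaping to infinity. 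Your route is more self-contained for the first assertion (and makes explicit that no boundedness is needed there, since the subsequence is assumed convergent), at the cost of slightly more bookkeeping; the paper's route is shorter but entirely reliant on the cited theorems. Two minor points: your appeal to \cref{cor:Logmgf_pointwise} requires enlarging $\Xi$ to intersect the epi-convergence event with the pointwise-convergence event (both of full measure, so this is harmless), and it is in fact unnecessary, since your own bound $\limsup_n\inf\phi_{\mu_n^{(\omega)}}\leq\inf\phi_\mu$ already shows the values $\phi_{\mu_n^{(\omega)}}(z_n(\omega))$ are eventually bounded above; also, the eventual level-boundedness you need is \cite[Theorem 7.32(c)]{rockafellar2009variational} rather than Theorem 7.33, which is the consequence drawn from it.
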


\begin{proof}
Denote 
\begin{equation*}
    \Xi = \left\{ \omega \in \Omega \: \vert \: \phi_{\mu_{n}^{(\omega)}} \toe \phi_{\mu} \right\}.
\end{equation*}
By \cref{thm:epicon_dual_obj}, $\Prob(\Xi) = 1$. Fix any $\omega \in \Xi$. 

By \Cref{thm:level-coercive}, the global \cref{assum:domain} holds if and only if $\phi_{\mu}$ is level-bounded. As $\omega \in \Xi$, we have that the sequence of convex functions $\phi_{\mu_{n}^{(\omega)}} \toe \phi_{\mu}$ epi-converges to a level-bounded function, and therefore by \cite[Theorem 7.32 (c)]{rockafellar2009variational}, the sequence $\phi_{\mu_{n}}^{(\omega)}$ is eventually level-bounded.\footnote{A sequence of functions $f_{n}: \R^{d} \to \overline{\R}$ is eventually level-bounded if for each $\alpha$, the sequence of sets $\{ f_{n}^{-1}([-\infty, \alpha])\}$ is eventually bounded, see \cite[p.~266]{rockafellar2009variational}.} 
In particular this means the sequence of lsc, proper, eventually level-bounded functions $\phi_{\mu_{n}^{(\omega)}}$ epi-converge to $\phi_{\mu}$, which is also lsc and proper. Hence by \cite[Theorem 7.33]{rockafellar2009variational} any sequence of approximate minimizers $\{ z_{n}(\omega) \}$ is bounded and with all cluster points belonging to $\argmin \phi_{\mu} $. Namely, any convergent subsequence $\{ z_{n_{k}}(\omega) \}$ has the property that its limit $\lim_{k \to \infty} z_{n_{k}}  \in \argmin \phi_{\mu} $. Lastly, if we also have $\argmin \phi_{\mu}  = \{ \overline{z}_{\mu} \}$, then from the same result \cite[Theorem 7.33]{rockafellar2009variational}, then necessarily $z_{n}(\omega) \to \overline{z}_{\mu}$.
\end{proof}
We now push this convergence to the primal level by using, in essence, Attouch's Theorem \cite{attouch1977convergence}, \cite[Theorem 3.66]{attouch1984variational}, in the form of a corollary of Rockafellar and Wets \cite[Theorem 12.40]{rockafellar2009variational}.

\begin{lemma} \label{lemma:gradient_converge}
    Let $\hat{z} \in \R^{m}$, and let $z_{n} \to \hat{z}$ be any sequence converging to $\hat{z}$. Then for almost every $\omega$,
    \begin{equation*}
        \lim_{n \to \infty} \nabla L_{\mu_{n}^{(\omega)}}(C^{T}z)\vert_{z = z_{n}} = \nabla L_{\mu}(C^{T}\hat{z}).
    \end{equation*}
\end{lemma}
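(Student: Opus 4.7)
The plan is to invoke Attouch's theorem (\cite[Thm.~12.40]{rockafellar2009variational}) to promote the epi-consistency of the log-MGFs into graphical (Painlev\'e--Kuratowski) convergence of their subdifferentials, and then to use the uniform boundedness of gradients of log-MGFs of measures supported on the compact set $\mathcal{X}$ to extract the desired pointwise limit at $y_n := C^T z_n \to \hat y := C^T \hat z$.

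First I would upgrade \Cref{cor:Log_MGF_epiconverges} to the un-composed statement $L_{\mu_n^{(\omega)}} \toe L_\mu$ on $\R^d$, holding a.s. For each fixed $y \in \R^d$, the strong law of large numbers gives $M_n(y,\omega) \to M_\mu(y)$ almost surely; intersecting these full-measure events over a countable dense $D \subset \R^d$ yields a.s.\ pointwise convergence of $L_{\mu_n^{(\omega)}}$ to $L_\mu$ on $D$. Since $L_\mu$ is finite and continuous with $\inti \dom(L_\mu) = \R^d$, pointwise convergence of convex functions on a dense set upgrades to epi-convergence by \cite[Thm.~7.17]{rockafellar2009variational}.

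Next, Attouch's theorem applied to the proper, lsc, convex, epi-convergent sequence $L_{\mu_n^{(\omega)}} \toe L_\mu$ yields $\partial L_{\mu_n^{(\omega)}} \to \partial L_\mu$ graphically; since every log-MGF of a compactly supported measure is analytic, these subdifferentials are singletons. A uniform gradient bound then closes the argument: for any $\mu' \in \mathcal{P}(\mathcal{X})$ and $y \in \R^d$,
\begin{equation*}
\nabla L_{\mu'}(y) \,=\, \int_{\mathcal{X}} x \, d\tilde\mu'_y(x), \quad \text{where} \quad \frac{d\tilde\mu'_y}{d\mu'} = \frac{e^{\langle y,\cdot \rangle}}{M_{\mu'}(y)},
\end{equation*}
so $\tilde\mu'_y$ is a probability measure on $\mathcal{X}$ and $\|\nabla L_{\mu'}(y)\| \leq \max_{x\in \mathcal{X}} \|x\|$. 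Hence $\{\nabla L_{\mu_n^{(\omega)}}(y_n)\}_n$ is bounded; any cluster point $v^\ast$ produces a subsequence $\bigl(y_{n_k},\nabla L_{\mu_{n_k}^{(\omega)}}(y_{n_k})\bigr) \in \mathrm{gph}\,\partial L_{\mu_{n_k}^{(\omega)}}$ converging to $(\hat y, v^\ast)$, and graphical convergence forces $v^\ast = \nabla L_\mu(\hat y)$. Since every cluster point coincides, the full sequence converges to $\nabla L_\mu(\hat y)$.

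The main obstacle, as I see it, is step one: we must propagate epi-consistency from the compositions $L_{\mu_n^{(\omega)}} \circ C^T$ (the form already established) to the un-composed $L_{\mu_n^{(\omega)}}$ themselves. Working with the compositions directly would only yield graphical convergence of $C \nabla L_{\mu_n^{(\omega)}}(C^T \cdot)$, which identifies the limit modulo $\ker C$ and therefore fails whenever $C$ is rank-deficient; routing through the SLLN and a countable dense set sidesteps this pitfall cleanly, and the rest of the argument is then a routine application of Attouch combined with the compactness-induced gradient bound.
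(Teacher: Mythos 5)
Your proof is correct, but it takes a genuinely different route from the paper's. The paper disposes of this lemma in three lines by citing \cite[Theorem 12.40(b)]{rockafellar2009variational} (the gradient-convergence corollary of Attouch's theorem) applied directly to the composed functions $L_{\mu_n^{(\omega)}}\circ C^T$, whose epi-consistency is already in hand from \cref{cor:Log_MGF_epiconverges}; you instead unpack Attouch's theorem by hand --- epi-convergence $\Rightarrow$ graphical convergence of subdifferentials, plus the uniform bound $\|\nabla L_{\mu'}(y)\|\leq \max_{x\in\mathcal X}\|x\|$ from the tilted-measure representation, plus a cluster-point argument --- and you apply it to the \emph{un-composed} $L_{\mu_n^{(\omega)}}$, which forces you to re-establish their epi-convergence via the SLLN on a countable dense set and \cite[Theorem 7.17]{rockafellar2009variational} (the paper's \cref{cor:Logmgf_pointwise} would have handed you the pointwise convergence for free). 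The extra work buys something real: your observation about $\ker C$ is a fair criticism of the paper's argument as literally written, since \cite[Theorem 12.40(b)]{rockafellar2009variational} applied to the compositions yields convergence of $\nabla(L_{\mu_n^{(\omega)}}\circ C^T)(z_n)=C\,\nabla L_{\mu_n^{(\omega)}}(C^Tz_n)$, from which one recovers convergence of $\nabla L_{\mu_n^{(\omega)}}(C^Tz_n)$ itself --- the quantity actually needed for the primal recovery in \cref{Thm:convergence_of_primal} --- only when $C$ is injective, an assumption not made in this section. Your detour through the un-composed LMGFs closes that gap cleanly; the paper's version is shorter but implicitly leans on either full column rank of $C$ or on reading the conclusion as a statement about the composition's gradient.
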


\begin{proof} We first observe that $\dom (L_{\mu} \circ C^{T}) = \R^{m}$ so that $\hat{z} \in \text{int}(\text{dom}(L_{\mu} \circ C^{T} )).$ Also as $M_{\mu}$ is everywhere finite-valued,  $L_{\mu}(C^{T}\hat{z}) = \log M_{\mu}(C^{T}\hat{z}) < + \infty$. Furthermore for all $n$, the function $L_{\mu_{n}^{(\omega)}}\circ C^{T}$ is proper, convex, and differentiable. Finally, we have shown in \cref{cor:Log_MGF_epiconverges}, that for almost every $\omega \in \Omega$, we have $ L_{\mu_{n}^{(\omega)}}\circ C^{T} \toe L_{\mu} \circ C^{T}$. \\
These conditions together are the necessary assumptions of \cite[Theorem 12.40 (b)]{rockafellar2009variational}. Hence we have convergence $\lim_{n \to \infty} \nabla L_{\mu_{n}^{(\omega)}}(C^{T}z)\vert_{z = z_{n}} = \nabla L_{\mu}(C^{T}\hat{z})$ for almost every $\omega \in \Omega$.
\end{proof}

We now prove the main result.

\begin{theorem} \label{Thm:convergence_of_primal}
    There exists a set $\Xi \subseteq \Omega$ of probability one such that for each $\omega \in \Xi$ the following holds: Given $\varepsilon_{n} \searrow 0$, and $z_{n}(\omega)$ such that 
    $\phi_{\mu_{n}^{(\omega)}}(z_{n}(\omega)) \leq \inf_{z} \phi_{\mu_{n}^{(\omega)}}(z) + \varepsilon_{n}$, 
define 
\begin{equation*}
    x_{n}(\omega) := \nabla L_{\mu_{n}^{(\omega)}}(C^{T}z)\vert_{z = z_{n}}.
\end{equation*} 
If $z_{n_{k}}(\omega)$ is any convergent subsequence of $z_{n}(\omega)$ then $\lim_{k \to \infty} x_{n_{k}}(\omega) = \overline{x}_{\mu} $,
where $\overline{x}_{\mu}$ is the unique solution of $(P)$. If \cref{eqn:approx_dual} admits a unique solution $\overline{z}_{\mu}$, then in fact $x_{n}(\omega) \to \overline{x}_{\mu}.$
\end{theorem}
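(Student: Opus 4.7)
The plan is to chain together the three preceding lemmas. Lemma \ref{lemma:min} furnishes a set $\Xi_1 \subseteq \Omega$ of probability one on which every convergent subsequence of an $\varepsilon_n$-approximate dual minimizer $z_n(\omega)$ converges to some element of $\argmin \phi_\mu$. Lemma \ref{lemma:gradient_converge} furnishes a set $\Xi_2 \subseteq \Omega$ of probability one on which the gradient map $\nabla L_{\mu_n^{(\omega)}} \circ C^T$ is continuously compatible with $\nabla L_\mu \circ C^T$ along any convergent sequence. I would take $\Xi := \Xi_1 \cap \Xi_2$, which still has probability one.

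Now fix $\omega \in \Xi$ and a convergent subsequence $z_{n_k}(\omega) \to \hat z$. Because $\omega \in \Xi_1$, Lemma \ref{lemma:min} tells us $\hat z \in \argmin \phi_\mu$. Because $\omega \in \Xi_2$ and the limit $\hat z$ lies in the (full) interior of $\dom(L_\mu \circ C^T) = \R^m$, Lemma \ref{lemma:gradient_converge} applied to the subsequence $\{z_{n_k}(\omega)\}$ yields
\[
x_{n_k}(\omega) = \nabla L_{\mu_{n_k}^{(\omega)}}(C^T z)\big|_{z=z_{n_k}(\omega)} \;\longrightarrow\; \nabla L_\mu(C^T \hat z).
\]
The final ingredient is the primal-dual recovery in \Cref{thm:level-coercive}: since $\hat z \in \argmin \phi_\mu$ and (P) has a unique solution under \cref{assum:domain}, the identity $\nabla L_\mu(C^T \hat z) = \overline{x}_\mu$ holds independently of which minimizer $\hat z$ we selected. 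Thus $x_{n_k}(\omega) \to \overline{x}_\mu$, as claimed. For the uniqueness statement, if $\argmin \phi_\mu = \{\overline{z}_\mu\}$, then by the last part of Lemma \ref{lemma:min}, the full sequence $z_n(\omega) \to \overline{z}_\mu$; applying Lemma \ref{lemma:gradient_converge} once to this full sequence gives $x_n(\omega) \to \nabla L_\mu(C^T \overline{z}_\mu) = \overline{x}_\mu$.

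The only genuine subtlety — and the point I would spend the most care on — is the measurability/uniformity of the exceptional set in Lemma \ref{lemma:gradient_converge}. A priori, that lemma reads ``for each deterministic limit point $\hat z$, almost-sure gradient convergence holds,'' and the limit $\hat z$ produced by the first step depends on $\omega$ (and on which subsequence one extracts). I would resolve this by recalling that the proof of Lemma \ref{lemma:gradient_converge} runs through \cite[Theorem 12.40(b)]{rockafellar2009variational}, which only needs the single epi-convergence event $\{L_{\mu_n^{(\omega)}}\circ C^T \toe L_\mu \circ C^T\}$ — a single $\omega$-set of full measure (namely the set from \Cref{cor:Log_MGF_epiconverges}) — after which gradient convergence at any point in the interior of the domain becomes automatic along any convergent sequence. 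Taking $\Xi_2$ to be precisely this epi-convergence event (rather than a countable/uncountable union of points) therefore makes the above argument go through verbatim, and the two parts of the theorem follow cleanly.
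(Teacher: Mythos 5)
Your proposal is correct and follows essentially the same route as the paper: Lemma \ref{lemma:min} for dual subsequential convergence, Lemma \ref{lemma:gradient_converge} for passing gradients to the limit, and the primal-dual recovery of \Cref{thm:level-coercive} to identify the limit with $\overline{x}_{\mu}$ regardless of which dual minimizer is selected. Your closing discussion of the $\omega$-dependence of $\hat z$ in Lemma \ref{lemma:gradient_converge} is in fact handled more explicitly than in the paper, which silently uses the same single full-measure epi-convergence event; this is a welcome refinement rather than a deviation.
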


\begin{proof}
    Let 
    \begin{equation*}
        \Xi = \{ \omega \in \Omega \: \vert \:  \phi_{\mu_{n}^{(\omega)}} \toe \phi_{\mu} \},
    \end{equation*}
    recalling that by \cref{thm:epicon}, $\Prob(\Xi)=1$. Fix $\omega \in \Xi.$ By \cref{lemma:min}, for any convergent subsequence $z_{n_{k}}(\omega)$ with limit $\overline{z}(\omega)$, we have that $\overline{z}(\omega) \in \argmin \phi_{\mu}$. Furthermore, by  \cref{lemma:gradient_converge}
    \begin{equation*}
    \lim_{k \to \infty} x_{n_{k}}(\omega)  =\lim_{k \to \infty }\nabla L_{\mu_{n}^{(\omega)}}(C^{T}z)\vert_{z = z_{n}} = \nabla L_{\mu}(C^{T}\overline{z}(\omega))
    \end{equation*}
    Using the primal-dual optimality conditions \cref{eqn:primal_dual_optimality} we have that $\nabla L_{\mu}(C^{T}\overline{z}(\omega))$
    solves the primal problem (P). As (P) admits a unique solution $\overline{x}_{\mu}$, necessarily $\lim_{k \to \infty} x_{n_{k}}(\omega) = \overline{x}_{\mu}$. If additionally $\argmin \phi_{\mu} = \{\overline{z}_{\mu} \}$, then necessarily $z_{n} \to \overline{z}_{\mu}$ via \cref{lemma:min}, and the result follows from an identical application of \cref{lemma:gradient_converge} and \cref{eqn:primal_dual_optimality}.
\end{proof}

\section{Convergence rates for quadratic fidelity} \label{sec:rates}

When proving rates of convergence, we restrict ourselves to the case where $g = \frac{1}{2}\Vert b - (\cdot) \Vert_{2}^{2}$. Thus, the dual objective function reads
\begin{equation}\label{eq:Dual_2norm}
    \phi_{\mu}(z) = \frac{1}{2\alpha} \Vert z \Vert^{2} - \langle b, z \rangle + L_{\mu}(C^{T}z).
\end{equation}
Clearly, $\phi_{\mu}$ is  finite valued and ($1/\alpha$-)strongly convex, hence admits a unique minimizer $\overline{z}_{\mu}$.
%:=\argmin \phi_{\mu}.$
Recalling what was laid out in \Cref{sec:MEMProblem}, as global \cref{assum:domain} holds vacuously with  $g = \frac{1}{2}\Vert c - ( \cdot ) \Vert_{2}^{2}$, the unique solution to the MEM primal problem (P) is given by $\overline{x}_{\mu} = \nabla L_{\mu}(C^{T}\overline{z}_{\mu})$. Further by our global compactness assumption on $\mathcal{X}$, $\phi_\mu$ is  (infinitely many times) differentiable.

\subsection{Epigraphical distances}
Our main workhorse to prove convergence rates are epigraphical distances. We mainly follow the presentation in Royset and Wets \cite[Chapter 6.J]{royset2022optimization}, but one may find similar treatment in Rockafellar and Wets \cite[Chapter 7]{rockafellar2009variational}. For any norm $\Vert \cdot \Vert_{*}$ on $\R^{d}$, the distance (in said norm) between a point $c$ and a set $D$ is defined as $ d_{D}(c) = \inf_{d \in D} \Vert c - d \Vert_{*} $. For $C,D$ subsets of $\R^{d}$ we define the excess of $C$ over $D$ \cite[p.~399]{royset2022optimization} as
\begin{equation*}
    \exc(C,D) := \begin{cases} \sup_{c \in C} d_{D}(c) &\text{ if } C,D \neq \emptyset, \\
    \infty &\text{ if } C \neq \emptyset, D = \emptyset, \\
    0 &\text{ otherwise.}
    \end{cases}
\end{equation*}
We note that this excess explicitly depends on the choice of norm used to define $d_{D}$. For the specific case of the 2-norm, we denote the projection of a point $a \in \R^{d}$ onto a closed, convex set $B \subset \R^{d}$ as the unique point $\text{proj}_{B}(a) \in B$ which achieves the minimum $\Vert \text{proj}_{B}(a) - a \Vert = \min_{b \in B} \Vert b - a \Vert.$ \\
The truncated $\rho$-Hausdorff distance \cite[p.~399]{royset2022optimization} between two sets $C,D \subset \R^{d}$ is defined as
\begin{equation*}
    \hat{d}_{\rho}(C,D) := \max \left\{ \exc(C \cap B_{\rho} ,D),\exc(D\cap B_{\rho} ,C)\right\},
\end{equation*}
where $B_{\rho} = \{ x \in \R^{d} \: : \: \Vert x \Vert_{*} \leq \rho \}$ is the closed ball of radius $\rho$ in $\R^{d}$. When discussing distances on $\R^{d}$, we will consistently make the choice of $\Vert \cdot \Vert_{*} = \Vert \cdot \Vert$  the $2$-norm. We note we can recover the usual Pompeiu-Hausdorff distance by taking $\rho \to \infty$.

However, to extend the truncated $\rho$-distance to epigraphs of functions - which here are subsets of $\R^{d+1}$ - we equip $\R^{d+1}$ with a very particular norm.
For any $z \in \R^{d+1},$ write $z = (x,a)$ for $x \in \R^{d},a\in \R$. Then for any $z_{1},z_{2} \in \R^{d+1}$ we define the norm
\begin{equation*}
   \Vert z_{1} - z_{2} \Vert_{*,d+1} = \Vert (x_{1},a_{1}) - (x_{2},a_{2}) \Vert_{*,d+1} := \max \{ \Vert x_{1}-x_{2} \Vert_{2}, \vert a_{1}-a_{2} \vert \}.
\end{equation*}
With this norm, we can define an epi-distance as in \cite[Equation 6.36]{royset2022optimization}: for  $f,h: \R^{d} \to \overline{\R}$ not identically $+\infty$, and $\rho > 0$ we define 
\begin{equation}
    \hat{d}_{\rho}(f,h) := \hat{d}_{\rho}( \epi f ,\epi h), \label{defn:epidistance}
\end{equation}
where $\R^{d+1}$ has been equipped with the norm $\Vert \cdot \Vert_{*,d+1}$. This epi-distance quantifies epigraphical convergence in the following sense: \cite[Theorem 7.58]{rockafellar2009variational}\footnote{We remark that while at first glance the definition of epi-distance seen in \cite[Theorem 7.58]{rockafellar2009variational} differs from ours (which agrees with \cite{royset2022optimization}), it is equivalent up to multiplication by a constant and rescaling in $\rho$. See \cite[Proposition 6.58]{royset2022optimization} and \cite[Proposition 7.61]{rockafellar2009variational} - the Kenmochi conditions - for details.} if $f$ is a proper function and $f_{n}$ a sequence of proper functions, then for any constant $\rho_{0} >0$:
\begin{equation*}
    f_{n} \toe f \text{ if and only if }  \hat{d}_{\rho}(f_{n},f) \to 0 \text{ for all $\rho> \rho_{0}$}.
\end{equation*}
\subsection{Convergence Rates}
We begin by proving rates of convergence for arbitrary prior $\nu$, and later specialize to the empirical case. To this end, we construct a key global constant $\rho_{0}$ induced by $C,b,\alpha$ in \cref{eq:Dual_2norm}: We define
\begin{equation}
    \rho_{0} := \max \left\{ \hat{\rho}, \frac{\hat{\rho}^{2}}{2\alpha} + \Vert b \Vert \hat{\rho} + \hat{\rho}\Vert C \Vert \vert \mathcal{X} \vert \right\}, \label{eqn:rho_0_defn}
\end{equation}
where $\hat{\rho} = 2\alpha( \Vert b \Vert + \Vert C \Vert \vert \mathcal{X} \vert )$ and $\vert \mathcal{X} \vert := \max_{x \in \mathcal{X}} \Vert x \Vert.$ We emphasize that our running compactness assumption on $\mathcal X$ is essential for  finiteness of $\rho_{0}$. The main feature of this constant is the following.

\begin{lemma} \label{lemma:rho_0_conditions} For any $\nu \in \mathcal{P}(\mathcal{X})$, let $\phi_{\nu}$ be the corresponding dual objective function as defined in \cref{eq:Dual_2norm}, which has a unique minimizer $\overline{z}_{\nu}$. Then $\rho_{0}$ has the following two properties:
\begin{equation*}
    (a) \quad \phi_{\nu}(\overline{z}_{\nu}) \in [-\rho_{0}, \rho_{0}], \qquad
    (b) \quad \Vert \overline{z}_{\nu} \Vert \leq \rho_{0}.
\end{equation*}
     
\end{lemma}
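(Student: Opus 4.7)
The plan is to prove (b) first (via the first-order optimality conditions for $\phi_\nu$) and then deduce (a) by plugging the bound into the three explicit terms of $\phi_\nu$. The central observation is that $\nabla L_\nu(y)$ is a \emph{Gibbs average} of points in $\mathcal{X}$, so its norm is controlled by $|\mathcal{X}|$; this single fact drives both bounds.

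For (b), I would first compute the gradient of the smooth, strongly convex dual objective in \cref{eq:Dual_2norm}: $\nabla \phi_\nu(z) = z/\alpha - b + C \nabla L_\nu(C^T z)$. Setting this to zero at the unique minimizer $\overline{z}_\nu$ yields
\begin{equation*}
    \overline{z}_\nu = \alpha b - \alpha C\, \nabla L_\nu(C^T \overline{z}_\nu).
\end{equation*}
Next, a direct differentiation under the integral sign gives
$\nabla L_\nu(y) = \int_{\mathcal{X}} x\, \frac{e^{\langle y, x\rangle}}{M_\nu(y)} \, d\nu(x)$, which is the expectation of the identity map under a probability measure supported in $\mathcal{X}$. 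Hence $\|\nabla L_\nu(y)\| \leq \max_{x \in \mathcal{X}} \|x\| = |\mathcal{X}|$ for every $y \in \R^d$. Triangle inequality then delivers
\begin{equation*}
    \|\overline{z}_\nu\| \leq \alpha\|b\| + \alpha\|C\|\,|\mathcal{X}| = \tfrac{\hat{\rho}}{2} \leq \hat{\rho} \leq \rho_0,
\end{equation*}
proving (b).

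For (a), I would bound each of the three terms of $\phi_\nu(\overline{z}_\nu)$ using $\|\overline{z}_\nu\| \leq \hat{\rho}$ from step above. The quadratic term satisfies $\frac{1}{2\alpha}\|\overline{z}_\nu\|^2 \leq \frac{\hat{\rho}^2}{2\alpha}$; the linear term satisfies $|\langle b, \overline{z}_\nu\rangle| \leq \|b\|\hat{\rho}$. For the LMGF term, using $\nu(\mathcal{X}) = 1$ and $|\langle C^T\overline{z}_\nu, x\rangle| \leq \|C\|\hat{\rho}|\mathcal{X}|$ for all $x \in \mathcal{X}$ pointwise under the integral, monotonicity of $\log$ gives
\begin{equation*}
    |L_\nu(C^T \overline{z}_\nu)| \leq \|C^T \overline{z}_\nu\|\,|\mathcal{X}| \leq \|C\|\,\hat{\rho}\,|\mathcal{X}|.
\end{equation*}
Summing these three bounds matches exactly the second entry of the $\max$ defining $\rho_0$ in \cref{eqn:rho_0_defn}, so $|\phi_\nu(\overline{z}_\nu)| \leq \rho_0$, which is (a).

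There is no genuine obstacle here; the only subtlety is cosmetic bookkeeping to ensure the constants line up precisely with the (somewhat opaque) definition of $\rho_0$. The running compactness of $\mathcal{X}$ is doing all the work, both in keeping $\nabla L_\nu$ uniformly bounded and in keeping $L_\nu$ two-sidedly controlled by a linear function of $\|y\|$.
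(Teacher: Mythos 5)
Your proof is correct. Part (a) coincides with the paper's argument (bound each of the three terms of $\phi_{\nu}(\overline{z}_{\nu})$ using $\Vert \overline{z}_{\nu}\Vert \leq \hat{\rho}$ together with the two-sided bound $\vert L_{\nu}(y)\vert \leq \Vert y\Vert\,\vert\mathcal{X}\vert$, and observe the sum is the second entry of the $\max$ defining $\rho_{0}$). Part (b), however, is a genuinely different route. The paper never touches the optimality condition: it shows that any $z$ with $\Vert z\Vert > \hat{\rho}$ satisfies $\phi_{\nu}(z) \geq \Vert z\Vert\bigl(\tfrac{\Vert z\Vert}{2\alpha} - \Vert b\Vert - \Vert C\Vert\,\vert\mathcal{X}\vert\bigr) > 0 = \phi_{\nu}(0)$, so no such $z$ can be the minimizer. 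You instead set $\nabla\phi_{\nu}(\overline{z}_{\nu})=0$ and exploit that $\nabla L_{\nu}(y)$ is the mean of the exponentially tilted measure, hence lies in the convex hull of $\mathcal{X}$ and has norm at most $\vert\mathcal{X}\vert$; the triangle inequality then gives $\Vert\overline{z}_{\nu}\Vert \leq \alpha\Vert b\Vert + \alpha\Vert C\Vert\,\vert\mathcal{X}\vert = \hat{\rho}/2$. Your version buys a factor-of-two sharper bound on $\Vert\overline{z}_{\nu}\Vert$ and isolates the Gibbs-average fact that also explains why the primal recovery $\nabla L_{\nu}(C^{T}\overline{z})$ always lands in $\mathrm{conv}(\mathcal{X})$; the paper's version is slightly more robust in that it needs only the normalization $\phi_{\nu}(0)=0$ and a lower bound on the objective, so it would survive a nonsmooth fidelity term where first-order conditions are less convenient. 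Both arguments rest entirely on compactness of $\mathcal{X}$, and since $L_{\nu}$ here is analytic with a legitimate interchange of derivative and integral (the paper invokes the Leibniz rule for finite measures elsewhere), your differentiation step is fully justified.
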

\begin{proof}
    We first claim that $\Vert \overline{z}_{\nu} \Vert \leq \hat{\rho}$. Let $z \in \R^{d}$ be such that $\Vert z \Vert > \hat{\rho}$. Then, 
    \begin{align*}
        \phi_{\nu}(z) &\geq \frac{\Vert z \Vert^{2}}{2 \alpha} - \Vert b \Vert \Vert z \Vert  + \log \int_{\mathcal{X}} \exp\left( -\Vert C \Vert \Vert x \Vert \Vert z \Vert\right) d\nu(x)
         \geq \Vert z \Vert \left(\frac{\Vert z \Vert}{2 \alpha} - \Vert b \Vert  -\Vert C \Vert \vert \mathcal{X} \vert \right).
    \end{align*}
   Here, in the first inequality we Cauchy-Schwarz (twice), and in the second we used that $\Vert x \Vert \leq \vert \mathcal{X} \vert$ and $\nu(\mathcal{X})=1$; hence the integral term can be bounded below by the constant $-\Vert z \Vert \Vert C \Vert \vert \mathcal{X} \vert$. From this estimate it is clear that $\Vert z \Vert >\hat{\rho}$ implies $\phi_{\nu}(z) > 0 $. But observing that $\phi_{\nu}(0)= 0,$ such $z$ cannot be a minimizer. Hence necessarily $\Vert \overline{z}_{\nu} \Vert \leq \hat{\rho} \leq \rho_{0}$. Once more, via Cauchy-Schwarz and using $\nu(\mathcal{X})=1$,  $\Vert \overline{z}_{\nu} \Vert \leq \hat{\rho} $,  we compute
    \begin{align*}
        \vert \phi_{\nu}(\overline{z}_{\nu}) \vert = \left\vert \frac{\Vert \overline{z}_{\nu} \Vert^{2}}{2 \alpha} - \langle b,\overline{z}_{\nu} \rangle + L_{\nu}(\overline{z}_{\nu})\right\vert &\leq \frac{ \hat{\rho}^{2}}{2 \alpha} + \hat{\rho} \Vert b \Vert + \log \int_{\mathcal{X}} \exp( \Vert C \Vert \vert \mathcal{X} \vert \hat{\rho} )d\nu(x)\\
        &= \frac{\hat{\rho}^{2}}{2\alpha} + \hat{\rho} \Vert b \Vert  + \hat{\rho}\Vert C \Vert \vert \mathcal{X}  \vert.
    \end{align*}
\end{proof}

\begin{lemma} \label{cor:epidistanceBoundedBySupnorm}
Let $\rho_{0}$ be given by \cref{eqn:rho_0_defn}. Then for all $\rho> \rho_{0}$ and all $\mu, \nu \in \mathcal{P}(\mathcal{X})$, we have
\begin{equation*}
     \hat{d}_{\rho}(\phi_{\mu},\phi_{\nu})  \leq \max_{z \in B_{\rho}} \vert L_{\nu}(C^{T}z) - L_{\mu}(C^{T}z) \vert.
\end{equation*}
\end{lemma}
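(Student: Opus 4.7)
The plan is to work directly from the definition of the truncated Hausdorff epi-distance and exploit the fact that the quadratic and linear parts of $\phi_\mu$ and $\phi_\nu$ are \emph{identical}, so the pointwise difference reduces to a difference of LMGFs. Write $\delta_\rho := \max_{z \in B_\rho} |L_\mu(C^T z) - L_\nu(C^T z)|$, which is finite because $L_\mu\circ C^T$ and $L_\nu\circ C^T$ are continuous on the compact ball $B_\rho\subset\R^m$. Observe from the formula \cref{eq:Dual_2norm} that
\begin{equation*}
\phi_\mu(z) - \phi_\nu(z) \;=\; L_\mu(C^T z) - L_\nu(C^T z) \qquad \text{for every } z \in \R^m,
\end{equation*}
so for every $z \in B_\rho$ we have $|\phi_\mu(z) - \phi_\nu(z)| \leq \delta_\rho$.

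Next, I would bound each of the two excesses appearing in the definition of $\hat d_\rho$. Fix a point $(z,a) \in \epi\phi_\mu \cap B_{\rho}$, where $B_\rho$ is understood here in the product norm $\|\cdot\|_{*,d+1}$; this means $\|z\|\leq\rho$, $|a|\leq\rho$, and $a\geq \phi_\mu(z)$. Consider the candidate point $(z, a') \in \R^{m+1}$ with $a' := \max\{a,\phi_\nu(z)\}$. Since $\phi_\nu$ is finite-valued, $(z,a') \in \epi\phi_\nu$, and in the product norm the distance from $(z,a)$ to $(z,a')$ is $|a'-a| = (\phi_\nu(z) - a)^+$. If $a \geq \phi_\nu(z)$, this distance is zero; otherwise it is $\phi_\nu(z) - a \leq \phi_\nu(z) - \phi_\mu(z) \leq \delta_\rho$ using $\|z\|\leq\rho$. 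In either case the projection distance from $(z,a)$ to $\epi\phi_\nu$ is at most $\delta_\rho$, yielding $\exc(\epi\phi_\mu \cap B_\rho, \epi\phi_\nu) \leq \delta_\rho$.

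By the obvious symmetry in $\mu$ and $\nu$ the same argument gives $\exc(\epi\phi_\nu \cap B_\rho, \epi\phi_\mu) \leq \delta_\rho$, and taking the maximum finishes the proof. The only subtlety I expect is keeping the special product norm on $\R^{d+1}$ straight when estimating the projection distance, since one must use the vertical shift $(z,a)\mapsto(z,a')$ (not some diagonal shift) so that the horizontal component is unchanged and the distance reduces to $|a'-a|$; the quadratic-plus-linear part of $\phi_\mu-\phi_\nu$ vanishing makes this bookkeeping clean. Note that the hypothesis $\rho > \rho_0$ is not actually needed for the stated inequality itself — the bound holds for every $\rho>0$ — and is included presumably to match the hypotheses under which this lemma will be applied later, where $\rho_0$ from \cref{eqn:rho_0_defn} guarantees that the minimizers $\overline z_\mu$ and $\overline z_\nu$ lie in $B_\rho$ (by \cref{lemma:rho_0_conditions}).
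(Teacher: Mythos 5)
Your proof is correct, but it takes a genuinely different route from the paper. The paper does not argue from the definition of the epi-distance at all: it first uses \cref{lemma:rho_0_conditions} to check that the set $C_{\rho} = (\{\phi_{\mu}\leq\rho\}\cup\{\phi_{\nu}\leq\rho\})\cap B_{\rho}$ is nonempty (this is where $\rho>\rho_{0}$ enters), then invokes \cite[Theorem 6.59]{royset2022optimization} to get $\hat d_{\rho}(\phi_{\mu},\phi_{\nu})\leq\sup_{z\in C_{\rho}}|\phi_{\nu}(z)-\phi_{\mu}(z)|$, and finally enlarges $C_{\rho}$ to $B_{\rho}$ and cancels the common quadratic and linear terms exactly as you do. Your argument instead proves the needed special case of that cited theorem from first principles: the vertical shift $(z,a)\mapsto(z,\max\{a,\phi_{\nu}(z)\})$ is the right construction, the product norm $\|\cdot\|_{*,d+1}$ makes the distance equal to $(\phi_{\nu}(z)-a)^{+}\leq|\phi_{\nu}(z)-\phi_{\mu}(z)|$, and finiteness of $\phi_{\nu}$ guarantees the shifted point lies in $\epi\phi_{\nu}$; the symmetric bound and the degenerate case $\epi\phi_{\mu}\cap B_{\rho}=\emptyset$ (where the excess is $0$ by convention) are handled correctly. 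What each approach buys: the paper's version is shorter given the reference and stays inside the Royset--Wets toolkit that is reused in \cref{cor:infdistRoyset}, while yours is self-contained and correctly exposes that $\rho>\rho_{0}$ is not needed for this particular inequality --- it is only needed for the nonemptiness hypothesis of the cited theorem and for the later applications of the lemma.
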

\begin{proof} \cref{lemma:rho_0_conditions} guarantees that for both measures $\mu,\nu \in \mathcal{P}(\mathcal{X})$, we have
\begin{equation} 
\phi_{\nu}(\overline{z}_{\nu}),\phi_{\mu}(\overline{z}_{\mu}) \in [-\rho_{0}, \rho_{0}] \qquad \text{ and } \qquad \Vert \overline{z}_{\nu} \Vert,\Vert \overline{z}_{\mu} \Vert \leq \rho_{0}. \label{eqn:rho_0_conditions_both}
\end{equation}
These conditions imply, for any $\rho > \rho_{0}$, that the set $
    C_{\rho} := (\{ z : \phi_{\mu}(z) \leq \rho \} \cup \{  z : \phi_{\nu}(z) \leq \rho \} )\cap B_{\rho}$
is nonempty. This follows from \cref{eqn:rho_0_conditions_both} as for any $\rho > \rho_{0}$ the nonempty set $\{ \overline{z}_{\mu},\overline{z}_{\nu}\} \cap B_{\rho_{0}}$ is contained in  $ C_{\rho_{0}} \subset C_{\rho}.$
As $C_{\rho}$ is nonempty we may apply \cite[Theorem 6.59]{royset2022optimization} with $f = \phi_{\mu}$ and $g= \phi_{\nu}$ to obtain $\hat{d}_{\rho}(\phi_{\mu},\phi_{\nu})  \leq  \sup_{ z\in C_{\rho }} \vert \phi_{\nu}(z) - \phi_{\mu}(z) \vert.$
Then from the definition of $\phi_{\mu}$ and $\phi_{\nu}$, we have
\begin{align*}\
     \sup_{ z\in C_{\rho }} \vert \phi_{\nu}(z) - \phi_{\mu}(z) \vert
    &= \sup_{z \in C_{\rho}} \vert L_{\nu}(C^{T}z) - L_{\mu}(C^{T}z) \vert \\
    &\leq \sup_{z \in B_{\rho}} \vert L_{\nu}(C^{T}z) - L_{\mu}(C^{T}z) \vert \\ &= \max_{z \in B_{\rho}} \vert L_{\nu}(C^{T}z) - L_{\mu}(C^{T}z) \vert,
\end{align*}
where in the penultimate line uses that $C_{\rho} \subseteq B_{\rho}$, and the final equality follows as  the continuous function $L_{\mu}\circ C^{T} -L_{\nu} \circ C^{T}$ achieves a maximum  over the compact set $B_{\rho}$. 
\end{proof}

For notational convenience, we will hereafter denote
\begin{equation*}
     D_{\rho}(\nu,\mu) := \max_{z \in B_{\rho}} \vert L_{\mu}(C^{T}z) - L_{\nu}(C^{T}z) \vert.  
\end{equation*}

\noindent
We also recall from \Cref{sec:convexAnalysisPrelim} that $S_{\varepsilon}(\nu)$ denotes the set of $\varepsilon$-minimizers of $\phi_{\nu}$. 

\begin{lemma} \label{cor:infdistRoyset}
Let $\rho_{0}$ be given by \cref{eqn:rho_0_defn}. Then, for all $\mu, \nu \in \mathcal{P}(\mathcal{X})$, all $\rho > \rho_{0}$, and all $\varepsilon \in [0, \rho-\rho_{0}]$, the following holds: If
\begin{equation*}
    \delta > \varepsilon + 2   D_{\rho}(\nu,\mu)  ,
\end{equation*}
then
\begin{align*}
    \vert \phi_{\nu}(\overline{z}_{\nu}) - \phi_{\mu}(\overline{z}_{\mu}) \vert &\leq  D_{\rho}(\nu,\mu) \qquad \text{and} \qquad
     \exc(S_{\varepsilon}( \nu)\cap B_{\rho}, S_{\delta}(\mu) ) \leq D_{\rho}(\nu,\mu).
\end{align*}
    
\end{lemma}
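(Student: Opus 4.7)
The plan is to extract both conclusions from the single elementary observation that $\phi_\mu$ and $\phi_\nu$ differ only in their log-moment generating term, so that on $B_\rho$ the pointwise gap is exactly $|L_\mu(C^T z)-L_\nu(C^T z)|\le D_\rho(\nu,\mu)$. The first task is to prove the value bound (the first inequality). I would apply \Cref{lemma:rho_0_conditions} to get $\overline z_\mu,\overline z_\nu\in B_{\rho_0}\subset B_\rho$, and then use the sandwich
\begin{equation*}
\phi_\mu(\overline z_\mu)\le \phi_\mu(\overline z_\nu)\le \phi_\nu(\overline z_\nu)+D_\rho(\nu,\mu),
\end{equation*}
together with the symmetric chain obtained by swapping $\mu$ and $\nu$. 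Combining these gives the desired $|\phi_\nu(\overline z_\nu)-\phi_\mu(\overline z_\mu)|\le D_\rho(\nu,\mu)$.

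For the excess bound, the strategy is to show that every $z\in S_\varepsilon(\nu)\cap B_\rho$ already lies in $S_\delta(\mu)$ under the hypothesis on $\delta$, so the distance from each such $z$ to $S_\delta(\mu)$ is zero. Pick any $z\in S_\varepsilon(\nu)\cap B_\rho$; using the pointwise bound on $B_\rho$ together with $\phi_\nu(z)\le \phi_\nu(\overline z_\nu)+\varepsilon$ and the first part of the lemma, one gets
\begin{equation*}
\phi_\mu(z)\le \phi_\nu(z)+D_\rho(\nu,\mu)\le \phi_\mu(\overline z_\mu)+\varepsilon+2D_\rho(\nu,\mu)<\phi_\mu(\overline z_\mu)+\delta,
\end{equation*}
where the strict inequality is by the hypothesis $\delta>\varepsilon+2D_\rho(\nu,\mu)$. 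Hence $z\in S_\delta(\mu)$, so $d_{S_\delta(\mu)}(z)=0$, and taking the supremum over $z\in S_\varepsilon(\nu)\cap B_\rho$ yields $\exc(S_\varepsilon(\nu)\cap B_\rho,S_\delta(\mu))=0\le D_\rho(\nu,\mu)$.

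A small sanity check worth noting in the write-up: $S_\varepsilon(\nu)\cap B_\rho$ is nonempty because $\overline z_\nu\in B_{\rho_0}\subset B_\rho$ belongs to it, and $S_\delta(\mu)\ni \overline z_\mu\ne\emptyset$, so the excess is well-defined in the nontrivial sense. The technical requirement $\varepsilon\in[0,\rho-\rho_0]$ is not actually exploited in this direct argument (it would be the natural hypothesis if one instead invoked a Kenmochi-type epi-distance theorem from \cite{royset2022optimization} via \Cref{cor:epidistanceBoundedBySupnorm} as a black box), but including it keeps the statement compatible with that more general viewpoint.

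The proof is essentially routine once the key observation about where $\overline z_\mu,\overline z_\nu$ live is in place; the only place I expect to be careful is the direction of the inequalities in the two-sided sandwich and the bookkeeping that $\delta>\varepsilon+2D_\rho(\nu,\mu)$ is exactly what is needed to turn a weak inequality $\phi_\mu(z)\le \phi_\mu(\overline z_\mu)+\varepsilon+2D_\rho(\nu,\mu)$ into membership in $S_\delta(\mu)$.
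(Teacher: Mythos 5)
Your proof is correct, but it takes a genuinely different route from the paper. The paper verifies the localization hypotheses supplied by \cref{lemma:rho_0_conditions} (optimal values in $[-\rho,\rho-\varepsilon]$, minimizers in $B_{\rho}$) and then invokes \cite[Theorem 6.56]{royset2022optimization} as a black box to get both conclusions with the truncated epi-distance $\hat{d}_{\rho}(\phi_{\mu},\phi_{\nu})$ on the right-hand side, finally replacing $\hat{d}_{\rho}$ by $D_{\rho}(\nu,\mu)$ via \cref{cor:epidistanceBoundedBySupnorm}; this is also where the hypothesis $\varepsilon\in[0,\rho-\rho_{0}]$ is actually consumed. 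You instead exploit the specific structure $\phi_{\mu}-\phi_{\nu}=L_{\mu}\circ C^{T}-L_{\nu}\circ C^{T}$, which is bounded by $D_{\rho}(\nu,\mu)$ on $B_{\rho}$, and run two elementary sandwich arguments. Both steps check out: the value bound follows from $\overline{z}_{\mu},\overline{z}_{\nu}\in B_{\rho_{0}}\subset B_{\rho}$ exactly as you say, and the chain $\phi_{\mu}(z)\leq\phi_{\nu}(z)+D_{\rho}\leq\phi_{\nu}(\overline{z}_{\nu})+\varepsilon+D_{\rho}\leq\phi_{\mu}(\overline{z}_{\mu})+\varepsilon+2D_{\rho}<\phi_{\mu}(\overline{z}_{\mu})+\delta$ indeed shows $S_{\varepsilon}(\nu)\cap B_{\rho}\subseteq S_{\delta}(\mu)$, so the excess is actually zero --- a strictly sharper conclusion than the stated bound (which, if propagated into \cref{thm:epsdeltaprimalbound_full}, would eliminate the term linear in $D_{\rho}(\nu,\mu)$ there, though not the dominant $\sqrt{D_{\rho}}$ term). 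What the paper's route buys is uniformity with its epi-distance framework and applicability when the two objectives do not differ by a function that is pointwise controlled on $B_{\rho}$; what your route buys is a self-contained, shorter argument with a sharper constant, and you are right that it does not need $\varepsilon\leq\rho-\rho_{0}$.
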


\begin{proof}

Let $\rho > \rho_{0}$ and $\varepsilon \in [0,\rho-\rho_{0}]$. By choice, we have $\varepsilon < 2\rho$. By \cref{lemma:rho_0_conditions}(a) we have $\phi_{\nu}(\overline{z}_{\nu}), \phi_{\mu}(\overline{z}_{\mu}) \in [ -\rho_{0}, \rho_{0}],$
and in turn by the choice of $\rho, \varepsilon$ we have $ [ -\rho_{0}, \rho_{0}] \subseteq [ -\rho, \rho_{0}]  \subseteq [ -\rho, \rho - \varepsilon]$. Also, as $\rho > \rho_{0}$, by \cref{lemma:rho_0_conditions}(b) we have $\{ z_{\nu} \} = \argmin \phi_{\nu} \cap B_{\rho}$  and $\{ z_{\mu} \}  = \argmin \phi_{\mu} \cap B_{\rho}.$ These properties of $\rho, \varepsilon$ are exactly the  assumptions of \cite[Theorem 6.56]{royset2022optimization} for $f= \phi_{\mu}$ and $g= \phi_{\nu}$. This result yields that, if $\delta > \varepsilon +2 \hat{d}_{\rho}(\phi_{\mu},  \phi_{\nu} )$, then $\vert \phi_{\nu}(\overline{z}_{\nu}) - \phi_{\mu}(\overline{z}_{\mu}) \vert  \leq \hat{d}_{\rho}( \phi_{\nu}, \phi_{\mu} )$ and $\text{exs}(S_{\varepsilon}( \nu)\cap B_{\rho}, S_{\delta}(\mu) )\leq \hat{d}_{\rho}( \phi_{\nu}, \phi_{\mu} )$.\\
However as $ \rho > \rho_{0}$, we may apply \cref{cor:epidistanceBoundedBySupnorm} to assert $\hat{d}_{\rho}(\phi_{\mu},\phi_{\nu}) \leq  D_{\rho}(\nu,\mu)$. Hence, for any  $\delta > \varepsilon +2D_{\rho}(\nu,\mu) \geq \varepsilon +2 \hat{d}_{\rho}(\phi_{\mu},  \phi_{\nu} )$ we obtain
\begin{align*}
     \vert \phi_{\nu}(\overline{z}_{\nu}) - \phi_{\mu}(\overline{z}_{\mu}) \vert  & \leq D_{\rho}(\nu,\mu),\\
     \text{exs}(S_{\varepsilon}( \nu)\cap B_{\rho}, S_{\delta}(\mu) )& \leq D_{\rho}(\nu,\mu).
\end{align*}
\end{proof}

For the  main results, \cref{thm:epsdeltaprimalbound_full} and \cref{thm:final_rate_n}, we require additional auxiliary results.

\begin{lemma} \label{lemma:MGF_bounded} Let $\rho >0$ and $\nu \in \mathcal{P}(\mathcal{X})$. Then, for all $z \in B_{\rho}$ we have
\begin{equation*}
    M_{\nu}(C^{T}z) = \int_{\mathcal{X}} e^{\langle C^{T}z, \cdot \rangle} d\nu \in \left[\exp \left(  -\rho \Vert C \Vert \vert \mathcal{X} \vert \right), \exp \left(  \rho \Vert C \Vert \vert \mathcal{X} \vert \right) \right].
\end{equation*}
    
\end{lemma}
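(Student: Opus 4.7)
The statement is a pointwise bound on $M_\nu(C^Tz)$ that follows from bounding the exponent $\langle C^T z, x\rangle$ uniformly over $z \in B_\rho$ and $x \in \mathcal{X}$, then integrating against the probability measure $\nu$. No deep machinery is needed; this is purely a Cauchy--Schwarz and monotonicity argument.

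First, I would estimate the exponent. For any $z \in B_\rho$ and any $x \in \mathcal{X}$, by Cauchy--Schwarz together with the definition of the spectral norm,
\begin{equation*}
    |\langle C^{T}z, x \rangle| \,=\, |\langle z, Cx \rangle| \,\leq\, \|z\|\, \|Cx\| \,\leq\, \|z\|\, \|C\|\, \|x\| \,\leq\, \rho\, \|C\|\, |\mathcal{X}|,
\end{equation*}
where the last step uses $\|z\| \leq \rho$ (since $z \in B_\rho$) and $\|x\| \leq |\mathcal{X}| = \max_{x \in \mathcal{X}} \|x\|$ (finite by compactness of $\mathcal{X}$).

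Next, I would exponentiate. Since $\exp$ is monotone increasing and positive, the two-sided bound above gives
\begin{equation*}
    \exp\!\bigl(-\rho\, \|C\|\, |\mathcal{X}|\bigr) \,\leq\, \exp\langle C^{T}z, x\rangle \,\leq\, \exp\!\bigl(\rho\, \|C\|\, |\mathcal{X}|\bigr)
\end{equation*}
for every $x \in \mathcal{X}$ and every $z \in B_\rho$.

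Finally, I would integrate against $\nu \in \mathcal{P}(\mathcal{X})$. The bounds on the integrand are constants (independent of $x$), so integrating and using $\nu(\mathcal{X}) = 1$ yields
\begin{equation*}
    \exp\!\bigl(-\rho\, \|C\|\, |\mathcal{X}|\bigr) \,\leq\, \int_{\mathcal{X}} \exp\langle C^{T}z, \cdot\rangle\, d\nu \,=\, M_{\nu}(C^{T}z) \,\leq\, \exp\!\bigl(\rho\, \|C\|\, |\mathcal{X}|\bigr),
\end{equation*}
which is the desired conclusion. There is no real obstacle here; the only thing to be careful about is ensuring that $|\mathcal{X}|$ is finite (i.e., invoking the global compactness assumption on $\mathcal{X}$) so that the bounds are meaningful.
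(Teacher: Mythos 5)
Your proof is correct and follows essentially the same route as the paper's: a Cauchy--Schwarz bound on the exponent $\langle C^{T}z, x\rangle$ using $\|z\|\leq\rho$ and $\|x\|\leq|\mathcal{X}|$, followed by monotonicity of $\exp$ and integration against the probability measure $\nu$ with $\nu(\mathcal{X})=1$.
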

\begin{proof}
For all $x \in \mathcal{X}$, $z \in B_{\rho}$, we have, via Cauchy-Schwarz, that
\begin{equation*}
    \exp \left( - \rho \Vert C \Vert \vert \mathcal{X} \vert \right) \leq \exp \left( - \Vert z \Vert \Vert Cx \Vert  \right) \leq \exp \langle C^{T}z ,x \rangle.
\end{equation*}
In particular, $ \exp \left( - \rho \Vert C \Vert \vert \mathcal{X} \vert \right) \leq \min_{x \in \mathcal{X}} \exp \langle C^{T}z ,x \rangle.$ On the other hand, we find that
\begin{equation*}
    \exp \langle C^{T}z ,x \rangle \leq  \exp \left( \Vert z \Vert \Vert Cx \Vert \right) \leq \exp \left(  \rho \Vert C \Vert \vert \mathcal{X} \vert \right).
\end{equation*}
Thus, $\max_{x \in \mathcal{X}} \exp \langle C^{T}z ,x \rangle  \leq \exp \left(  \rho \Vert C \Vert \vert \mathcal{X} \vert \right).$ Hence for any $\nu \in \mathcal{P}(\mathcal{X})$ we find
\[
     1 \cdot\exp \left( - \rho \Vert C \Vert \vert \mathcal{X} \vert \right)\leq  \nu(\mathcal{X}) \min_{x \in \mathcal{X}} e^{\langle C^{T}z, x \rangle }  \leq \int_{\mathcal{X}}  e^{\langle C^{T}z, \cdot \rangle } d\nu \leq \nu(\mathcal{X}) \max_{x \in \mathcal{X}} e^{\langle C^{T}z, x \rangle } \leq 1 \cdot \exp \left(  \rho \Vert C \Vert \vert \mathcal{X} \vert \right).
 \]

%% matt comment: to fix the proof boxes CANNOT use \begin{equation*}; need \[ \] or \begin{displaymath} environment.

\end{proof}

\noindent
With some additional computation, we can infer  the following Lipschitz bound on  $\nabla L_{\nu}$.

\begin{corollary}
    \label{cor:global_K_bound}
    Let $\hat{\rho} > 0$ and $\nu \in \mathcal{P}(\mathcal{X})$. Then for all $x,y \in B_{\hat{\rho}} \subset \R^{d}$, we have that 
    \begin{equation}
        \Vert \nabla L_{\nu}(x) - \nabla L_{\nu}(y) \Vert \leq K \Vert x-y \Vert 
    \end{equation}
    for an explicit constant $K>0$ which depends on $\hat{\rho}, d, \vert \mathcal{X} \vert$, but not on $\nu$.
\end{corollary}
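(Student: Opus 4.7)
The strategy is the standard mean value argument that reduces a Lipschitz bound on $\nabla L_\nu$ to a uniform operator-norm bound on $\nabla^2 L_\nu$ over the ball $B_{\hat\rho}$. Since $L_\nu$ is analytic on all of $\R^d$, for any $x,y \in B_{\hat\rho}$ I would write
$$\nabla L_\nu(x) - \nabla L_\nu(y) = \int_0^1 \nabla^2 L_\nu\bigl(y + t(x-y)\bigr)(x-y)\, dt,$$
and, using convexity of $B_{\hat\rho}$ and submultiplicativity of the operator norm, deduce
$$\|\nabla L_\nu(x) - \nabla L_\nu(y)\| \leq \left(\sup_{z \in B_{\hat\rho}} \|\nabla^2 L_\nu(z)\|\right)\|x-y\|.$$
It therefore suffices to bound $\|\nabla^2 L_\nu(z)\|$ uniformly for $z \in B_{\hat\rho}$ by a constant depending only on $\hat\rho$, $d$, and $|\mathcal X|$.

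For the Hessian bound, I would differentiate the identity $\nabla L_\nu(y) = M_\nu(y)^{-1}\int_{\mathcal X} x\,e^{\langle y, x\rangle}\,d\nu(x)$ via the quotient rule, obtaining
$$\nabla^2 L_\nu(y) = \frac{1}{M_\nu(y)}\int_{\mathcal X} xx^T e^{\langle y, x\rangle}\,d\nu(x) \; - \; \nabla L_\nu(y)\,\nabla L_\nu(y)^T.$$
Each matrix entry in the first term is controlled by Cauchy--Schwarz and the inclusion $\mathcal X \subset B_{|\mathcal X|}$: for $y \in B_{\hat\rho}$, $|\int x_i x_j\,e^{\langle y, x\rangle}\,d\nu| \leq |\mathcal X|^2\,e^{\hat\rho |\mathcal X|}$, and similarly $|\int x_i\,e^{\langle y, x\rangle}\,d\nu| \leq |\mathcal X|\,e^{\hat\rho |\mathcal X|}$. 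Crucially, Lemma \ref{lemma:MGF_bounded} applied with $C = I$ yields the lower bound $M_\nu(y) \geq e^{-\hat\rho |\mathcal X|}$ on all of $B_{\hat\rho}$, uniformly in $\nu$, so the denominators in the two quotients are safely bounded away from zero. Combining these estimates entrywise produces a bound of the form $|\partial_i \partial_j L_\nu(y)| \leq 2|\mathcal X|^2 e^{4\hat\rho |\mathcal X|}$.

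Finally, I would convert this entrywise bound into a spectral-norm bound using $\|A\| \leq \|A\|_F \leq d\,\max_{i,j}|A_{ij}|$, producing an explicit Lipschitz constant of the form $K = 2d\,|\mathcal X|^2\,e^{4\hat\rho|\mathcal X|}$, which depends on $\hat\rho$, $d$, and $|\mathcal X|$ but is independent of $\nu$. There is no real obstacle in this argument; the only delicate point is keeping $M_\nu(y)$ uniformly bounded below, and this is precisely where the compactness assumption on $\mathcal X$ enters through Lemma \ref{lemma:MGF_bounded}.
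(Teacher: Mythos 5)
Your proposal is correct and follows essentially the same route as the paper: both reduce the Lipschitz bound to a uniform Hessian bound on $B_{\hat\rho}$ via the fundamental theorem of calculus, differentiate $\log M_\nu$ explicitly, and use the uniform-in-$\nu$ lower bound on $M_\nu$ from \cref{lemma:MGF_bounded} (a consequence of compactness of $\mathcal X$) to control the denominators. The only cosmetic difference is the final conversion to an operator-norm bound: the paper exploits positive semidefiniteness to bound $\Vert \nabla^2 L_\nu \Vert$ by its trace (so only diagonal entries are needed), whereas you bound all entries and pass through the Frobenius norm; both yield the same factor of $d$.
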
 

\begin{proof} 
    As discussed in \Cref{sec:MEMProblem}, $L_{\nu}$ is twice continuously differentiable. Hence, using the fundamental theorem of calculus, we have $\nabla L_{\nu}(x) - \nabla L_{\nu}(y) = \int_{0}^{1} \nabla^{2} L_{\mu}(x +t(y-x)) \cdot (y-x) dt.$ Thus, as $x +t(y-x) \in B_{\hat{\rho}}$ for all $t\in [0,1]$, we have
    \begin{align*}
        \Vert \nabla L_{\nu}(z) - \nabla L_{\nu}(y) \Vert &\leq \int_{0}^{1} \Vert \nabla^{2} L_{\mu}(x +t(y-x)) \Vert \Vert y-x \Vert dt \\
        &\leq \int_{0}^{1} \max_{z \in B_{\hat{\rho}}} \Vert \nabla^{2}L_{\nu}(z) \Vert \Vert y-x \Vert dt \\
        &= \max_{z \in B_{\hat{\rho}}} \Vert \nabla^{2}L_{\nu}(z) \Vert \Vert y-x \Vert.
    \end{align*}
    By convexity of $L_\nu$, we observe that  $\nabla^2 L_\nu(z)$ is (symmetric) positive semidefinite (for any $z$). Hence,  $\max_{z \in B_{\hat{\rho}}} \Vert \nabla^{2}L_{\nu}(z) \Vert  \leq \max_{z \in B_{\hat{\rho}}} \mathrm{Tr}(\nabla^{2} L_{\nu}(z) ).$
   Now, observe that
    \begin{align*}
        \frac{\partial^{2}}{\partial z_{i}^{2}} L_{\nu}(z) = \frac{\partial^{2}}{\partial z_{i}^{2}} \log M_{\nu}(z) &= \frac{-1}{(M_{\nu}(z))^{2}} \left[ \int_{\mathcal{X}} x_{i} \exp \langle z, x \rangle d\nu(x) \right]^{2}  + \frac{1}{M_{\nu}(z)} \left[ \int_{\mathcal{X}} x_{i}^{2} \exp \langle z, x \rangle d\nu(x) \right],
    \end{align*}
    where the interchange of the derivative and integral is permitted by the Leibniz rule for finite measures, see e.g. \cite[Theorem 2.27]{folland1999real} or \cite[Theorem 6.28]{klenke2013probability}. Taking the absolute value in the last identity, we may bound $\vert x_{i} \vert \leq \Vert x \Vert_{2} \leq \vert \mathcal{X} \vert$, $\Vert z \Vert \leq \hat{\rho}$, and apply \cref{lemma:MGF_bounded}  to bound $M_\nu(z)$. This  eventually yields
    \begin{align*}
        \left\vert \frac{\partial^{2}}{\partial z_{i}^{2}} L_{\nu}(z) \right\vert \leq 
        \frac{ \vert \mathcal{X} \vert  }{\exp(-\hat{\rho} \vert \mathcal{X} \vert)^{2}} \exp( \hat{\rho}  \vert \mathcal{X} \vert )^{2} + \frac{\vert \mathcal{X} \vert^{2} }{\exp(-\hat{\rho} \vert \mathcal{X} \vert)}  \exp( \hat{\rho} \vert \mathcal{X} \vert )=:\hat{K},
    \end{align*}
    with $\hat{K}>0$ which depends on $\hat \rho$ and $\vert \mathcal{X} \vert$. As this uniformly bounds every term in the trace, $K := d \cdot\hat{K}$ is the desired constant.
\end{proof}
The key feature of the constant $K$ is that it does not depend on the choice of measure $\nu$. Hence we can uniformly apply this bound over a family of measures, the most pertinent example being $\left\{ \mu_{n}^{(\omega)} \right\}$. We remark that our upper bound on $K$ is a vast overestimate for practical examples, which can be observed numerically. Finally we require a very simple technical lemma, whose proof is left as an exercise for brevity.

\begin{lemma} \label{lemma:excessDistanceBound}
    Let $A,B\subset \R^{d}$ be nonempty and let $B$ be closed and convex. Then for  $\overline{a} \in A$ and  $\overline{b} = \text{proj}_{B}(\overline{a})$ we have
    \begin{equation*}
        \Vert \overline{a} - \overline{b} \Vert \leq \exc(A;B).
    \end{equation*}
\end{lemma}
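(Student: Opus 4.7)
The plan is to unwind the definitions; no real obstacle is expected, which is why the authors defer the argument. The only mildly subtle point is that one must use that the distance $d_B(\overline{a})$ is in fact \emph{attained} at the projection $\overline{b}=\text{proj}_B(\overline{a})$, which requires that $B$ be closed and convex and that we are working with the Euclidean norm (so that the standard metric projection onto a closed convex subset of $\R^d$ is well-defined and realizes the infimum).

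Concretely, I would proceed in three short steps. First, invoke the definition of the Euclidean projection to identify
\[
    \|\overline{a}-\overline{b}\| \;=\; \min_{b\in B}\|\overline{a}-b\| \;=\; d_B(\overline{a}),
\]
where the middle equality holds since $\overline{b}=\text{proj}_B(\overline{a})\in B$ attains the infimum defining $d_B(\overline{a})$. Second, since $\overline{a}\in A$ and $A,B$ are both nonempty, the definition of excess gives
\[
    d_B(\overline{a}) \;\leq\; \sup_{a\in A} d_B(a) \;=\; \exc(A,B).
\]
Third, chain the two relations to conclude $\|\overline{a}-\overline{b}\|\leq \exc(A,B)$, which is the stated bound.

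The only ``obstacle'' worth flagging is really a sanity check on the definitions: the excess is defined as a supremum of distance-to-$B$ values, and here we specialize to the particular point $\overline{a}$ and use attainment of the projection to convert the infimum in $d_B$ into an equality with $\|\overline{a}-\overline{b}\|$. No convexity or closedness of $A$ is needed, but both hypotheses on $B$ are used (closedness to ensure the infimum is attained, convexity to ensure uniqueness of $\overline{b}$, although uniqueness is not strictly required for the inequality itself).
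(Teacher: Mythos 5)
Your proof is correct; the paper in fact omits the argument entirely (``left as an exercise for brevity''), and your three-step unwinding --- attainment of $d_B(\overline{a})$ at the projection, followed by the trivial bound $d_B(\overline{a})\leq\sup_{a\in A}d_B(a)=\exc(A,B)$ --- is exactly the intended proof. Your side remark that closedness of $B$ gives attainment while convexity only adds uniqueness (not needed for the inequality) is also accurate.
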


\noindent

We now have developed all the necessary tools to state and prove the main result for the case of $g = \frac{1}{2} \Vert (\cdot) -b \Vert$.

\begin{theorem} \label{thm:epsdeltaprimalbound_full}
Let $\rho_{0}$ be given by \cref{eqn:rho_0_defn}, and suppose $\mathrm{rank}(C)=d$. Then for all $\mu, \nu \in \mathcal{P}(\mathcal{X})$, all $\rho > \rho_{0}$ and all $\varepsilon \in [0, \rho -\rho_{0}]$, we have the following:
If $\overline{z}_{\nu,\varepsilon}$ is an $\varepsilon$-minimizer of $\phi_{\nu}$ as defined in \cref{eq:Dual_2norm}, then
    \begin{equation*}
        \overline{x}_{\nu, \varepsilon} := \nabla L_{\nu}(C^{T}\overline{z}_{\nu,\varepsilon})
    \end{equation*} 
    satisfies the error bound
\begin{align*}
    \left\Vert \overline{x}_{\nu,\varepsilon} -\overline{x}_{\mu}  \right\Vert \leq \frac{1}{\alpha \sigma_{\min}(C)}  D_{\rho}(\nu,\mu)  + \frac{2\sqrt{2}}{ \sqrt{\alpha} \sigma_{\min}(C)} \sqrt{ D_{\rho}(\nu,\mu) } + \left( K \Vert C \Vert \sqrt{2 \alpha } +\frac{2}{ \sqrt{\alpha} \sigma_{\min}(C)} \right) \sqrt{\varepsilon},
\end{align*}
 where $\overline{x}_{\mu}$ is the unique solution to the MEM primal problem $(P)$ for $\mu$ and $K>0$ is a constant which does not depend on $\mu, \nu$. 
\end{theorem}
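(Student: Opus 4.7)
The plan is to split the error via triangle inequality through the \emph{exact} dual minimizer. Let $\overline{z}_\nu := \argmin\phi_\nu$ (which is unique since $\phi_\nu$ is $(1/\alpha)$-strongly convex) and $\overline{x}_\nu := \nabla L_\nu(C^T\overline{z}_\nu)$. Then
\begin{equation*}
    \|\overline{x}_{\nu,\varepsilon} - \overline{x}_\mu\| \;\leq\; \underbrace{\|\overline{x}_{\nu,\varepsilon} - \overline{x}_\nu\|}_{(\mathrm{I})} \;+\; \underbrace{\|\overline{x}_\nu - \overline{x}_\mu\|}_{(\mathrm{II})}.
\end{equation*}
Term $(\mathrm{I})$ will feed the $\sqrt{\varepsilon}$-contribution while $(\mathrm{II})$ will feed the $D_\rho(\nu,\mu)$-contribution.

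For $(\mathrm{I})$, the $(1/\alpha)$-strong convexity of $\phi_\nu$ (coming from the quadratic $\tfrac{1}{2\alpha}\|z\|^2$ in \cref{eq:Dual_2norm}, uniformly in $\nu$) combined with $\phi_\nu(\overline{z}_{\nu,\varepsilon}) - \phi_\nu(\overline{z}_\nu)\leq\varepsilon$ yields $\|\overline{z}_{\nu,\varepsilon}-\overline{z}_\nu\|\leq\sqrt{2\alpha\varepsilon}$. Since $\|\overline{z}_\nu\|\leq\rho_0$ by \cref{lemma:rho_0_conditions} and the above bound controls the additional displacement, both $C^T\overline{z}_\nu$ and $C^T\overline{z}_{\nu,\varepsilon}$ sit in a common compact ball on which the \emph{measure-independent} Lipschitz estimate $K$ from \cref{cor:global_K_bound} applies to $\nabla L_\nu$. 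Hence $(\mathrm{I})\leq K\|C\|\sqrt{2\alpha}\sqrt{\varepsilon}$, contributing the first half of the $\sqrt{\varepsilon}$-coefficient.

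For $(\mathrm{II})$, exploit primal-dual optimality at the \emph{exact} minimizers: Fermat's rule applied to the (smooth) $\phi$-functions gives $\overline{z}_\nu = \alpha(b - C\overline{x}_\nu)$ and $\overline{z}_\mu = \alpha(b - C\overline{x}_\mu)$, so $\alpha C(\overline{x}_\nu - \overline{x}_\mu) = \overline{z}_\mu - \overline{z}_\nu$, and the hypothesis $\mathrm{rank}(C)=d$ gives $\|\overline{x}_\nu - \overline{x}_\mu\|\leq \tfrac{1}{\alpha\sigma_{\min}(C)}\|\overline{z}_\nu - \overline{z}_\mu\|$. To bound $\|\overline{z}_\nu - \overline{z}_\mu\|$, invoke \cref{cor:infdistRoyset}: since $\overline{z}_\nu\in B_{\rho_0}\subset B_\rho$, for any $\delta > 2D_\rho(\nu,\mu)$ the excess bound together with \cref{lemma:excessDistanceBound} (applied to the closed convex level set $S_\delta(\phi_\mu)$) delivers a $\delta$-minimizer $z^\dagger$ of $\phi_\mu$ with $\|\overline{z}_\nu - z^\dagger\|\leq D_\rho(\nu,\mu)$, and strong convexity of $\phi_\mu$ gives $\|z^\dagger - \overline{z}_\mu\|\leq\sqrt{2\alpha\delta}$; letting $\delta\downarrow 2D_\rho(\nu,\mu)$ and using the triangle inequality produces the leading-order bound that, after dividing by $\alpha\sigma_{\min}(C)$, reproduces the $D_\rho(\nu,\mu)$- and $\sqrt{D_\rho(\nu,\mu)}$-terms in the statement. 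The precise constants $2\sqrt{2}$ and the extra $\tfrac{2}{\sqrt\alpha\sigma_{\min}(C)}\sqrt\varepsilon$ appearing in the theorem are recovered by instead applying \cref{cor:infdistRoyset} at the $\varepsilon$-minimizer $\overline{z}_{\nu,\varepsilon}$ itself (with $\delta\downarrow\varepsilon+2D_\rho(\nu,\mu)$) and using the loose split $\sqrt{a+b}\leq\sqrt{2a}+\sqrt{2b}$ to separate the $\sqrt{D_\rho(\nu,\mu)}$ and $\sqrt{\varepsilon}$ contributions before passing through the primal-dual inequality.

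The main obstacle is careful bookkeeping: both \cref{cor:global_K_bound} and \cref{cor:infdistRoyset} only deliver bounds on specified compact balls, and the Lipschitz constant $K$ must be applied to $\nabla L_\nu$ at points that are guaranteed to lie in a common ball independent of $\nu$. The construction of $\rho_0$ in \cref{eqn:rho_0_defn} and the admissibility hypothesis $\varepsilon\in[0,\rho-\rho_0]$ are tuned precisely to ensure that the strong-convexity-based displacement of $\overline{z}_{\nu,\varepsilon}$ from $\overline{z}_\nu$ keeps it inside $B_\rho$, so that the excess estimate of \cref{cor:infdistRoyset} becomes applicable and the $\nu$-independence of $K$ can be genuinely exploited.
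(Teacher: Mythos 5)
Your proposal follows essentially the same route as the paper's proof: the same triangle-inequality split through $\nabla L_{\nu}(C^{T}\overline{z}_{\nu})$, strong convexity plus the measure-independent Lipschitz bound of \cref{cor:global_K_bound} for the $\sqrt{\varepsilon}$-contribution, and first-order optimality, the rank condition, and the excess/projection argument via \cref{cor:infdistRoyset} and \cref{lemma:excessDistanceBound} for the $D_{\rho}$-terms (the paper fixes $\delta = 2(\varepsilon+2D_{\rho}(\nu,\mu))$ and uses $\sqrt{x+y}\leq\sqrt{x}+\sqrt{y}$ rather than taking a limit in $\delta$ with a loose split, which yields the same constants). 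The one small imprecision is in your closing remark: the hypothesis $\varepsilon\in[0,\rho-\rho_{0}]$ is needed to satisfy the value-range assumptions underlying \cref{cor:infdistRoyset}, and the excess bound is applied to the exact minimizer $\overline{z}_{\nu}\in S_{\varepsilon}(\nu)\cap B_{\rho}$ (with the $\varepsilon$ entering only through the threshold on $\delta$), not to $\overline{z}_{\nu,\varepsilon}$, whose membership in $B_{\rho}$ is neither guaranteed nor required.
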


\begin{proof}
Let $\rho > \rho_{0}$, $\nu, \mu \in \mathcal{P}(\mathcal{X})$ and $\varepsilon \in [0,\rho - \rho_{0}]$. Let $\overline{z}_{\nu,\varepsilon}$ be  a $\varepsilon$-minimizer of $\phi_{\nu}$, and denote the unique minimizers of $\phi_{\mu}$ and $\phi_{\mu}$ as $\overline{z}_{\nu}$ and $\overline{z}_{\mu}$, respectively. Then
\begin{align}
     \left\Vert \overline{x}_{\nu, \varepsilon} -\overline{x}_{\mu}  \right\Vert &= \left\Vert \nabla L_{\nu}(C^{T}\overline{z}_{\nu,\varepsilon})-  \nabla L_{\mu}(C^{T}\overline{z}_{\mu}) \right\Vert \nonumber \\
     &=\left\Vert \nabla L_{\nu}(C^{T}\overline{z}_{\nu,\varepsilon}) - \nabla L_{\nu}(C^{T}\overline{z}_{\nu}) + \nabla L_{\nu}(C^{T}\overline{z}_{\nu})-  \nabla L_{\mu}(C^{T}\overline{z}_{\mu}) \right\Vert, \nonumber
     \end{align}
and so
     \begin{align}
\left\Vert \overline{x}_{\nu, \varepsilon} -\overline{x}_{\mu}  \right\Vert &\leq \left\Vert \nabla L_{\nu}(C^{T}\overline{z}_{\nu,\varepsilon}) - \nabla L_{\nu}(C^{T}\overline{z}_{\nu}) \right\Vert + \left\Vert \nabla L_{\nu}(C^{T}\overline{z}_{\nu})-  \nabla L_{\mu}(C^{T}\overline{z}_{\mu}) \right\Vert \label{eq:term1+2}.
\end{align}
To estimate the first term on the right hand side of \cref{eq:term1+2}, we require an auxiliary bound. Observe that, as $\phi_{\nu}$ is strongly $1/\alpha$-convex with $\nabla \phi_{\nu}(\overline{z}_{\nu}) =0$, we have
\begin{align}
    \Vert \overline{z}_{\nu} - \overline{z}_{\nu, \varepsilon}\Vert \leq 
\sqrt{2\alpha } \vert  \phi_{\nu}(\overline{z}_{\nu}) -  \phi_{\nu}(\overline{z}_{\nu, \varepsilon}) \vert^{1/2} \leq  \sqrt{2 \alpha \varepsilon}. \label{eqn:epsilondistApproxMin}
\end{align}
Here the first inequality uses \cref{eqn:alternate_strongconvexity}, while the second follows from the definition of $\overline{z}_{\nu}$ and $\overline{z}_{\nu, \varepsilon}$, as $\vert  \phi_{\nu}(z_{\nu}) -  \phi_{\nu}(z_{\nu, \varepsilon}) \vert = \phi_{\nu}(z_{\nu, \varepsilon}) - \phi_{\nu}(z_{\nu}) \leq \varepsilon.$

From \cref{lemma:rho_0_conditions}(b), we find that $ \Vert \overline{z}_{\nu}  \Vert \leq \rho_{0}$. Thus,  \cref{eqn:epsilondistApproxMin} yields $\Vert \overline{z}_{\nu,\varepsilon} \Vert \leq \rho_{0} +\sqrt{2\alpha\varepsilon}$. This  implies  $\Vert C^{T}\overline{z}_{\nu} \Vert, \Vert C^{T}\overline{z}_{\nu, \varepsilon} \Vert \leq \Vert C \Vert (\rho_{0} +\sqrt{2\alpha \varepsilon})$.  Hence,  \cref{cor:global_K_bound} with  $\hat{\rho} =\Vert C \Vert (\rho_{0} +\sqrt{2\alpha \varepsilon})$  yields
\begin{equation*}
    \left\Vert \nabla L_{\nu}(C^{T}\overline{z}_{\nu,\varepsilon}) - \nabla L_{\nu}(C^{T}\overline{z}_{\nu}) \right\Vert \leq  K \Vert C^{T} \overline{z}_{\nu} - C^{T}\overline{z}_{\nu,\varepsilon} \Vert,
\end{equation*}
where $K$ depends on $\hat{\rho}, \vert \mathcal{X} \vert, d$ and therefore on $\vert \mathcal{X} \vert, \Vert C \Vert,b, \varepsilon, \alpha,d$. The right-hand side in the last inequality can be further estimated with   \cref{eqn:epsilondistApproxMin} to find 
\begin{equation}
    \Vert C^{T} \overline{z}_{\nu} - C^{T}\overline{z}_{\nu,\varepsilon} \Vert \leq \Vert C \Vert \Vert \overline{z}_{\nu} - \overline{z}_{\nu,\varepsilon} \Vert \leq \Vert C \Vert \sqrt{2\alpha \varepsilon}. \label{eq:final_left}
\end{equation}

We now turn to the second term on the right-hand side of \cref{eq:term1+2}. First order optimality conditions give
\begin{align*}
    0 = -\frac{\overline{z}_{\nu}}{\alpha} + b +C  \nabla  L_{\nu}(C^{T}\overline{z}_{\nu}) ,
   \qquad  0 = -\frac{\overline{z}_{\mu}}{\alpha} + b +  C \nabla L_{\mu}(C^{T}\overline{z}_{\mu}),
\end{align*}
and therefore $\left\Vert C( \nabla L_{\nu}(C^{T}\overline{z}_{\nu})-  \nabla L_{\mu}(C^{T}\overline{z}_{\mu})) \right\Vert = \frac{1}{\alpha} \Vert \overline{z}_{\nu} - \overline{z}_{\mu} \Vert$. Furthermore, as $\text{rank}(C)=d$ we have $\sigma_{\min}(C)>0$. We also have for, any $x \in \R^{d}$, that $ \Vert Cx \Vert \geq \sigma_{\min}(C) \Vert x \Vert$, and hence 
\begin{equation*}
    \left\Vert  \nabla L_{\nu}(C^{T}\overline{z}_{\nu})-  \nabla L_{\mu}(C^{T}\overline{z}_{\mu}) \right\Vert \leq \frac{1}{\sigma_{\min}(C)} \left\Vert  C(\nabla L_{\nu}(C^{T}\overline{z}_{\nu})-  \nabla L_{\mu}(C^{T}\overline{z}_{\mu})) \right\Vert =  \frac{1}{\alpha \sigma_{\min}(C)} \Vert \overline{z}_{\nu} - \overline{z}_{\mu} \Vert.
\end{equation*}

\noindent
In order to bound $\Vert \overline{z}_{\nu} - \overline{z}_{\mu} \Vert$ from above, we define $\delta :=  2 (\varepsilon + 2   D_{\rho}(\nu,\mu)  ).$ Denoting as usual $S_{\delta}(\mu)$ as the set of $\delta$-minimizers of $\phi_{\mu}$, which is a closed, convex set by the continuity and convexity of $\phi_{\mu}$ respectively, define $y = \text{proj}_{S_{\delta}(\mu)}(\overline{z}_{\nu}).$ The triangle inequality gives
\begin{equation}
    \Vert \overline{z}_{\nu} - \overline{z}_{\mu} \Vert \leq \Vert \overline{z}_{\nu} - y \Vert + \Vert y - \overline{z}_{\mu} \Vert. \label{eqn:projectionSplit}
\end{equation}
By the choice of $\rho > \rho_{0}$, we have by \cref{lemma:rho_0_conditions}(b) that $\overline{z}_{\nu}\in S_{\varepsilon}(\nu) \cap B_{\rho}$. Therefore applying \cref{lemma:excessDistanceBound} with $A = S_{\varepsilon}(\nu) \cap B_{\rho}$, $B = S_{\delta}(\mu)$ we can bound the first term on the right hand side of 
\cref{eqn:projectionSplit} as
\begin{equation}
    \Vert \overline{z}_{\nu} - y \Vert \leq \text{exc}(S_{\varepsilon}(\nu) \cap B_{\rho} ; S_{\delta}(\mu)). \label{eq:finalright_1}
\end{equation}
For the remaining term of the right hand side of 
\cref{eqn:projectionSplit}, we use the characterization \cref{eqn:alternate_strongconvexity} of the $\frac{1}{\alpha}$-strong convexity in the differentiable case for $\phi_{\mu}$, noting $\nabla \phi_{\mu}(\overline{z}_{\mu}) =0$. Hence
\begin{equation}
    \Vert y - \overline{z}_{\mu} \Vert \leq \sqrt{2\alpha} \vert \phi_{\mu}(y) - \phi_{\mu}(\overline{z}_{\mu}) \vert^{1/2} \leq \sqrt{2\alpha \delta}, \label{eq:finalright_2}
\end{equation}
where  $y \in S_{\delta}(\mu)$ for the second inequality.
%, and thus $\vert \phi_{\mu}(y) - \phi_{\mu}(\overline{z}_{\mu}) \vert = \phi_{\mu}(y) - \phi_{\mu}(\overline{z}_{\mu}) \leq \delta $.
Combining \cref{eq:final_left,eqn:projectionSplit,eq:finalright_1,eq:finalright_2} with  \cref{eq:term1+2}, we find that
\begin{equation*}
    \left\Vert \overline{x}_{\nu, \varepsilon} -\overline{x}_{\mu}  \right\Vert \leq  K \Vert C \Vert \sqrt{2 \alpha \varepsilon} + \frac{1}{\alpha \sigma_{\min}(C)} \text{exc}(S_{\varepsilon}(\nu) \cap B_{\rho}; S_{\delta}(\mu))   + \frac{1}{\alpha \sigma_{\min}(C)} \sqrt{2\alpha \delta}.
\end{equation*}
By the choice of $\delta = 2 (\varepsilon + 2   D_{\rho}(\nu,\mu)  )$, \cref{cor:infdistRoyset} asserts $\text{exc}(S_{\varepsilon}(\nu) \cap B_{\rho}; S_{\delta}(\mu)) \leq D_{\rho}(\nu,\mu)  .$ Therefore
\begin{align*}
\left\Vert \overline{x}_{\nu, \varepsilon} -\overline{x}_{\mu}  \right\Vert &\leq \frac{1}{\alpha \sigma_{\min}(C)} \text{exc}(S_{\varepsilon}(\nu) \cap B_{\rho}; S_{\delta}(\mu))   + \frac{1}{\alpha \sigma_{\min}(C)} \sqrt{2\alpha \delta} + K \Vert C \Vert \sqrt{2 \alpha \varepsilon}\\
&\leq \frac{1}{\alpha \sigma_{\min}(C)} D_{\rho}(\nu,\mu)  + \frac{1}{\sigma_{\min}(C)}\sqrt{\frac{ 4 \varepsilon}{\alpha}} + \frac{1}{\sigma_{\min}(C)}\sqrt{ \frac{ 8  D_{\rho}(\nu,\mu) }{\alpha}} + K \Vert C \Vert \sqrt{2 \alpha \varepsilon} \\
&= \frac{1}{\alpha \sigma_{\min}(C)}  D_{\rho}(\nu,\mu)  + \frac{2 \sqrt{2}}{ \sqrt{\alpha} \sigma_{\min}(C)} \sqrt{ D_{\rho}(\nu,\mu) } + \left( K \Vert C \Vert \sqrt{2 \alpha } +\frac{2}{ \sqrt{\alpha} \sigma_{\min}(C)} \right) \sqrt{\varepsilon}
\end{align*}
where in the second line we have used the definition of $\delta$ and the concavity of $\sqrt{x+y} \leq \sqrt{x} + \sqrt{y}$. 
%This gives the desired final result.
\end{proof}

\noindent
Note that we may set $\varepsilon =0$ for a corollary on exact minimizers. However, the error bound still has the same scaling in terms of $D_{\rho}(\nu,\mu)$.

\section[A statistical dependence on n]{A statistical dependence on $\bm{n}$} \label{sec:rates_n_empirical}

This section is devoted to making the dependence on $n$ explicit in \cref{thm:epsdeltaprimalbound_full} for the special case $\nu = \mu_{n}^{(\omega)}$. We briefly recall the empirical setting   developed in \Cref{sec:approximation}. Given i.i.d. random vectors $\{ X_{1}, X_{2}, \ldots , X_{n}, \ldots\} $ on $(\Omega,\mathcal{F}, \Prob)$ with shared law $\mu = \Prob X_{1}^{-1}$, we define $\mu_{n}^{(\omega)} = \sum_{i=1}^{n} \delta_{X_{i}(\omega)}$. For this measure, the dual objective reads
\begin{align*}
    \phi_{\mu_{n}^{(\omega)}}(z) &= \frac{1}{2\alpha} \Vert z \Vert^{2} - \langle b, z \rangle + \log \frac{1}{n} \sum_{i=1}^{n} e^{\langle C^{T}z, X_{i}(\omega)\rangle}.
\end{align*}
Given $\overline{z}_{n, \varepsilon}(\omega)$,  an $\varepsilon$-minimizer of $\phi_{\mu_{n}^{(\omega)}}(z)$, define
\begin{equation*}
    \overline{x}_{n,\varepsilon}(\omega): = \nabla_{z} \left[ \log \frac{1}{n} \sum_{i=1}^{n} e^{\langle C^{T}z, X_{i}(\omega) \rangle }
 \right]_{z =\overline{z}_{n, \varepsilon}(\omega)}.
\end{equation*}

\noindent
We begin with a simplifying lemma, recalling the notation developed in \Cref{sec:rates} of the moment generating function $M_{\mu}$ of $\mu$ and empirical moment generating function $M_{n}(\cdot, \omega)$ of $\mu_{n}^{(\omega)}$.

\begin{lemma} \label{lemma:LogmgfboundedbyMGF}
    Let  $\rho > 0$, $n \in \mathbb{N}$ and  $\omega \in \Omega$ and set $K:= \exp \left(  \rho \Vert C \Vert \vert \mathcal{X} \vert \right)$. Then
    \begin{equation*}
        D_{\rho}(\mu,\mu_{n}^{(\omega)})   \leq K \max_{z \in B_{\rho}} \left\vert  M_{\mu}(C^{T}z) - M_{n}(C^{T}z,\omega) \right\vert.
    \end{equation*}
\end{lemma}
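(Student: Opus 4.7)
The plan is to combine the uniform lower bound on moment generating functions from \cref{lemma:MGF_bounded} with the mean value theorem applied to $\log$, working pointwise in $z$ before taking the supremum over $B_\rho$.

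First, I would unpack $D_\rho$: by definition $D_\rho(\mu,\mu_n^{(\omega)}) = \max_{z\in B_\rho}|L_\mu(C^T z) - L_{\mu_n^{(\omega)}}(C^T z)| = \max_{z \in B_\rho}|\log M_\mu(C^T z) - \log M_n(C^T z, \omega)|$, so the task reduces to controlling $|\log M_\mu(C^T z) - \log M_n(C^T z, \omega)|$ uniformly on $B_\rho$ by a multiple of $|M_\mu(C^T z) - M_n(C^T z, \omega)|$.

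Next, I would apply \cref{lemma:MGF_bounded} to both measures $\mu$ and $\mu_n^{(\omega)}$, which are elements of $\mathcal{P}(\mathcal{X})$. This gives, for every $z \in B_\rho$,
\[
M_\mu(C^T z),\; M_n(C^T z, \omega) \in \bigl[\exp(-\rho\Vert C\Vert |\mathcal{X}|),\, \exp(\rho\Vert C\Vert |\mathcal{X}|)\bigr].
\]
In particular $\min\{M_\mu(C^T z),M_n(C^T z,\omega)\} \geq \exp(-\rho\Vert C\Vert |\mathcal{X}|) = K^{-1}$.

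Then, for each fixed $z \in B_\rho$, I would invoke the mean value theorem for $\log$ on the strictly positive interval joining $M_\mu(C^T z)$ and $M_n(C^T z, \omega)$: there is some intermediate value $\xi_z \geq K^{-1}$ such that $|\log M_\mu(C^T z) - \log M_n(C^T z, \omega)| = \xi_z^{-1}|M_\mu(C^T z) - M_n(C^T z, \omega)| \leq K |M_\mu(C^T z) - M_n(C^T z,\omega)|$.

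Finally, taking the maximum over $z \in B_\rho$ on both sides yields the claim. There is no real obstacle here beyond correctly invoking \cref{lemma:MGF_bounded} to get a uniform (in both $z$ and $\mu$ vs.\ $\mu_n^{(\omega)}$) lower bound on the moment generating functions; the Lipschitz bound on $\log$ on $[K^{-1},\infty)$ then finishes the argument immediately.
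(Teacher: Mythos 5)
Your proposal is correct and matches the paper's proof: both apply \cref{lemma:MGF_bounded} to $\mu$ and $\mu_{n}^{(\omega)}$ to confine the moment generating functions to $[K^{-1},K]$, then use the Lipschitz bound $\vert\log s-\log t\vert\leq K\vert s-t\vert$ on that interval (which you derive via the mean value theorem) before maximizing over $B_{\rho}$. No issues.
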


\begin{proof}
Applying \cref{lemma:MGF_bounded} to the particular probability measures $\mu$ and $\mu_{n}^{(\omega)}$ gives
\begin{equation}
    M_{\mu}(C^{T}z), M_{n}(C^{T}z,\omega) \in \left[\exp \left(  -\rho \Vert C \Vert \vert \mathcal{X} \vert \right), \exp \left(  \rho \Vert C \Vert \vert \mathcal{X} \vert \right) \right] =: [c,d] \label{eqn:upper_lowerbound_MGF}
\end{equation}
where $0< c< d.$ Furthermore, for any $s,t \in [c,d]$ we have $\vert \log(s) - \log(t) \vert \leq \frac{1}{c} \vert s - t \vert$, and hence
\[
    D_{\rho}(\mu,\mu_{n}^{(\omega)}) = \max_{z \in B_{\rho}} \vert L_{\mu_{n}^{(\omega)}}(C^{T}z) - L_{\mu}(C^{T}z) \vert  \leq \exp \left(  \rho \Vert C \Vert \vert \mathcal{X} \vert \right) \max_{z \in B_{\rho}} \left\vert  M_{\mu}(C^{T}z) - M_{n}(C^{T}z,\omega) \right\vert.
\]
%% matt comment: to fix the proof boxes CANNOT use \begin{equation*}; need \[ \] or \begin{displaymath} environments.

\end{proof}
\begin{lemma} \label{lemma:primal_bound_n_global_constants}
   Let $\rho_{0}$ be as defined in \cref{eqn:rho_0_defn} and suppose $\mathrm{rank}(C)=d$. Then, for all $\rho > \rho_{0}$, all $\varepsilon \in [0, \rho-\rho_{0}]$, and for all $n \in \mathbb{N}$ we have: if $\overline{z}_{n,\varepsilon}(\omega) \in S_{\varepsilon}(\mu_{n}^{(\omega)})$, then $\overline{x}_{n, \varepsilon}(\omega) = \nabla L_{\mu_{n}^{(\omega)}}(C^{T}\overline{z}_{n,\varepsilon})$ satisfies 
\begin{align*}
    \left\Vert \overline{x}_{n,\varepsilon}(\omega) -\overline{x}_{\mu}  \right\Vert &\leq \frac{K_{1}}{\alpha \sigma_{\min}(C)}  \max_{z \in B_{\rho}} \left\vert  M_{\mu}(C^{T}z) - M_{n}(C^{T}z,\omega) \right\vert  \\
    &+ \frac{2 \sqrt{2 K_{1}}}{\sqrt{\alpha} \sigma_{\min}(C) } \sqrt{ \max_{z \in B_{\rho}} \left\vert  M_{\mu}(C^{T}z) - M_{n}(C^{T}z,\omega) \right\vert } 
    + \left( K_{2} \Vert C \Vert \sqrt{2 \alpha  } +\frac{2}{\sqrt{\alpha}\sigma_{\min}(C)} \right) \sqrt{\varepsilon}
\end{align*}
 where $K_{1}$ is a constant which depends on $\rho, \vert \mathcal{X} \vert, \Vert C \Vert$, and $K_{2}$ on $\vert \mathcal{X} \vert, \Vert C \Vert,b, \varepsilon, d, \alpha$.
\end{lemma}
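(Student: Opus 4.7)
The plan is to obtain this statement as a direct corollary of \cref{thm:epsdeltaprimalbound_full} combined with \cref{lemma:LogmgfboundedbyMGF}, specialized to the empirical prior $\nu = \mu_n^{(\omega)}$. There is essentially no new conceptual content: all the heavy lifting (primal-dual recovery, strong convexity, epi-distance estimates) has been packaged into \cref{thm:epsdeltaprimalbound_full}, and the translation from log-MGF differences to MGF differences is precisely \cref{lemma:LogmgfboundedbyMGF}.

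Concretely, I would first fix $\omega \in \Omega$, $\rho > \rho_0$, $\varepsilon \in [0, \rho - \rho_0]$ and $n \in \mathbb{N}$, and note that since $\mu_n^{(\omega)} \in \mathcal{P}(\mathcal{X})$ (a Borel probability measure supported on the compact set $\mathcal{X}$), it is an admissible choice of prior. Applying \cref{thm:epsdeltaprimalbound_full} with $\nu = \mu_n^{(\omega)}$ to the $\varepsilon$-minimizer $\overline{z}_{n,\varepsilon}(\omega)$ immediately yields
\begin{equation*}
\left\Vert \overline{x}_{n,\varepsilon}(\omega) - \overline{x}_{\mu} \right\Vert \leq \frac{1}{\alpha \sigma_{\min}(C)} D_{\rho}(\mu_n^{(\omega)}, \mu) + \frac{2\sqrt{2}}{\sqrt{\alpha}\sigma_{\min}(C)} \sqrt{D_{\rho}(\mu_n^{(\omega)}, \mu)} + \left( K_2 \Vert C \Vert \sqrt{2\alpha} + \frac{2}{\sqrt{\alpha}\sigma_{\min}(C)} \right) \sqrt{\varepsilon},
\end{equation*}
where I have renamed the constant $K$ from \cref{thm:epsdeltaprimalbound_full} as $K_2$ to match the statement (recall $K_2$ depends on $\vert \mathcal{X} \vert, \Vert C \Vert, b, \varepsilon, d, \alpha$ through $\hat\rho$ in \cref{cor:global_K_bound}).

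Next, I would invoke \cref{lemma:LogmgfboundedbyMGF} to control $D_\rho$ by the MGF difference: setting $K_1 := \exp(\rho \Vert C \Vert \vert \mathcal{X} \vert)$, which depends only on $\rho$, $\Vert C\Vert$, and $\vert \mathcal{X}\vert$ as required, one has
\begin{equation*}
D_{\rho}(\mu_n^{(\omega)}, \mu) \leq K_1 \max_{z \in B_\rho} \left\vert M_\mu(C^T z) - M_n(C^T z, \omega) \right\vert.
\end{equation*}
Substituting this bound into both the linear and the square-root terms (using monotonicity of $\sqrt{\cdot}$ so that $\sqrt{D_\rho} \leq \sqrt{K_1} \sqrt{\max_z |M_\mu - M_n|}$) produces exactly the displayed inequality, with coefficients $K_1/(\alpha \sigma_{\min}(C))$ and $2\sqrt{2K_1}/(\sqrt{\alpha}\sigma_{\min}(C))$.

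There is no real obstacle here; the only point requiring a little care is verifying that the constants $K_1$ and $K_2$ are genuinely independent of $\omega$ and $n$. For $K_1$ this is immediate from its explicit form. For $K_2$, one must recall that the Lipschitz constant $K$ produced by \cref{cor:global_K_bound} is uniform over the class of all probability measures on $\mathcal{X}$ (this is precisely the ``key feature'' emphasized right after \cref{cor:global_K_bound}), so applying it to the random measure $\mu_n^{(\omega)}$ causes no issue. Once this is observed, the bound holds for every $\omega$ and every $n$, setting the stage for taking expectations in the next result.
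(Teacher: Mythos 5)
Your proposal is correct and follows exactly the paper's own argument: apply \cref{thm:epsdeltaprimalbound_full} with $\nu = \mu_n^{(\omega)}$, then bound $D_\rho(\mu,\mu_n^{(\omega)})$ via \cref{lemma:LogmgfboundedbyMGF} with $K_1 = \exp(\rho\Vert C\Vert\vert\mathcal{X}\vert)$. Your additional remark on the uniformity of $K_1, K_2$ in $\omega$ and $n$ matches the emphasis the paper places on this point.
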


\begin{proof}
 As $\mu_{n}^{(\omega)} \in \mathcal{P}(\mathcal{X})$, \cref{thm:epsdeltaprimalbound_full} yields for all $n$, for $\rho_{0}$ as defined in \cref{eqn:rho_0_defn}, for all $\rho> \rho_{0}$, and all $\varepsilon \in [0, \rho-\rho_{0}]$,  if $\overline{z}_{n,\varepsilon}(\omega) \in S_{\varepsilon}(\mu_{n}^{(\omega)})$, then $\overline{x}_{n, \varepsilon}(\omega) = \nabla L_{\nu}(C^{T}z_{n,\varepsilon})$ satisfies
 \begin{align*}
     \left\Vert \overline{x}_{n,\varepsilon}(\omega) -\overline{x}_{\mu}  \right\Vert &\leq \frac{1}{\alpha \sigma_{\min}(C)}  D_{\rho}(\mu,\mu_{n}^{(\omega)})  + \frac{2 \sqrt{2}}{\sqrt{\alpha} \sigma_{\min}(C)} \sqrt{ D_{\rho}(\mu,\mu_{n}^{(\omega)}) } \\
     &+ \left( K_{2} \Vert C \Vert \sqrt{2 \alpha } +\frac{2}{\sqrt{\alpha} \sigma_{\min}(C)} \right) \sqrt{\varepsilon},
 \end{align*}
where we stress that the constant $K_{2}$ depends on $\vert \mathcal{X} \vert, \Vert C \Vert,b, \varepsilon, \alpha,d$, but does not depend on $n$. Applying \cref{lemma:LogmgfboundedbyMGF} to bound $D_{\rho}(\mu,\mu_{n}^{(\omega)})   \leq K_{1} \sup_{z \in B_{\rho}} \left\vert  M_{\mu}(C^{T}z) - M_{n}(C^{T}z,\omega) \right\vert$ gives the result.
\end{proof}

\noindent
In order to construct a final bound which depends explicitly on $n$ it remains to estimate the term
$\max_{z \in B_{\rho}} \left\vert  M_{\mu}(C^{T}z) - M_{n}(C^{T}z,\omega) \right\vert$. This fits into the language of empirical process theory where this type of convergence is  well studied. The main reference of interest here is Van Der Vaart \cite{van1996new}.

For compact $\mathcal{X} \subset \R^{d}$, let  $f: \mathcal{X} \to \R$ be a function and $\beta = (\beta_{1}, \ldots, \beta_{d})$ be a multi-index, i.e. a vector of $d$ nonnegative integers. We call $\vert \beta \vert = \sum_{i} \beta_{i}$ the order of $\beta$, and define the differential operator $ D^{\beta} =  \frac{\partial^{\beta_{1}}}{\partial x_{1}^{\beta_{1}}} \cdots \frac{\partial^{\beta_{d}}}{\partial x_{d}^{\beta_{d}}}.$ For integer $k$, we denote by $\mathcal C^{k}(\mathcal X)$ as the space of $k$-smooth (also known as $k$-H\"older continuous) functions on $\mathcal{X}$, namely, those $f$ which satisfy \cite[p.~2131]{van1996new}
\[
    \|f\|_{\mathcal C^{k}(\mathcal X)} :=  \max_{|\beta|\leq k}\sup_{x\in\text{int}(\mathcal X)}\|D^{\beta}f(x)\|+\max_{|\beta|=k}\sup_{\substack{x,y\in \text{int} (\mathcal X)\\x\neq y}}\left|\frac{D^{\beta}f(x)-D^{\beta}f(y)}{\|x-y\|}\right|<\infty.
\]
\noindent
    Moreover, let $\mathcal C^{k}_R(\mathcal X)$ denote the ball of radius $R$ in $\mathcal C^{k}(\mathcal X)$. With this notation developed we can state the classical results of Van der Vaart \cite{van1996new}. In the notation therein, we apply the machinery of Sections 1 and 2 to the measure space $(\mathcal{X}_{1}, \mathcal{A}_{1})$ = $(\mathcal{X}, \mathcal{B}_{\mathcal{X}})$, equipped with probability measure $\mu$. Taking $\mathbb{G}_{n} = \sqrt{n}(\mu_{n}^{(\omega)} - \mu)$ and $\mathcal{F}_{1} = \mathcal{F} = C^{k}_{R}( \mathcal{X})$, this induces the norm $\Vert \mathbb{G}_{n} \Vert_{\mathcal{F}} = \sup_{f \in C^{k}_R(\mathcal X)} \{ \left\vert \int_\mathcal{X} f d\mathbb{G}_{n}  \right\vert  \}$, and hence the results of \cite[p.~2131]{van1996new} give

\begin{theorem} \label{thm:donsker}
    Let $\mu \in \mathcal{P}(\mathcal{X})$. If $k > d/2$, then for any $R>0$, 
    \begin{equation*}
    \E_{\Prob}^{*} \left[ \sup_{f \in C^{k}_{R}(\mathcal X) } \sqrt{n} \left\vert \ \int_{\mathcal{X}} fd\mu_{n}^{(\cdot)} - \int_{\mathcal{X}}fd\mu \right\vert\right] \leq D
    \end{equation*}
    where $D$ is a constant depending (polynomially) on $k,d,\vert \mathcal{X} \vert$, and $\E_{\Prob}^{*}$ is the outer expectation to avoid concerns of measurablity (see e.g. \cite[Section 1.2]{van1996weak}).
\end{theorem}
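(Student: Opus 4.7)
The statement is essentially a direct specialization of the bracketing entropy and maximal inequality machinery for empirical processes, so my plan is to identify the hypotheses of that framework with the present setting and then simply quote the resulting bound, as developed in \cite{van1996new}.

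First, I would recast the supremum in terms of the empirical process $\mathbb{G}_{n} = \sqrt{n}(\mu_{n}^{(\cdot)} - \mu)$ indexed by $\mathcal{F} = \mathcal{C}^{k}_{R}(\mathcal{X})$, so that
\[
\sup_{f \in \mathcal{C}^{k}_{R}(\mathcal{X})} \sqrt{n} \left| \int_{\mathcal{X}} f \, d\mu_{n}^{(\omega)} - \int_{\mathcal{X}} f \, d\mu \right| \;=\; \|\mathbb{G}_{n}\|_{\mathcal{F}},
\]
which is exactly the norm considered on p.~2131 of \cite{van1996new}.

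Second, I would invoke the classical Kolmogorov--Tikhomirov bracketing entropy estimate for smooth function classes on a compact set: there is a constant $A$ depending polynomially on $k$, $d$, and $|\mathcal{X}|$ such that for every $\varepsilon > 0$,
\[
\log N_{[\,]}\!\left(\varepsilon, \, \mathcal{C}^{k}_{R}(\mathcal{X}), \, \|\cdot\|_{\infty}\right) \;\leq\; A\, (R/\varepsilon)^{d/k}.
\]
Since $\|\cdot\|_{L^{2}(\mu)} \leq \|\cdot\|_{\infty}$ on $\mathcal{X}$, the same bound (up to constants) controls $L^{2}(\mu)$-brackets, for any $\mu \in \mathcal{P}(\mathcal{X})$.

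Third, I would apply the standard maximal inequality (in the form used throughout \cite{van1996new}) to obtain
\[
\E^{*}_{\Prob} \|\mathbb{G}_{n}\|_{\mathcal{F}} \;\lesssim\; \int_{0}^{R} \sqrt{1 + \log N_{[\,]}(\varepsilon, \mathcal{F}, L^{2}(\mu))} \, d\varepsilon,
\]
with envelope function $F \equiv R$ (available since elements of $\mathcal{C}^{k}_{R}(\mathcal{X})$ are uniformly bounded by $R$). Plugging in the entropy bound produces an integrand of order $\varepsilon^{-d/(2k)}$, which is integrable near zero precisely when $d/(2k) < 1$, i.e., when $k > d/2$. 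The resulting constant $D$ then inherits the advertised polynomial dependence on $k$, $d$, $|\mathcal{X}|$, and crucially does not depend on $n$.

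The one subtle point is the measurability of the supremum over the uncountable class $\mathcal{C}^{k}_{R}(\mathcal{X})$, which is the main reason the statement is phrased with the outer expectation $\E^{*}_{\Prob}$; this is handled in standard fashion via the outer-measure formalism as in \cite[Section 1.2]{van1996weak}. With that technicality addressed, the argument is immediate from the cited references.
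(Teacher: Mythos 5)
Your outline is correct, but note that the paper does not actually prove this statement: it explicitly takes the result as given from \cite[p.~2131]{van1996new}, remarking that a self-contained proof ``is non-trivial, requiring the development of entropy and bracketing numbers of function spaces which is beyond the scope of this article.'' What you have written is precisely the standard argument underlying the cited result, and your three steps are the right ones: (i) identifying the supremum with $\|\mathbb{G}_n\|_{\mathcal F}$ for $\mathcal F=\mathcal C^k_R(\mathcal X)$; (ii) the Kolmogorov--Tikhomirov bound $\log N_{[\,]}(\varepsilon,\mathcal C^k_R(\mathcal X),\|\cdot\|_\infty)\lesssim (R/\varepsilon)^{d/k}$, which dominates the $L^2(\mu)$-bracketing numbers uniformly in $\mu$ since $\mu$ is a probability measure; and (iii) the bracketing maximal inequality, whose entropy integral has integrand of order $\varepsilon^{-d/(2k)}$ near zero and hence converges exactly when $k>d/2$, yielding a bound uniform in $n$. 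So your proposal in fact supplies more than the paper does. Two minor points: your argument, correctly carried out, produces a constant $D$ that also depends on $R$ (through the envelope $F\equiv R$ and the scaling of the entropy bound) --- the paper's statement omits this dependence, though it is harmless downstream since \cref{cor:BoundedMGF} fixes $R=R_d$ in terms of $d,\rho,\|C\|,|\mathcal X|$; and the claimed \emph{polynomial} dependence of $D$ on $k,d,|\mathcal X|$ is not something your sketch establishes (it requires tracking the constants in the Kolmogorov--Tikhomirov estimate), but since both you and the paper ultimately defer to the reference for these constants, this is not a gap in your reasoning relative to the paper's.
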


We remark that outer expectation is also known as the outer integral \cite[Chapter 14.F]{rockafellar2009variational}, and coincides with the usual expectation for measurable functions - which are the only functions of interest in this work. A self-contained proof of the above result is non-trivial, requiring the development of entropy and bracketing numbers of function spaces which is beyond the scope of this article.\footnote{Bounds of this ``Donsker'' type have previously been applied to empirical approximations of stochastic optimization problems, to derive large deviation-style results for specific problems, see \cite[Section 5.5]{romisch2007stability} for a detailed exposition and discussion, in particular \cite[Theorem 5.2]{romisch2007stability}. In principle this machinery could be used here to derive similar large deviation results. } We simply take this result as given. However, we show the following corollary.

\begin{corollary} \label{cor:BoundedMGF}
    For all $n \in \mathbb{N}$, we have
    \begin{equation*}
         \E_{\Prob}\left[ \max_{z \in B_{\rho}} \left\vert  M_{\mu}(C^{T}z) - M_{n}(C^{T}z,\cdot) \right\vert  \right] \leq \frac{D}{\sqrt{n}},
    \end{equation*}
    where $D$ is a constant depending on $d,\vert \mathcal{X} \vert$.
\end{corollary}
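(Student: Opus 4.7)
The plan is to reduce the corollary to a direct application of the Van der Vaart/Donsker-type bound in \Cref{thm:donsker}. The key observation is that the quantity we want to control is exactly a supremum of empirical-vs-population integrals over a specific parametrized family of test functions, namely the family $\{f_z : z \in B_\rho\}$ with $f_z(x) := \exp\langle C^T z, x\rangle$, because
\[
M_\mu(C^T z) - M_n(C^T z,\omega) \;=\; \int_{\mathcal X} f_z\,d\mu - \int_{\mathcal X} f_z\,d\mu_n^{(\omega)}.
\]
Hence everything will follow once I show that this family sits inside some ball $\mathcal C^{k}_R(\mathcal X)$ with $k>d/2$ and $R=R(\rho,\|C\|,|\mathcal X|,k)$.

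First I would pick any integer $k > d/2$ (this choice is independent of $n$) and bound the $\mathcal C^k$-norm of $f_z$ uniformly in $z\in B_\rho$. For any multi-index $\beta$ one computes $D^\beta f_z(x) = (C^T z)^\beta e^{\langle C^T z, x\rangle}$, and since $\|C^T z\|\le \|C\|\rho$ and $\|x\|\le |\mathcal X|$, each such derivative is bounded in sup-norm by $(\|C\|\rho)^{|\beta|}\exp(\|C\|\rho|\mathcal X|)$. The H\"older seminorm at order $|\beta|=k$ is handled by the mean value theorem applied to the exponential, which gives a Lipschitz constant of the form $(\|C\|\rho)^{k+1}\exp(\|C\|\rho|\mathcal X|)$. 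Summing these finitely many bounds produces an explicit $R=R(k,d,\rho,\|C\|,|\mathcal X|)$ with $\{f_z:z\in B_\rho\}\subset \mathcal C^{k}_R(\mathcal X)$.

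Next I would apply \Cref{thm:donsker} with this $R$ and $k$ to obtain
\[
\E_\Prob^{*}\!\left[\sup_{f\in\mathcal C^{k}_R(\mathcal X)} \sqrt{n}\left|\int_{\mathcal X} f\,d\mu_n^{(\cdot)} - \int_{\mathcal X} f\,d\mu\right|\right] \le D,
\]
and restrict the supremum to $\{f_z : z\in B_\rho\}$. Before concluding I need to argue that the outer expectation coincides with the ordinary expectation so I can write $\E_\Prob$ instead of $\E_\Prob^{*}$. This holds because the map $(z,\omega)\mapsto M_\mu(C^T z)-M_n(C^T z,\omega)$ is continuous in $z$ for every $\omega$ (each $M_n(\cdot,\omega)$ is smooth) and jointly measurable, so its sup over the separable compact set $B_\rho$ is realized along a countable dense subset and hence is $\mathcal F$-measurable; continuity also turns the $\sup$ into a $\max$. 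Dividing through by $\sqrt n$ then yields the claimed bound with a constant $D$ depending on $k,d,|\mathcal X|$ (and, implicitly through $R$, on $\rho$ and $\|C\|$).

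The only real obstacle is the first step, namely verifying the uniform $\mathcal C^k$-bound on the family $\{f_z\}$; this is routine but requires some care because the H\"older seminorm at the top order $k$ is not just a sup of derivatives but a Lipschitz modulus, so one must invoke the mean value theorem separately. Everything else is bookkeeping: choosing $k$, quoting \Cref{thm:donsker}, and removing the outer-expectation asterisk via the separability/continuity argument above.
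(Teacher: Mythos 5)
Your proposal is correct and follows essentially the same route as the paper: identify $M_\mu(C^Tz)-M_n(C^Tz,\omega)$ as $\int f_z\,d\mu-\int f_z\,d\mu_n^{(\omega)}$, show $\{f_z:z\in B_\rho\}\subset\mathcal C^k_R(\mathcal X)$ for some $k>d/2$ (the paper fixes $k=d$), remove the outer expectation via a countable dense subset of $B_\rho$, and invoke \Cref{thm:donsker}. The only differences are cosmetic: you spell out the derivative bounds and the H\"older seminorm estimate that the paper merely asserts, and you are more candid that the constant inherits a dependence on $\rho$ and $\|C\|$ through $R$.
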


\begin{proof}
    Observe that for each $z$, the function $f_{z}(x) = \exp \langle C^{T}z, \cdot \rangle$ is an infinitely differentiable function on the compact set $\mathcal{X}$, and thus has bounded derivatives of all orders, in particular, of order $k=d > d/2$. Hence, for  $\rho > 0$, the set of functions $f_{z}$ parameterized by $z\in B_{\rho}$ satisfies
    \[
        \left\{ f_{z}(x) = \exp\langle C^{\top}z,x\rangle:z\in B_{\rho} \right\}\subset \mathcal C^{d}_{R_{d}}(\mathcal X),
    \]
    where $R_{d}$ is a constant which depends on $d,\rho,\|C\|$, and $\vert \mathcal{X} \vert$. Furthermore, as  $\mathbb{Q}^{m} \cap B_{\rho} \subset B_{\rho}$ is a countable dense subset, $\max_{z \in B_{\rho}} \left\vert  M_{\mu}(C^{T}z) - M_{n}(C^{T}z,\cdot) \right\vert = \sup_{z \in \mathbb{Q}^{m} \cap B_{\rho}} \left\vert  M_{\mu}(C^{T}z) - M_{n}(C^{T}z,\cdot) \right\vert$ is a supremem of countably many $\Prob$-measurable functions and is hence $\Prob$-measurable. In particular the usual expectation agrees with the outer expectation. Hence applying \cref{thm:donsker} we may assert
 \begin{align*}
    \E_{\Prob}\left[ \max_{z \in B_{\rho}} \left\vert  M_{\mu}(C^{T}z) - M_{n}(C^{T}z,\cdot) \right\vert  \right] &= \E_{\Prob}^{*} \left[ \sup_{z \in B_{\rho}} \left\vert \ \int_{\mathcal{X}} f_{z}d\mu_{n}^{(\cdot)} - \int_{\mathcal{X}}f_{z}\mu \right\vert\right] \\
    &\leq \E_{\Prob}^{*} \left[ \sup_{f \in C^{d}_{R_{d}}(\mathcal X) } \left\vert \ \int_{\mathcal{X}} fd\mu_{n}^{(\cdot)} - \int_{\mathcal{X}}f\mu \right\vert\right] \\
    &\leq \frac{D}{\sqrt{n}},
 \end{align*}
 for a constant $D$ which depends on $d, \vert \mathcal{X} \vert$. We remark that the choice of $k=d$ in the above was aesthetic, so that the the constant $D$ only depends on $d, \vert \mathcal{X} \vert.$ 
\end{proof}
The final result now follows as a simple consequence of \cref{cor:BoundedMGF} and \cref{lemma:primal_bound_n_global_constants}:

\begin{theorem} \label{thm:final_rate_n}
Suppose $\mathrm{rank}(C)=d$. For all $n \in \mathbb{N}$, and all $\overline{z}_{n,\varepsilon}(\omega) \in S_{\varepsilon}(\mu_{n}^{(\omega)})$, the associated $\overline{x}_{n, \varepsilon}(\omega) = \nabla L_{\mu_{n}^{(\omega)}}(C^{T}\overline{z}_{n,\varepsilon})$ satisfies
\begin{align*}
    \E_{\Prob} \left\Vert \overline{x}_{n,\varepsilon}(\cdot) -\overline{x}_{\mu}  \right\Vert &\leq \frac{D K_{1}}{\alpha \sigma_{\min}(C)} \frac{1}{\sqrt{n}} + \frac{2D   \sqrt{2 K_{1}}}{\sqrt{\alpha} \sigma_{\min}(C) } \sqrt{ \frac{1}{\sqrt{n}} } + \left( K_{2} \Vert C \Vert \sqrt{2 \alpha } +\frac{2}{\sqrt{\alpha} \sigma_{\min}(C)} \right) \sqrt{\varepsilon} \\
    &= O\left( \frac{1}{n^{1/4}} + \sqrt{\varepsilon} \right),
\end{align*}
 where the leading constants $K_{1},K_{2},D$ depend on $\vert \mathcal{X} \vert, \Vert C \Vert,b, \varepsilon, \alpha,d$.
 \end{theorem}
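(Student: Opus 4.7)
The plan is to obtain the result by assembling the two immediately preceding pieces: the deterministic, pointwise-in-$\omega$ bound of \cref{lemma:primal_bound_n_global_constants} and the expectation bound of \cref{cor:BoundedMGF}. First I would fix $\rho > \rho_0$ together with the prescribed $\varepsilon \in [0, \rho - \rho_0]$, and apply \cref{lemma:primal_bound_n_global_constants} with $\nu = \mu_n^{(\omega)}$. This gives, for every $\omega \in \Omega$, a pointwise estimate of $\|\overline{x}_{n,\varepsilon}(\omega) - \overline{x}_\mu\|$ by an expression that is linear in $M_n$-deviation $M_\mu(C^Tz) - M_n(C^Tz,\omega)$ (in the sup-norm on $B_\rho$), plus a square-root of the same quantity, plus a deterministic $\sqrt{\varepsilon}$ term whose constants do not depend on $n$ or $\omega$.

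Next I would take $\E_\Prob$ on both sides of that pointwise bound. The $\sqrt{\varepsilon}$ term passes through unchanged. For the linear term in $\max_{z\in B_\rho}|M_\mu(C^Tz) - M_n(C^Tz,\omega)|$, I apply \cref{cor:BoundedMGF} directly to produce a $D/\sqrt{n}$ contribution. For the $\sqrt{\cdot}$ term the main (though very mild) trick is Jensen's inequality: since $t \mapsto \sqrt{t}$ is concave on $[0,\infty)$,
\[
\E_\Prob\!\left[\sqrt{\max_{z\in B_\rho}|M_\mu(C^Tz)-M_n(C^Tz,\cdot)|}\right] \;\leq\; \sqrt{\E_\Prob\!\left[\max_{z\in B_\rho}|M_\mu(C^Tz)-M_n(C^Tz,\cdot)|\right]} \;\leq\; \sqrt{D/\sqrt{n}},
\]
which produces the $1/n^{1/4}$ scaling. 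Combining these three contributions yields exactly the stated inequality, and the asymptotic $O(n^{-1/4}+\sqrt{\varepsilon})$ follows by absorbing the $O(1/\sqrt{n})$ term into the dominant $O(1/n^{1/4})$ term.

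The only subtlety worth flagging is measurability: one has to be sure that $\omega \mapsto \|\overline{x}_{n,\varepsilon}(\omega)-\overline{x}_\mu\|$ is $\Prob$-measurable so that the expectation $\E_\Prob\|\overline{x}_{n,\varepsilon}(\cdot)-\overline{x}_\mu\|$ is well defined (and so that ordinary, not outer, expectation suffices in the Jensen step). This is handled by the Carathéodory structure of $\phi_{\mu_n^{(\omega)}}$ recalled in \Cref{sec:approximation}: one can choose a $\Prob$-measurable selector $\overline{z}_{n,\varepsilon}(\omega)\in S_\varepsilon(\mu_n^{(\omega)})$, and then $\overline{x}_{n,\varepsilon}(\omega) = \nabla L_{\mu_n^{(\omega)}}(C^T\overline{z}_{n,\varepsilon}(\omega))$ is measurable because the gradient is a continuous function of $(z,X_1(\omega),\ldots,X_n(\omega))$. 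With measurability in hand, no actual work remains beyond the three-line combination sketched above; the ``hard part'' is really already done in \cref{thm:epsdeltaprimalbound_full} (the deterministic $D_\rho(\nu,\mu)$ bound) and \cref{thm:donsker} (the Donsker-type expectation estimate that drives \cref{cor:BoundedMGF}).
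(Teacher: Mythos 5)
Your proposal is correct and follows essentially the same route as the paper: apply \cref{lemma:primal_bound_n_global_constants} pointwise in $\omega$ (the paper fixes $\rho=2\rho_0$ so the constants depend only on problem data), verify measurability of $\omega\mapsto\overline{x}_{n,\varepsilon}(\omega)$ via the measurable-selection/continuity argument, and take expectations using \cref{cor:BoundedMGF}. Your explicit invocation of Jensen's inequality for the square-root term is a detail the paper leaves implicit but is indeed the step that produces the $n^{-1/4}$ scaling.
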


 \begin{proof}
     Take $\rho = 2 \rho_{0}$ in \cref{lemma:primal_bound_n_global_constants}. This choice of $\rho$ gives that for all $n$ and all $\overline{z}_{n,\varepsilon}(
     \omega) \in S_{\varepsilon}(\mu_{n}^{(\omega)})$, the error bound
     \begin{align*}
    \left\Vert \overline{x}_{n,\varepsilon}(\omega) -\overline{x}_{\mu}  \right\Vert& \leq \frac{K_{1}}{\alpha \sigma_{\min}(C)}  \sup_{z \in B_{\rho}} \left\vert  M_{\mu}(C^{T}z) - M_{n}(C^{T}z,\omega) \right\vert\\
    & \hspace{-5mm}+ \frac{2 \sqrt{2} \sqrt{K_{1}}}{\sqrt{\alpha} \sigma_{\min}(C) } \sqrt{ \sup_{z \in B_{\rho}} \left\vert  M_{\mu}(C^{T}z) - M_{n}(C^{T}z,\omega) \right\vert } + \left( K_{2} \Vert C \Vert \sqrt{2 \alpha } +\frac{2}{\sqrt{\alpha} \sigma_{\min}(C)} \right) \sqrt{\varepsilon},
    \end{align*}
     holds with constants $K_{1}, K_{2}$ that depend only on $\vert \mathcal{X} \vert, \Vert C \Vert,b, \varepsilon, \alpha,d$. As $\omega \to \overline{x}_{n, \varepsilon}(\omega)$ is the composition of the continuous function $\nabla L_{\mu_{n}^{(\omega)}}$ and the measurable function  $C^{T}z_{n}(\omega)$ (see the discussion at the end of \Cref{sec:approximation}) the left hand side is $\Prob$-measurable. Hence taking the $\Prob$-expectation on both sides and applying \cref{thm:donsker} gives the result.
 \end{proof}

\section{Numerical experiments} \label{sec:numerics}

We now shift to a numerical examination of the convergence $\overline{x}_{n,\varepsilon} \to \overline{x}_{\mu}$. We focus entirely on the most recent setting of \Cref{sec:rates_n_empirical}, the MEM problem with an empirical prior $\mu_{n}^{(\omega)}$ and 2-norm fidelity. As discussed throughout, the empirical dual objective function $\phi_{\mu_{n}}^{(\omega)}$ is smooth and strongly convex, with easily computable derivatives\footnote{The derivatives are easily avaliable analytically, but a na{\"i}ve implementation may run into stablity issues. This can be easily addressed, see e.g. \cite{Kantas2015Particle}.}. Hence we may solve this problem with any off-the-shelf optimization package, and here we choose to use  limited memory BFGS \cite{byrd1994representations}.
As a companion to this paper, we include a jupyter notebook, \href{https://github.com/mattkingros/MEM-Denoising-and-Deblurring.git}{https://github.com/mattkingros/MEM-Denoising-and-Deblurring.git}, which can be used to reproduce and extend all computations performed hereafter.

For our numerical experiments we will focus purely on denoising, namely the case $C=I$. We will use two datasets and two distributions of noise as proof of concept. For noise, we use additive Gaussian noise and ``salt-and-pepper" corruption noise where each pixel has an independent probability of being set to purely black or white. For datasets, the first is the MNIST digits dataset \cite{deng2012mnist} which contains 60000 28x28 grayscale pixel images of hand-drawn digits 0-9, and the second is the more expressive dataset of Fashion-MNIST \cite{xiao2017fashion}, which is once again 60000 28x28 grayscale pixel images but of various garments such as shoes, sneakers, bags, pants, and so on. 

In all experiments we \textit{always} include enough noise so that the nearest neighbour to $b$ in the dataset belongs to a different class than the ground truth, ensuring that recovery is non-trivial. We include this in all figures, captioned ``closest point''. Furthermore, we take the ground truth to be a new data-point hand drawn by the authors; in particular, it is {\it not present} in either of the original MNIST or MNIST fashion datasets.

We begin with a baseline examination of the error in $n$ for the MNIST digit dataset, for various choices of $b$. To generate $\mu_{n}^{(\omega)}$ practically, we sample $n$ datapoints uniformly at random without replacement. As a remark on this methodology, the target best possible approximation is the one which uses all possible data, i.e., $\mu_{D} := \mu_{60,000}^{(\omega)}$. Hence, for error plots we compare to $\overline{x}_{\mu_{D}}$, as we do not have access to the full image distribution to construct $\overline{x}_{\mu}$. 

Given a noisy image $b$, the experimental setup is as follows: for 20 values of $n$, spaced linearly between $10000$ and $60000$, we perform $15$ random samples of size $n$. For each random sample, we compute an approximation $\overline{x}_{n, \epsilon}$ and the relative approximation error $\frac{\Vert \overline{x}_{n, \epsilon} - \overline{x}_{\mu_{D}} \Vert}{\Vert \overline{x}_{\mu_{D}} \Vert}$, which is then averaged over the trials. Superimposed is the upper bound $Kn^{1/4}$ convergence rate, for moderate constant $K$ which changes between figures. We also visually exhibit several reconstructions, the nearest neighbour, and postprocessed images for comparison. This methodology is used to create figures \cref{fig:Gaus_noise_8,fig:SP_noise_8,fig:Gaus_noise_shirt,fig:SP_noise_shirt,fig:gaus_noise_heel,fig:SP_noise_heel}

A remark on postprocessing: As alluded to in the introduction (see \cref{remark:measure_valued}), an advantage of the dual approach is the ability to reconstruct the optimal measure $Q_{n}$ which solves the measure-valued primal problem as seen in \cite{rioux2020maximum}. In particular as $Q_{n} \ll \mu_{n}^{(\omega)}$, we have $\overline{x}_{n} = \E_{Q_{n}}$ is a particular weighted linear combination of the input data. This allows for two types of natural postprocessing: at the level of the linear combination, i.e., setting all weights below some given threshold to zero, or at the level of the pixels: i.e. setting all pixels above 1-$\gamma$ to 1 and below $\gamma$ to 0. Hence our figures also include the final measure $Q_{n}$ with all entries below $0.01$ set to zero, the corresponding linear combination of the remaining datapoints (bottom right image), and a further masking at the pixel level (top right image).

With this methodology developed, we present the results. For the first figures, the ground truth is a hand-drawn $8$, which is approximated by sampling the MNIST digit dataset. For \cref{fig:Gaus_noise_8} we use additive Gaussian noise of variance $\sigma = 0.10\Vert x\Vert $. For \cref{fig:SP_noise_8} we use instead use salt-and-pepper corruptions with an equal probability of 0.2 of any given pixel being set to 1 or 0.

\begin{figure}[h]
\centering
\begin{tabular}{cccc}
    \includegraphics[width = 0.17\textwidth]{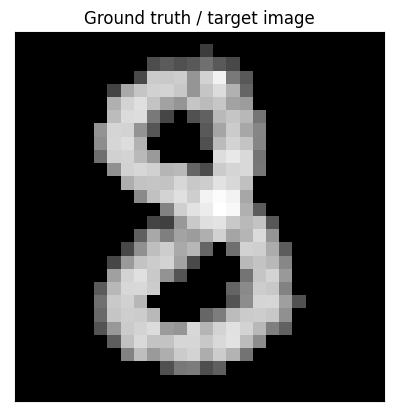} &
    \includegraphics[width = 0.17\textwidth]{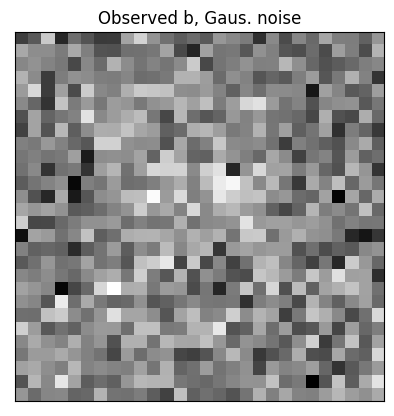} & 
    \includegraphics[width = 0.17\textwidth]{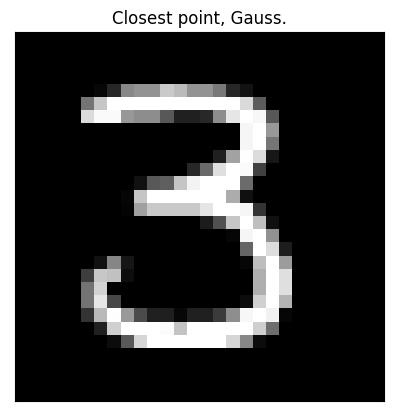} &
    \includegraphics[width = 0.17\textwidth]{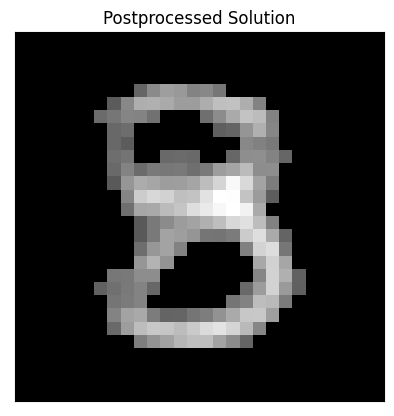} \\ 
    \includegraphics[width = 0.2\textwidth]{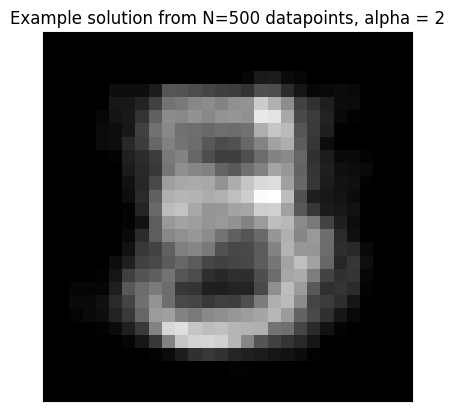} &
     \includegraphics[width = 0.2\textwidth]{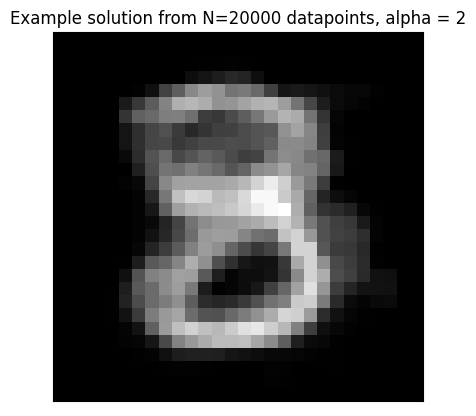} &
     \includegraphics[width = 0.2\textwidth]{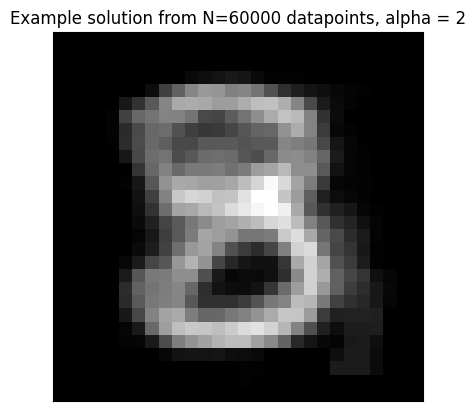} &
     \includegraphics[width = 0.17\textwidth]{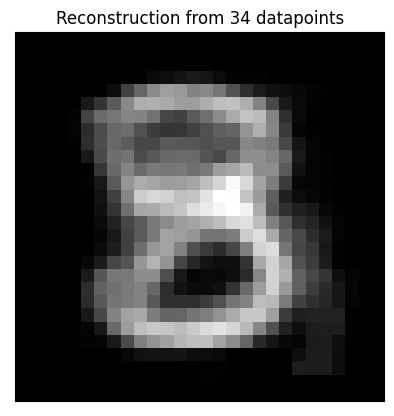}
\end{tabular}
    \caption{Recovery of an 8 with additive Gaussian noise}
    \label{fig:Gaus_noise_8}
\end{figure}

\begin{figure}[h]
\centering
\begin{tabular}{cc}
    \includegraphics[width = 0.4\textwidth]{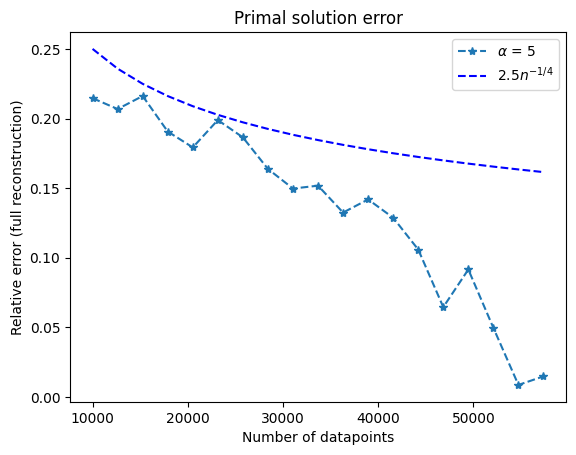} &
    \includegraphics[width = 0.4\textwidth]{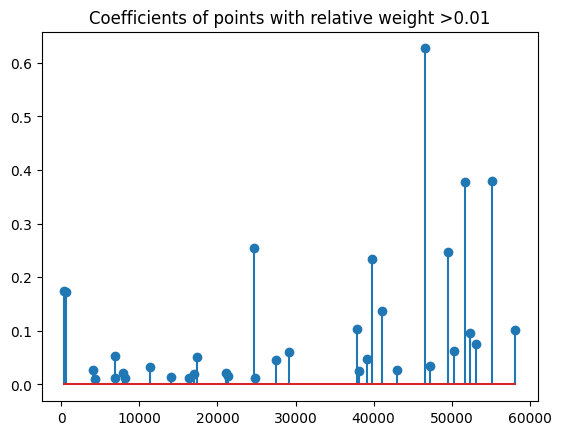} 
\end{tabular}
\caption{Rates and thresholding of optimal measure $Q_{n}$, for \cref{fig:Gaus_noise_8} }
\label{fig:Gaus_noise_8_rates}
\end{figure}

\begin{figure}[h]
\centering
\begin{tabular}{cccc}
    \includegraphics[width = 0.17\textwidth]{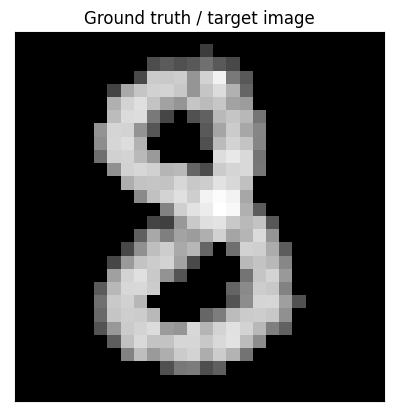} &
    \includegraphics[width = 0.17\textwidth]{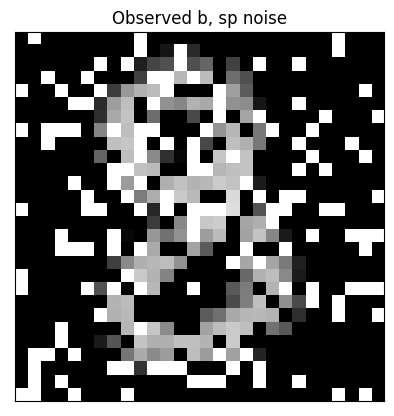} & 
    \includegraphics[width = 0.17\textwidth]{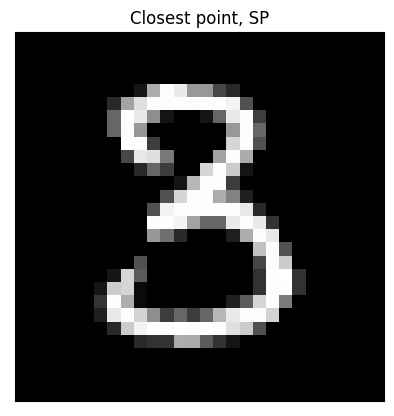} &
    \includegraphics[width = 0.17\textwidth]{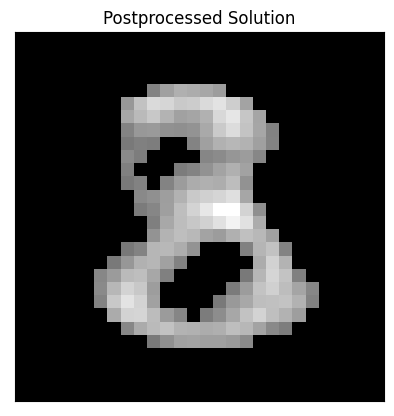}  \\ 
    \includegraphics[width = 0.20\textwidth]{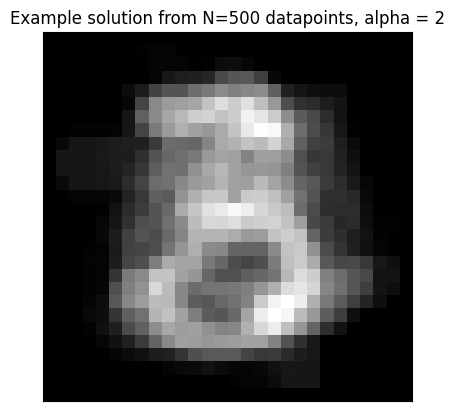} &
     \includegraphics[width = 0.20\textwidth]{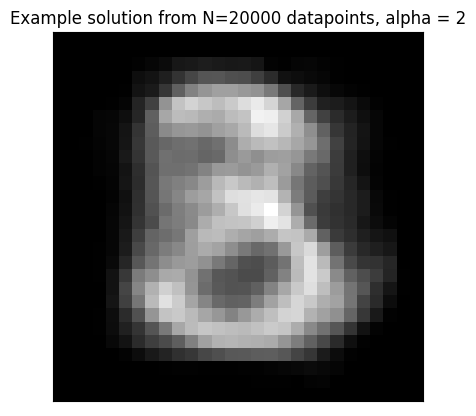} &
     \includegraphics[width = 0.20\textwidth]{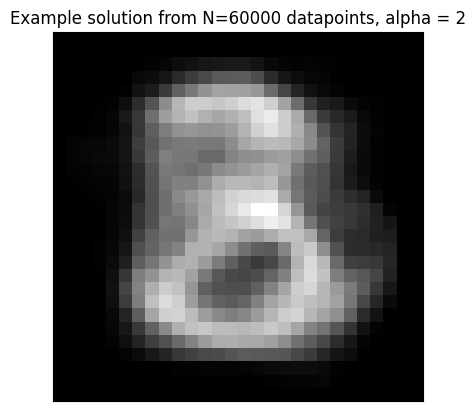} &
     \includegraphics[width = 0.17\textwidth]{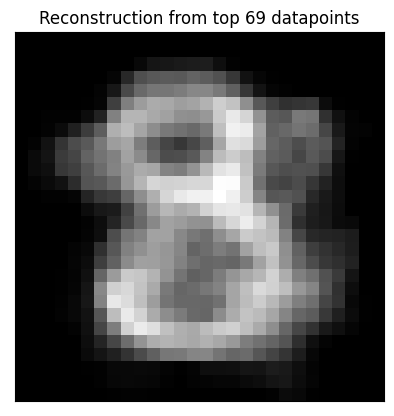} 
\end{tabular}
    \caption{Recovery of an 8 with salt-and-pepper corruption noise}
    \label{fig:SP_noise_8}
\end{figure}

\begin{figure}[h]
\centering
\begin{tabular}{cc}
    \includegraphics[width = 0.4\textwidth]{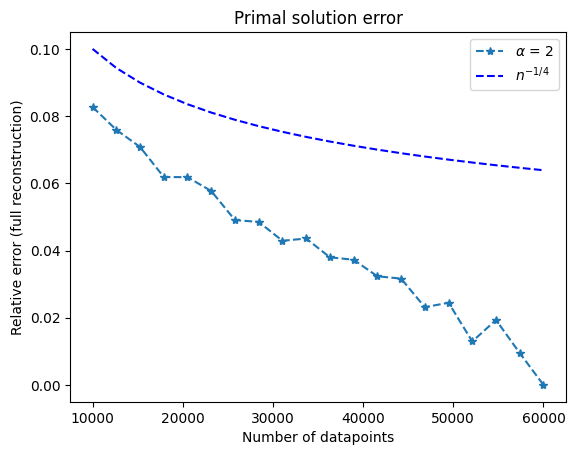} &
    \includegraphics[width = 0.4\textwidth]{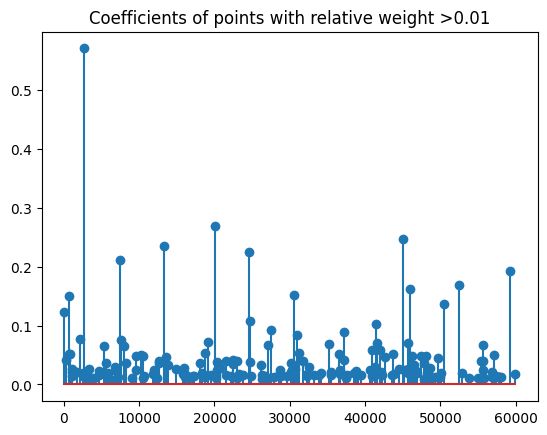} 
\end{tabular}
\caption{Rates and thresholding of optimal measure $Q_{n}$, for \cref{fig:SP_noise_8} }
\label{fig:sp_noise_8_rates}
\end{figure}

Examining \cref{fig:Gaus_noise_8,fig:SP_noise_8}, we can make several observations which will persist globally in all numerics. As seen in \cref{fig:Gaus_noise_8_rates,fig:sp_noise_8_rates}, the theoretical rate $n^{1/4}$ agrees with practical results, and appears to be tight for moderate $n$ on the order of $n < 30000$. While the leading constants given by theory are quite large, growing polynomially with $d, \rho$ e.g., in practice these are much more moderate - here $O(1)$. We also note the linear combination forming the final solution is quite compressible, here having only 157 datapoints having dual measure above $0.01$, and after thresholding being visually indistinguishable from the full solution. Furthermore the \textit{visual} convergence is fast, in the sense that there is not an appreciable visual difference between the solution given 20,000 and 60,000 datapoints. As they are  formed as linear combinations of observed data, all  solutions suffer from artefacting in the form of blurred edges, which can be solved by a final mask at the pixel level. 

Before shifting away from the MNIST dataset, we also include a cautionary experiment where, for small random samples, the method is visually ``confidently incorrect'', which can be seen in \Cref{fig:vis_switches}. In the previous examples \cref{fig:SP_noise_8,fig:Gaus_noise_8} for small $n$ we simply have blurry images. This type of failure is generally representative of how the method performs when $n$ is too small. However there is another failure case which distinctly highlights the risk of taking $n$ too small, which can lead to a biased sample which does not well approximate $\mu$ - leading to correspondingly biased solutions. In \cref{fig:vis_switches} we see the recovered image after masking is clearly a $3$ for one random samples of size $n=1000$, but a $5$ for other random samples of size $n=800,2000$. 
\begin{figure}[h]
\begin{tabular}{ccccc}
  \includegraphics[width=0.15\textwidth]{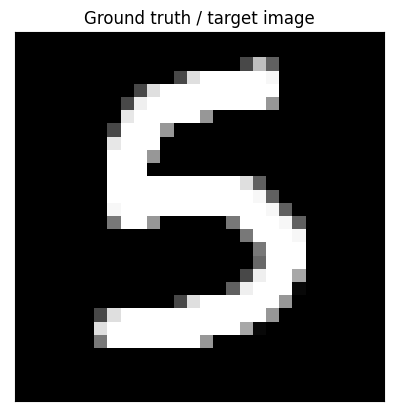} &   \includegraphics[width=0.15\textwidth]{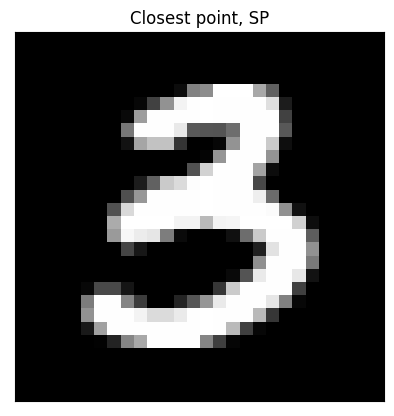} &
  \includegraphics[width=0.15\textwidth]{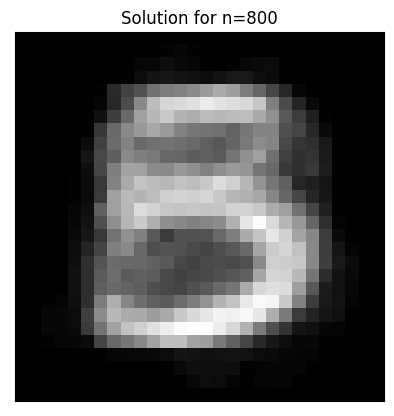} & \includegraphics[width=0.15\textwidth]{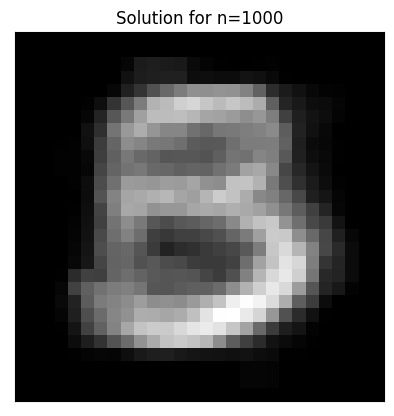} &
  \includegraphics[width=0.15\textwidth]{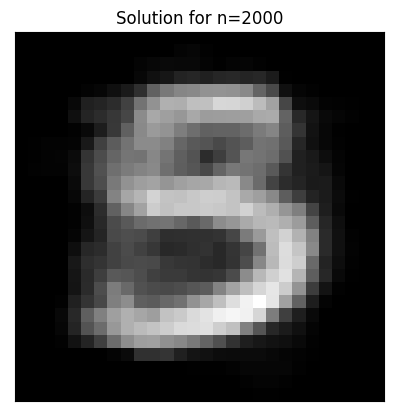} \\
  & \includegraphics[width=0.15\textwidth]{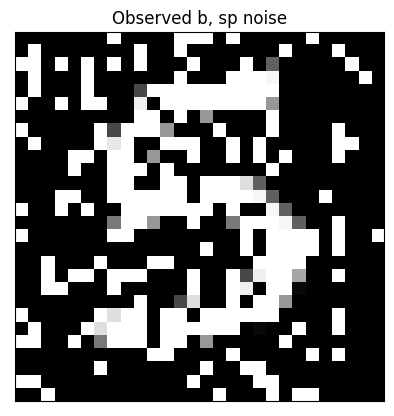}  & 
  \includegraphics[width=0.15\textwidth]{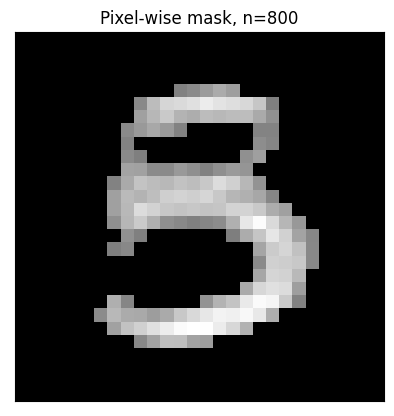} & \includegraphics[width=0.15\textwidth]{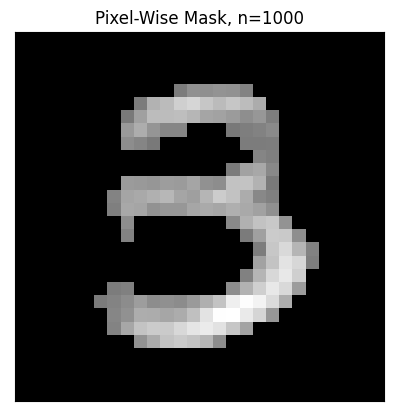} &
  \includegraphics[width=0.15\textwidth]{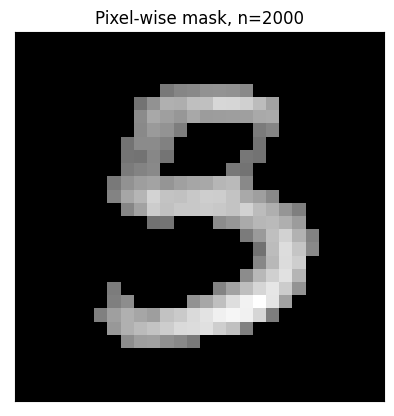}
\end{tabular}
\caption{A specific type of failure case for small $n$.}
\label{fig:vis_switches}
\end{figure}

We now move to the more expressive MNIST fashion dataset. We once again use a hand drawn target, which is not originally found in the dataset. To emphasize differences compared to handwritten digits, the fashion dataset is immediately more challenging, especially in regards to fine details. To illustrate, there are few examples of garments with spots, stripes, or other detailed patterns - and as such are much more difficult to learn from uniform random samples. Similarly, if the target ground truth image contains details or patterns which are not present in the fashion dataset, there is little hope of constructing a reasonable linear combination to approximate the ground truth. On the other hand, some classes - such as heels or sandals - are extremely easy to learn, as there are many near-identical examples in the dataset, and are visually quite distinct from many other classes. Disregarding the change in dataset, our methodology for remains the same as \cref{fig:Gaus_noise_8}

\begin{figure}[h]
\centering
\begin{tabular}{cccc}
    \includegraphics[width = 0.17\textwidth]{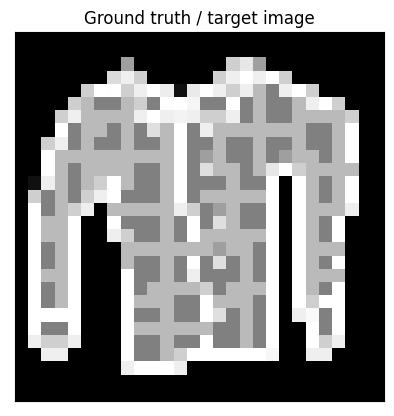} &
    \includegraphics[width = 0.17\textwidth]{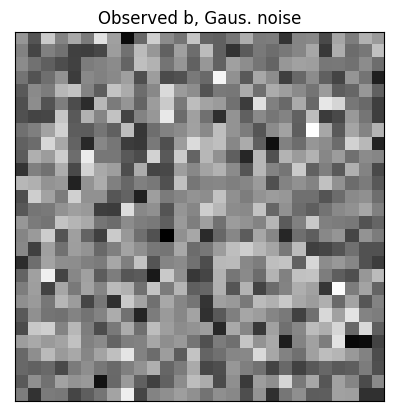} & 
    \includegraphics[width = 0.17\textwidth]{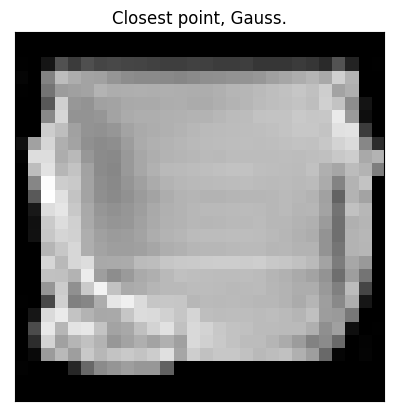} &
    \includegraphics[width = 0.17\textwidth]{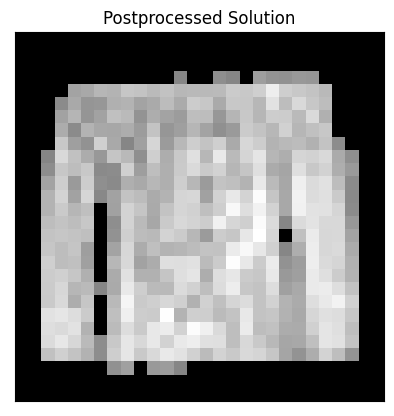}  \\ 
    \includegraphics[width = 0.2\textwidth]{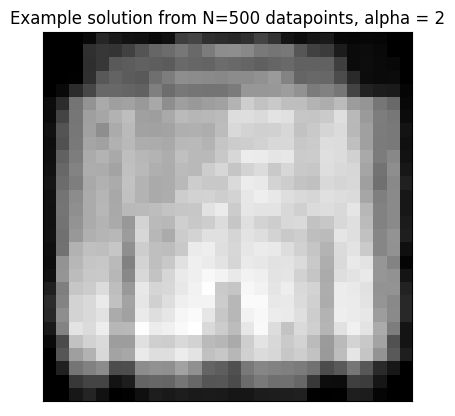} &
     \includegraphics[width = 0.2\textwidth]{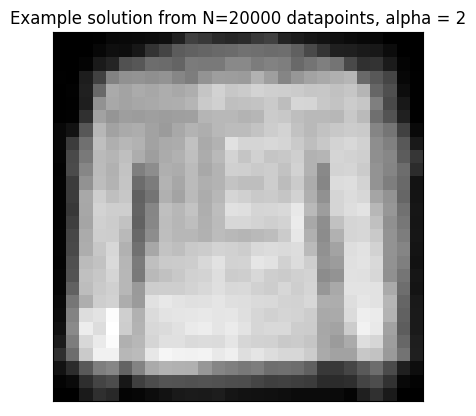} &
     \includegraphics[width = 0.2\textwidth]{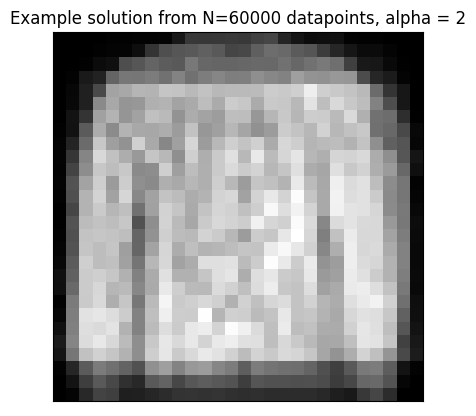} &
     \includegraphics[width = 0.17\textwidth]{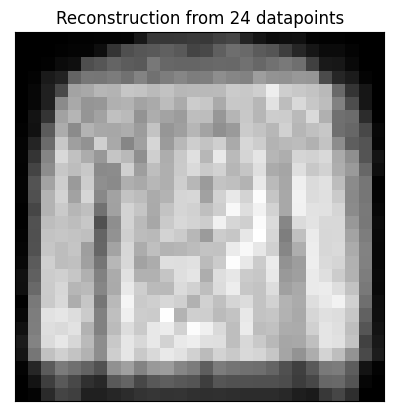} 
\end{tabular}
    \caption{Recovery of a hand drawn shirt with additive Gaussian noise}
    \label{fig:Gaus_noise_shirt}
\end{figure}

\begin{figure}[h]
\centering
\begin{tabular}{cc} 
    \includegraphics[width = 0.4\textwidth]{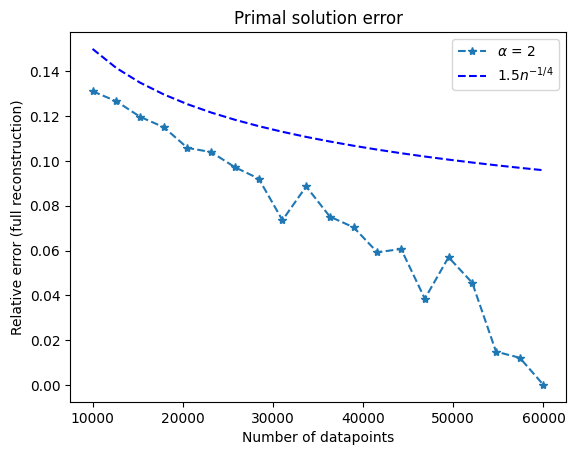} &
    \includegraphics[width = 0.4\textwidth]{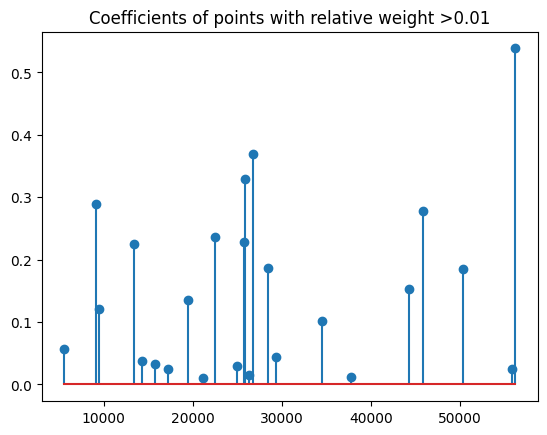} 
\end{tabular}
\caption{Rates and thresholding of optimal measure $Q_{n}$, for \cref{fig:Gaus_noise_shirt}}
\label{fig:Gaus_noise_shirt_rates}
\end{figure}

\begin{figure}[h]
\centering
\begin{tabular}{cccc}
    \includegraphics[width = 0.17\textwidth]{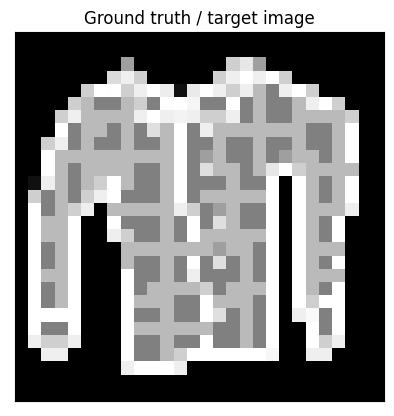} &
    \includegraphics[width = 0.17\textwidth]{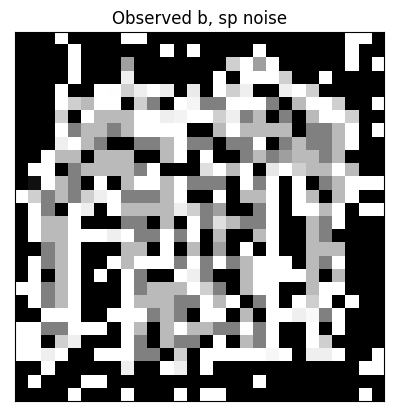} & 
    \includegraphics[width = 0.17\textwidth]{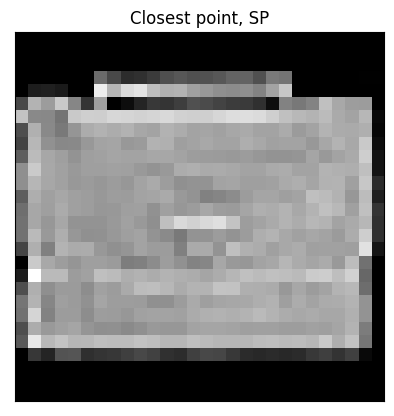} &
    \includegraphics[width = 0.17\textwidth]{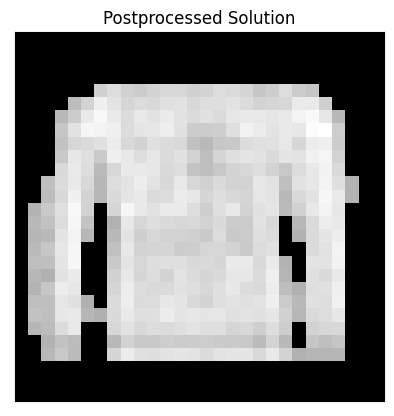}\\ 
    \includegraphics[width = 0.2\textwidth]{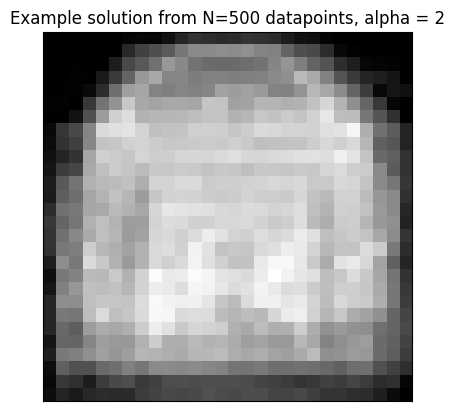} &
     \includegraphics[width = 0.2\textwidth]{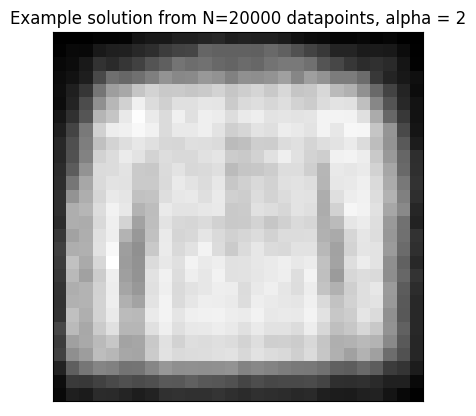} &
     \includegraphics[width = 0.2\textwidth]{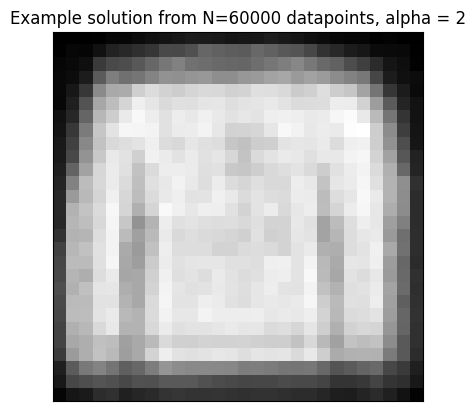} &
     \includegraphics[width = 0.17\textwidth]{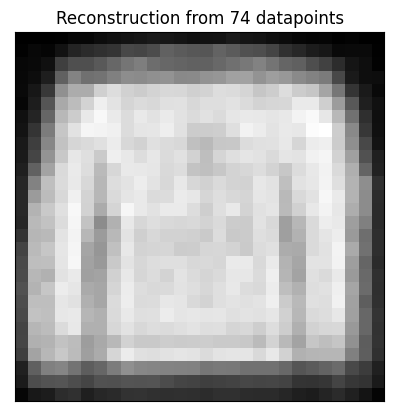} 
\end{tabular}
    \caption{Recovery of a hand drawn shirt with salt and pepper corruption noise.}
    \label{fig:SP_noise_shirt}
\end{figure}

\begin{figure}[h]
\centering
\begin{tabular}{cc} 
    \includegraphics[width = 0.4\textwidth]{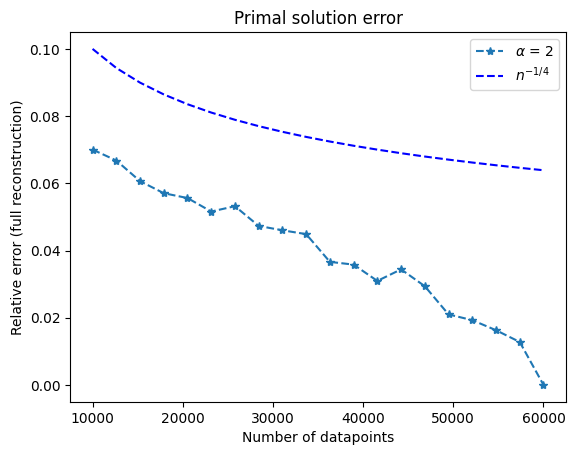} 
    \includegraphics[width = 0.4\textwidth]{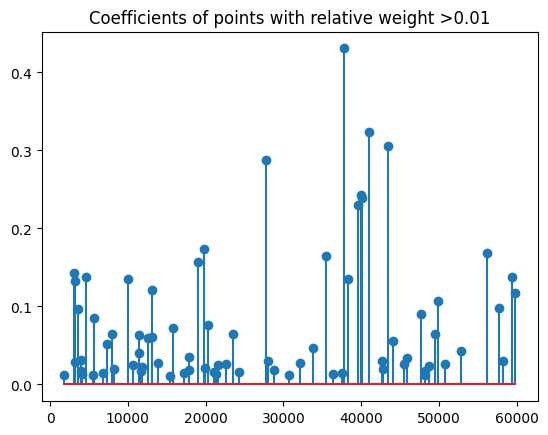} 
\end{tabular}
\caption{Rates and thresholding of optimal measure $Q_{n}$, for \cref{fig:SP_noise_shirt}}
\label{fig:SP_noise_shirt_rates}
\end{figure}

For the experiment with salt-and-pepper noise \cref{fig:SP_noise_shirt}, while MEM denoising clearly recovers a shirt, many of the finer details are lost or washed out. In contrast, with Gaussian noise \cref{fig:Gaus_noise_shirt}, there is some remnant of the shirts' pattern visible in the final reconstruction. In both cases the nearest neighbor is a bag or purse, and not visually close to the ground truth. While the constant leading constant is larger, once again we are firmly below the theoretically expected convergence rate of $n^{1/4}$, and solutions are compressible with respect to the optimal measure $Q_{n}$.

Finally, we conclude with an experiment on MNIST fashion with a hand drawn target of a heel, where we observe once again recovery well within the expected convergence rate, and visually recovers (after postprocessing) a reasonable approximation to the ground truth.

\begin{figure}[h]
\centering
\begin{tabular}{cccc}
    \includegraphics[width = 0.17\textwidth]{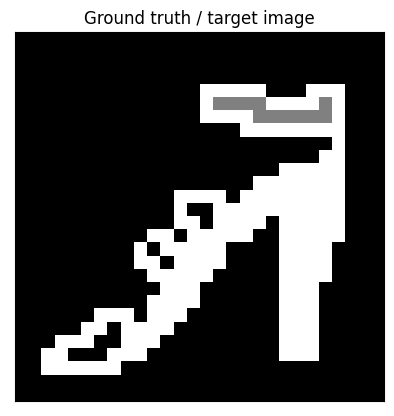} &
    \includegraphics[width = 0.17\textwidth]{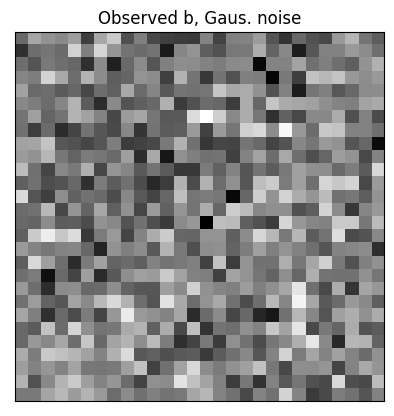} & 
    \includegraphics[width = 0.17\textwidth]{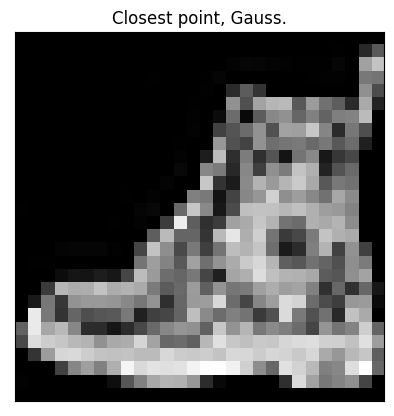} &
    \includegraphics[width = 0.17\textwidth]{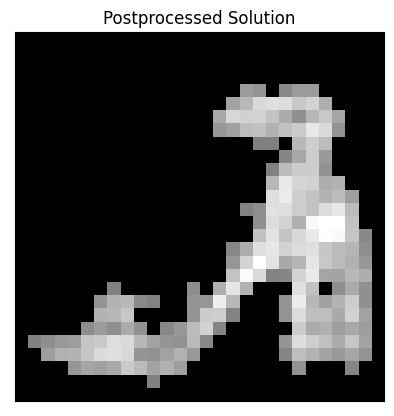} \\ 
    \includegraphics[width = 0.2\textwidth]{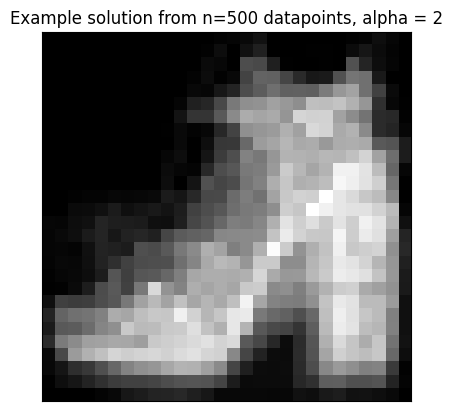} &
     \includegraphics[width = 0.2\textwidth]{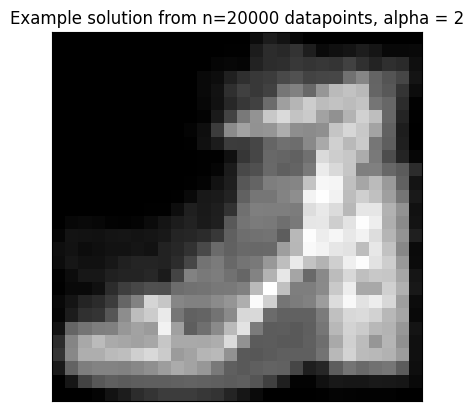} &
     \includegraphics[width = 0.2\textwidth]{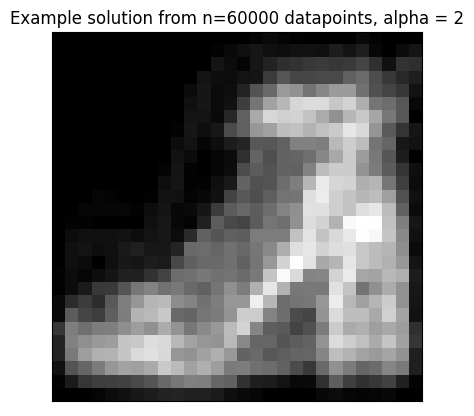} &
     \includegraphics[width = 0.17\textwidth]{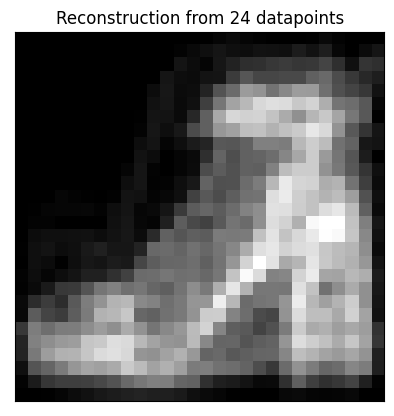} 
\end{tabular}
    \caption{Recovery hand drawn heel with additive Gaussian noise.}
    \label{fig:gaus_noise_heel}
\end{figure}

\begin{figure}[h]
\centering
\begin{tabular}{cc} 
    \includegraphics[width = 0.4\textwidth]{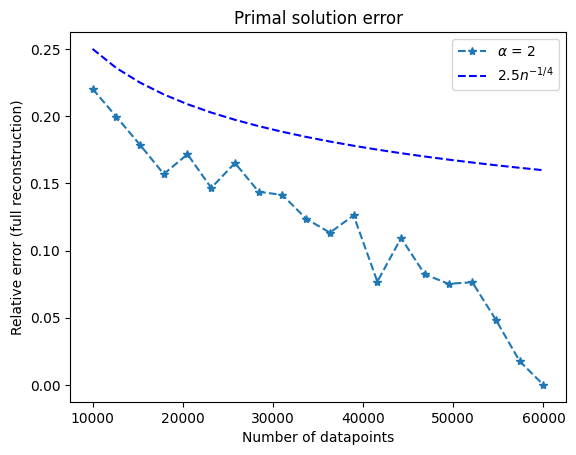}&
    \includegraphics[width = 0.4\textwidth]{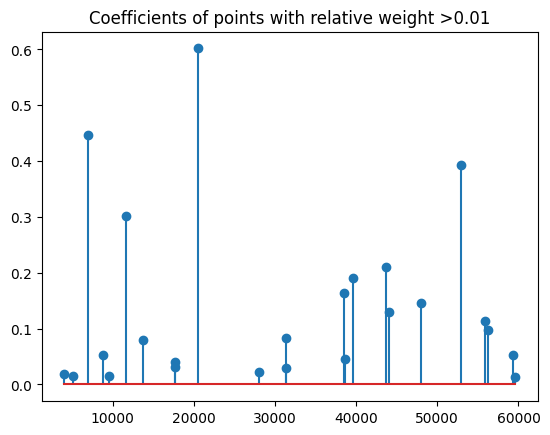} 
\end{tabular}
\caption{Rates and thresholding of optimal measure $Q_{n}$, for \cref{fig:gaus_noise_heel}}
\label{fig:gaus_noise_heel_rates}
\end{figure}

\begin{figure}[h]
\centering
\begin{tabular}{cccc}
    \includegraphics[width = 0.17\textwidth]{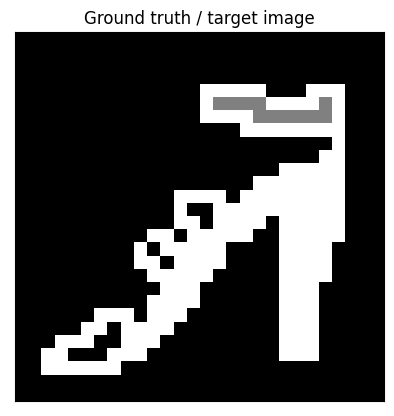} &
    \includegraphics[width = 0.17\textwidth]{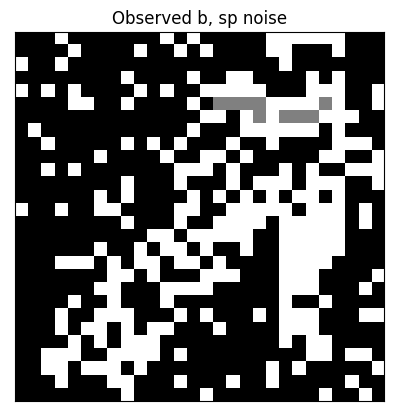} & 
    \includegraphics[width = 0.17\textwidth]{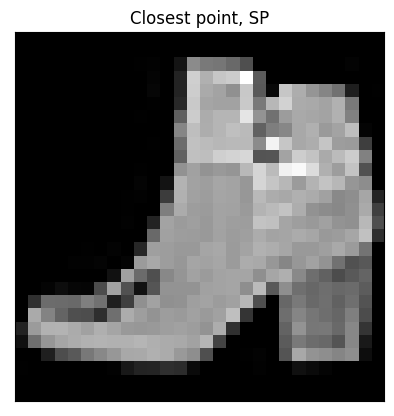} &
    \includegraphics[width = 0.17\textwidth]{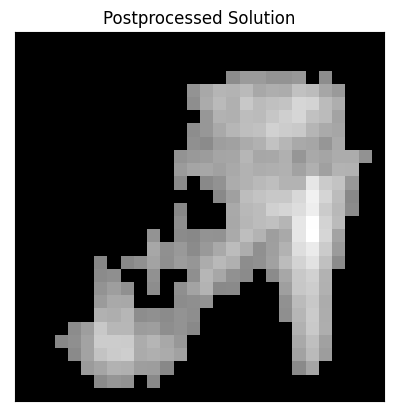}  \\ 
    \includegraphics[width = 0.2\textwidth]{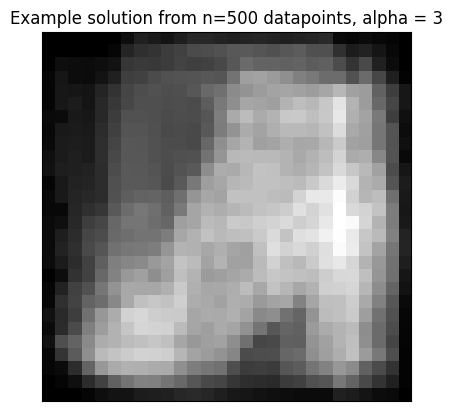} &
     \includegraphics[width = 0.2\textwidth]{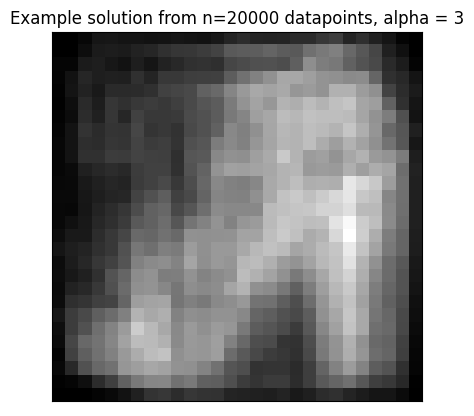} &
     \includegraphics[width = 0.2\textwidth]{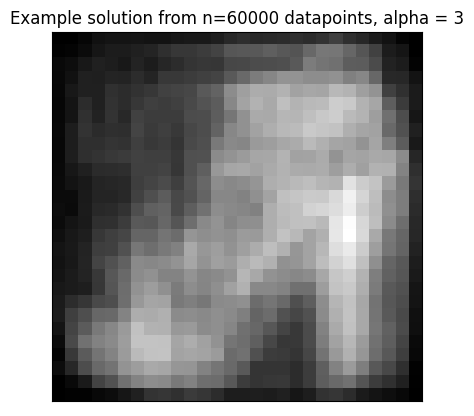} &
     \includegraphics[width = 0.17\textwidth]{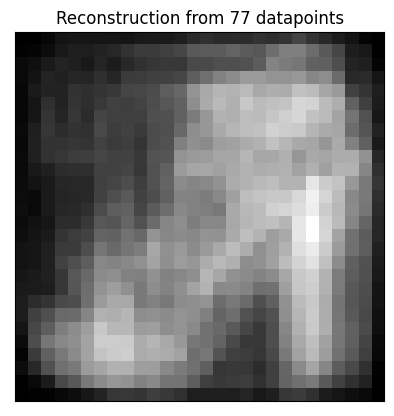} 
\end{tabular}
    \caption{Recovery of hand drawn heel with Salt and Pepper corruptions.}
    \label{fig:SP_noise_heel}
\end{figure}

\begin{figure}[h]
\centering
\begin{tabular}{cc} 
    \includegraphics[width = 0.4\textwidth]{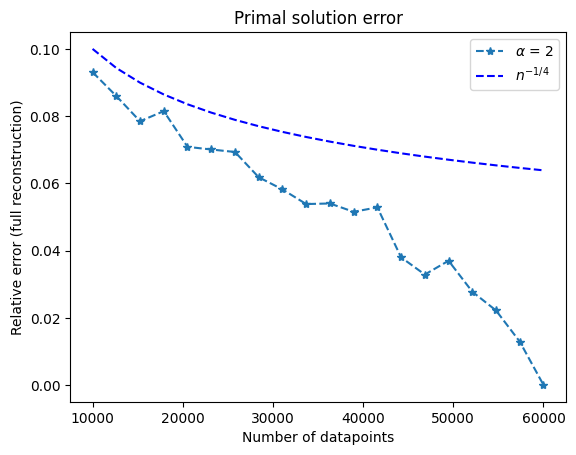}&
    \includegraphics[width = 0.4\textwidth]{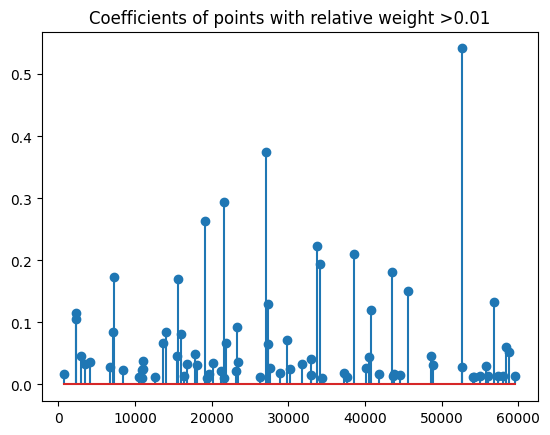} 
\end{tabular}
\caption{Rates and thresholding of optimal measure $Q_{n}$, for \cref{fig:SP_noise_heel}}
\label{fig:SP_noise_heel_rates}
\end{figure}

\clearpage

\bibliographystyle{siam}
\bibliography{references}

\end{document}